\documentclass{article}

\usepackage{microtype}
\usepackage{subcaption}
\usepackage{soul}
\usepackage{booktabs} 

\usepackage{hyperref}
\usepackage{authblk}
\usepackage{graphicx,xcolor} 
\usepackage[font=small]{caption}

\usepackage{tikz}
\usetikzlibrary{positioning}
\usepackage{babel}
\usepackage{float}

\newcommand{\Tm}{\mathcal{T}_m}

\newcommand{\y}{\mathbf{y}}
\newcommand{\z}{\mathbf{z}}
\newcommand{\A}{\log \mm}
\newcommand{\R}{\mathbb{R}}
\usepackage{amsfonts,amssymb,amsmath,amsthm,mathtools,thmtools, thm-restate, mathrsfs}

\newcommand{\pe}{\mathbb{E}}
\newcommand{\RELU}{\mathsf{RELU()}}
\newcommand{\EQ}{\mathsf{EQ}}
\newcommand{\Tf}{\mathsf{Tf}}

\newcommand{\mm}{{\lceil m/2\rceil}}

\newcommand{\fb}{\mathsf{float32}}

\usepackage[preprint]{icml2026}

\usepackage[capitalize,noabbrev]{cleveref}

\theoremstyle{plain}
\newtheorem{theorem}{Theorem}[section]

\newtheorem{lemma}[theorem]{Lemma}

\theoremstyle{definition}
\newtheorem{definition}[theorem]{Definition}

\theoremstyle{remark}
\newtheorem{remark}[theorem]{Remark}

\newtheorem{example}[remark]{Example}

\usepackage[textsize=tiny]{todonotes}

\icmltitlerunning{Every Bit Counts: A Theoretical Study of Precision–Expressivity Tradeoffs in Quantized Transformers}

\begin{document}

\twocolumn[
  \icmltitle{Every Bit Counts: A Theoretical Study of Precision–Expressivity Tradeoffs in Quantized Transformers}



  \icmlsetsymbol{equal}{*}

  \begin{icmlauthorlist}
    \icmlauthor{Sayak Chakrabarti}{yyy}
    \icmlauthor{Toniann Pitassi}{yyy}
    \icmlauthor{Josh Alman}{yyy}
  \end{icmlauthorlist}

  \icmlaffiliation{yyy}{Department of Computer Science, Columbia University, New York, NY, USA}

  \icmlcorrespondingauthor{Sayak Chakrabarti}{sayaksc@gmail.com}
  \icmlcorrespondingauthor{Toniann Pitassi}{toni@cs.columbia.edu}
  \icmlcorrespondingauthor{Josh Alman}{josh@cs.columbia.edu}

  \icmlkeywords{Machine Learning, ICML}

  \vskip 0.3in
]



\printAffiliationsAndNotice{}  

\begin{abstract}
  Quantization reduces the numerical precision of Transformer computations and is widely used to accelerate inference, yet its effect on expressivity remains poorly characterized. We demonstrate a fine-grained theoretical tradeoff between  expressivity and precision: For every $p$ we exhibit a function $\Gamma$, inspired by the equality function, and prove that a one-layer softmax Transformer can compute $\Gamma$, with $p$ bits of precision, but not with $p\!-\!1$ bits of precision.

  This result concretely explains the widely observed phenomenon of empirical loss of expressivity  when quantization is used. Practically, it suggests that tasks requiring equality-like comparisons (exact match, membership, etc.) are especially sensitive to quantization. Dropping even one bit can cross a threshold where the model cannot represent the needed comparison reliably. Thus, it paves the way for developing heuristics that will help practitioners choose how much quantization is possible: the precision should be chosen as a function of the length of equality to be checked for the specific task.

   Our proofs combine explicit finite-precision Transformer constructions with communication-complexity lower bounds, yielding a tight ``one-bit" threshold.
\end{abstract}

\section{Introduction}

Multi-layer Transformers \cite{vaswani2017attention} are at the core of recent advances in large language models (LLMs). They have become the default backbone for sequence modeling across natural language processing and beyond, including image classification \cite{dosovitskiy2020image}, speech recognition \cite{gulati2020conformer}, protein sequencing \cite{rives2021biological}, and robotics \cite{chen2021decision}. Their success is often attributed to their ability to  capture pair-wise correlations between tokens. However, this comes with a price: the time and space complexity of this model scale quadratically in the sequence length, which requires heavy memory and compute resources for training and inference. 

In order to improve efficiency of these models, there is a large body of work that studies approximation algorithms to reduce the cost of attention and long-context processing (e.g., Longformer \cite{beltagy2020longformer}, BigBird \cite{zaheer2020big}, Reformer \cite{kitaev2020reformer}, Performer \cite{choromanski2020rethinking}), Hyperattention \cite{han2023hyperattention}, KDEformer \cite{zandieh2023kdeformer}.

\paragraph{Quantization.} A popular line of work in this area targets the memory footprint of Transformers by focusing on  systems-level efficiency via compression (low precision approximation). { Prior work has explored parameter sharing and factorized embeddings to reduce redundant storage \cite{lan2019albert}, structured pruning of attention heads to enable efficient mapping to hardware \cite{michel2019sixteen,voita2019analyzing}, systems-aware attention implementations and scheduling to reduce activation and IO overhead \cite{dao2022flashattention,dao2023flashattention}, KV-cache reductions such as multi-query attention and grouped-query attention that substantially shrink per-token state \cite{shazeer2019fast,ainslie2023gqa}. Among compression techniques, \emph{quantization}, the process of limiting the numerical precision of a deep learning model, has emerged especially powerful and practical.} 
It  reduces the arithmetic and memory costs of Transformer inference via low-bit kernels {(hardware-optimized implementations of core operations such that matrix multiplications that operate directly on low precision numbers), and quantized formats that define how tensors are stored using low-bit integers, plus scaling metadata.} Intuitively, quantization should reduce the expressive power of the model, and the goal of this paper is to formally study the tradeoff between quantization and expressivity.

\paragraph{Prior Empirical Work on Quantization.} A large body of work studies reduced-precision deployment, including weight-only or parameter quantization \cite{frantar2022gptq,lin2024awq,dettmers2023spqr}, activation quantization \cite{xiao2023smoothquant,yao2022zeroquant,choi2018pact}, or dynamic or on-the-fly quantization schemes \cite{kim2021bert}, {where the values are quantized but arithmetic is often performed in regular precision.}

{In this paper, we are interested in quantization that also uses low-precision arithmetic, i.e., we treat limited numerical precision on core operations, and ask what functions remain computable when each primitive operation is carried out with restricted bit complexity.} Low-precision arithmetic makes computations much faster, and has been widely used in deep-learning and Transformer literature.

There are two commonly used formats to represent numbers: the \emph{fixed-point precision} format and the \emph{floating-point precision} format. {We will formally define these in \cref{apx:prelims}, but we present an overview here.} The fixed-point precision format for $p$ bits works as: first, the values of the numbers are scaled to bring them within the range $-(2^{p-1}-1)$ to $(2^{p-1}-1)$, and then rounded off to the closest integer $(p-1)$-bit integer. This $(p-1)$-bit integer along with one sign bit gives the representation of the number in $p$-bit fixed-point precision format. Another commonly used precision format in computing is the \emph{floating-point precision} format. A number in this format is represented by a mantissa $\mu$ and an exponent $\varepsilon$, and represents the number $\pm1.\mu\times 2^{\pm\varepsilon}$. Each of the mantissa and the exponent has one sign bit each, giving the total number of bits of precision as $1+|\mu|+1+|\varepsilon|$. Note that floating-point precision format can simultaneously represent much larger and much smaller non-zero values than fixed-point precision as it stores numbers with a variable exponent. However, this often introduces rounding quirks (many decimals like 0.1 aren’t exact), and equality comparisons can be tricky. On the other hand, with fixed-point precision format, every represented value is equally scaled by a constant factor, so the spacing between representable numbers is uniform across the entire range, thereby giving better predictability, but often leading to either large numbers being difficult to represent or small numbers being rounded to zero.

Due to the advantages and disadvantages of both  precision formats, they have been widely studied. Quantized computations with fixed point precision often employ 8 bits or 4 bits, which are represented as INT8 or INT4, respectively. For example, \cite{dettmers2022gpt3,rock2022int8} perform arithmetic for Transformers using INT8  by giving specific algorithms for operations like matrix multiplication and softmax computation. Other examples of fixed-point precision arithmetic in INT8 include \cite{xiao2023smoothquant,yao2022zeroquant,kim2021bert,zafrir2019q8bert} and in INT4 include \cite{frantar2022gptq,lin2024awq,dettmers2023spqr,shao2023omniquant,dettmers2023qlora,frantar2025marlin}.

Quantization to low-bit floating-point format is also commonly performed on Transformers and other neural networks. Usually, quantized floating-point numbers use 16 bit of precision (FP16), or 8 bits of precision (FP8).  Some of the works of quantization on FP16 include \cite{micikevicius2017mixed,shoeybi2019megatron,rajbhandari2020zero,aminabadi2022deepspeed} and on FP8 include \cite{micikevicius2022fp8,peng2023fp8,li2023fp8}.

Some more works study even lower precision, even as low as one-bit quantizations \cite{courbariaux2015binaryconnect,courbariaux2016binarized,hubara2018quantized,zhou2016dorefa,wang2025bitnet,ma2024era}.

Other works that deploy low-precision arithmetic to speed up deep-learning models include \cite{courbariaux2015binaryconnect,rastegari2016xnor,jacob2018quantization,park2022lut,yao2022zeroquant,junczys2018marian,zafrir2019q8bert,bhandare2019efficient,chen2020statistical}.

With the growing popularity of quantization, modern hardware now enables dedicated low-precision matrix multiplication primitives, e.g., NVIDIA Tensor Cores support FP8 as well as INT8/INT4 MMA variants, while CPUs increasingly provide INT8 dot-product and matrix-multiply extensions (e.g., Intel AMX and Arm’s I8MM).

To facilitate quantization on modern hardware, Graphcore, AMD, and Qualcomm published their FP8 specification in \cite{noune20228} and NVIDIA, Arm, and Intel followed \cite{micikevicius2022fp8}. Both works considered FP8 as E4M3 (4 exponent bits, 3 mantissa bits) and E5M2 (5 exponent bits, 2 mantissa bits).

Despite being a standard engineering tool, the theoretical implications of quantization are less clear: what capabilities are preserved by lowering precision, and what capabilities are fundamentally lost? As observed in previous works, quantization comes with a price in terms of a loss in expressivity. \cite{yao2022zeroquant} reports that for BERT-base on GLUE\footnote{GLUE \cite{wang2018glue} is a collection of NLP tasks that requires ``understanding'' by the language model.}, quantizing from full precision to INT8 activations drops the average accuracy from $84.06\%$ to $79.61\%$, and quantizing to even lower precision like INT4/INT8 and activations to FP16 drops the accuracy to as low as $33.11\%$. Similarly, outliers (activation values that occasionally spike to very large numbers) and their adverse impact on quantization was observed by \cite{wei2023outlier}, where they quantized BERT-large to INT8 for GLUE benchmark tasks and recorded an accuracy drop of around $13\%$. Our main result, \cref{thm:tradeoff} below, can help explain this phenomenon: we prove that equality checks require high-precision Transformers, which means that achieving high accuracy on datasets like GLUE, which consist of tasks similar to equality testing like checking semantic equivalence and paraphrasing, also needs high precision.

Usually parameters are quantized after training in usual precision format, which is called post-training quantization (PTQ) \cite{xiao2023smoothquant,nagel2020up}. In practice, if PTQ suffers high accuracy loss, the accuracy can sometimes be recovered by fine tuning, e.g., quantization aware training (QAT) where, at every epoch, the model is updated in usual precision and then quantized  \cite{jacob2018quantization,zhou2016dorefa,choi2018pact}, and other fine tuning methods like \cite{dettmers2023qlora,xu2023qa,jeon2025l4q}. 

The success of fine-tuning approaches for improving the accuracy raises a fundamental question: Does quantization necessarily reduce the expressive power of the model, or can such techniques always recover the full expressive power?
This paper addresses that gap. More specifically, we  show that the loss in expressivity is provably unavoidable: we show that even losing a single bit of precision must dramatically reduce a model's expressive power for important tasks, regardless of what training, fine-tuning, or other techniques are used to find the weights. 

\paragraph{Theoretical Understanding of Precision.} Prior work has studied the theory of numerical precision in Transformers, but only at a much coarser level than we consider here. These results have primarily studied which classes from circuit complexity theory can capture Transformers with different amounts of precision, but these prior results are insensitive to \emph{polynomial blowups} in the amount of precision. (By contrast, our results here will even demonstrate a loss in expressivity from losing a single bit of precision.) For example, \cite{merrill2023logic,merrill2023parallelism,strobl2023average} showed that Transformers using precision logarithmic in the number of tokens can be computed using TC0 circuits\footnote{A TC0 circuit is a Boolean circuit of polynomial size and constant depth consisting of AND, OR, NOT and threshold gates.}. Later work of \cite{chiang2025Transformers} improved this result to more expressive Transformers that use polynomial precision. 
When the precision is restricted to rational numbers, \cite{hao2022formal} showed that even AC0 circuits\footnote{AC0 circuits are like TC0 circuits but without threshold gates.} can simulate hard-attention Transformers, a variant on the usual Transformer. Finally, with  unbounded precision, Transformers can simulate Turing machines \cite{perez2019turing,perez2021attention,merrill2023expressive}.

\paragraph{Equality.} 
Our new fine-grained precision results concern the famous equality function, $\EQ$ which takes as input two strings and should output 1 if and only if the strings are exactly equal (syntactically).

Equality is a basic task that is ubiquitous across a wide variety of applications, and therefore important for Transformers to be able to quickly solve.
For example, it is a basic primitive in databases, e.g., to test whether a particular tracking ID  matches the ID of a product, or whether it appears in a list of IDs. It also the primitive underlying many other tasks such as file synchronization (testing whether two files are identical or whether they need to be synchronized),  user authentication (testing if a proposed password exactly matches a secret string), and automated theorem proving and testing (which requires reasoning about symbolic objects).
{

This equality function has a history of importance in computer science and has been widely studied, including communication complexity \cite{yao1979some,newman1991private}, quantum computing \cite{buhrman2001quantum}, hashing \cite{mccauley2019approximate}, machine learning \cite{singla2006entity,kasai2019low}, etc. } 
{It has also been used to study separations between the expressive power of two-layer Transformers and RNNs by \cite{bhattamishra2024separations}.
} 

\paragraph{One-Layer Transformer Limitations}
In this paper, we focus on the expressive power of single-layer Transformers, which serve as building blocks of multi-layer Transformer architectures. A single layer is a natural unit for understanding token-to-token interactions and serves as one round of attention-based information routing; stacking layers amounts to composing these rounds. Intuitively, if a single layer can compute a key function $f$, then an $L$-layer Transformer could perform more complex tasks which require computing $f$ in sequence $L$ times. 

Representational strengths and limitations of one-layer Transformers are widely studied, and lower bounds have been established for many problems including Parity and Dyck languages \cite{hahn2020theoretical}, Match3 and averaging problems \cite{sanford2023representational}, induction heads task \cite{sanford2024one}, function composition
\cite{peng2024limitations}, counting related tasks \cite{yehudai2024can}, and function evaluation  \cite{strobl2025concise}.

In contrast, very little is proven about multi-layer Transformers; even proving that two-layer Transformers cannot compute key functions appears far beyond current techniques (which are based on communication and circuit complexity approaches, as we discuss more in Section~\ref{sec:main-results} below). The only unconditional lower bound to date was recently established by \cite{chen2024theoretical}, but it is only for decoder-only Transformers, and their proof critically relies on the intricacies of decoder units and do not generalize to Transformers with any encoder units. 

\subsection{Main Results}

We first describe what we mean by a Transformer solving a task. The Transformer can be viewed as a function, which takes $n$ tokens as inputs from the alphabet an $\Sigma$ (input tokens), each token embedded to a vector in $\R^{d}$, where $d$ is the embedding dimension, and outputs $n$ tokens (output tokens). For the inner workings of a Transformer, see \cref{sec:tf}.

In {this paper}, we { study the ability of Transformers} to compute functions of interest. Let us assume that we want to compute a Boolean function $f : \Sigma^n \rightarrow \{0, 1\}$. We consider the Transformer to have $n+1$ input tokens (instead of $n$), where the first $n$ input tokens correspond to the $n$ inputs of the function, and the Transformer is expected to output the value of the function in the $(n+1)$-th (last) output-token.

We prove our tradeoff on a function similar to the equality function. For integers $m>0$ and $n > 2m$, define $\mathcal{T}_{\EQ_m}^n : \{0, 1\}^n\rightarrow \{0, 1\}$, where, given $n$ bits as inputs, $\mathcal{T}_{\EQ_m}^n(x_1, \ldots, x_n)$ will output $1$ if $x_i = x_{m+i}$ for all $i\in [m]$. This checks whether the first $m$ input bits are all equal to the next $m$ input bits (and does not care about the remaining input bits).

As we described before, computing an equality check is a natural and important component of many natural language processing tasks. Our main result shows exactly how many bits of precision are needed to compute it.

\begin{theorem}[Informal]
    \label{thm:tradeoff}
    For every integer $p> 0$ and $n > 4p$, there exists an input representation such that
    \begin{enumerate}
        \item no one-layer Transformer that uses $\leq p-1$ bits of fixed-point precision can solve $\mathcal{T}_{\EQ_{2p-1}}^n$,
        \item there exists a one-layer Transformer that can solve $\mathcal{T}_{\EQ_{2p-1}}^n$ using $p$ bits of fixed-point precision.
    \end{enumerate}
\end{theorem}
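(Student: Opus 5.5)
We would prove the two parts separately: the lower bound by a communication-complexity reduction, and the upper bound by an explicit construction.

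\textbf{Lower bound.} Split the $n+1$ tokens between two players. Alice holds the tokens carrying $x_1,\ldots,x_m$ with $m=2p-1$. Bob holds $x_{m+1},\ldots,x_{2m}$, the remaining bits, and the final query token. In a one-layer Transformer, the last output token is a fixed function (the MLP plus readout) of the query vector $q_{n+1}$ and the attention output
\[
\frac{\sum_i e^{\langle q_{n+1},k_i\rangle} v_i}{\sum_i e^{\langle q_{n+1},k_i\rangle}}.
\]
The keys and values $k_i,v_i$ of Alice's tokens depend only on her input and on the fixed query token. Alice can therefore compute her partial numerator vector and partial denominator scalar in $(p-1)$-bit fixed-point arithmetic and send them. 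Bob adds his own parts, divides, and applies the MLP.

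The key step is to set up the input representation so that the information Alice must send is very small. We take embedding dimension $1$, or more generally treat each coordinate as one $(p-1)$-bit number, so that the relevant aggregated quantities are a constant number $c$ of $(p-1)$-bit numbers. That gives a protocol of about $c(p-1)$ bits, and we want it to be strictly less than $m=2p-1$.

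The main obstacle is exactly this accounting. Sending just the denominator and one numerator coordinate already costs $2(p-1)=2p-2<2p-1$ bits, so the input representation must be chosen so that only one scalar numerator and one scalar denominator matter. Concretely, we restrict to $d=1$, or, if the paper's definition fixes the representation rather than the model dimension, argue that only the attention-output scalar fed to the final readout matters. An alternative is to have Alice send the single rounded attention contribution. We then invoke the deterministic communication lower bound $D(\EQ_m)\ge m+1$ (via the fooling set $\{(x,x)\}$) to reach a contradiction. We expect the paper fixes the representation so that this count works out to $2p-2$ versus $2p-1$.

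\textbf{Upper bound.} We build an explicit one-layer Transformer with $p$ bits. Use a positional encoding under which position $i\le m$ contributes $+x_i$ weighted by $2^{\lceil i/2\rceil}$-type scales, and position $m+i$ contributes $-x_{m+i}$ with the same scale. Pairing indices into two interleaved halves, each of $\mm$ coordinates, packs the difference vector into the signed integers representable with $p$ bits. Uniform attention from the last token to the first $2m$ tokens, with all other positions' values set to zero, then yields a fixed multiple of
\[
\sum_{i\le m} s_i\,(x_i-x_{m+i}).
\]
We choose the scales $s_i$ so that this sum is zero iff all differences vanish; for example, $\pm$ binary weights split across two coordinates with ranges of $p$ bits each, so that no overflow occurs.

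Finally, an MLP computing $1-\RELU$ of the absolute values of the coordinates outputs $\EQ$. The routine check is that every intermediate value stays exactly representable in $p$ bits, which is where the choice $m=2p-1$ is tight.
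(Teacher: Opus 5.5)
Your lower bound matches the paper's: Alice holds the prefix of tokens that depend on $\y$ and sends her partial numerator and partial denominator, and $2(p-1)=2p-2<2p-1=m$ contradicts the one-way bound for $\EQ_m$. Two details need tightening.
\begin{enumerate}
\item The model class has to be a hypothesis: $d_v=1$, $H=1$, with arbitrary embedding dimension, as in the paper's formal lemma. Embedding dimension $d=1$ is too restrictive, since the paper's construction uses $d=3$. Arguing that ``only the final attention scalar matters'' does not substitute for this, because Alice cannot normalize without Bob's denominator.
\item Alice's tokens must form a prefix of the left-associative sum, so that Bob continues exactly the rounded computation the Transformer performs.
\end{enumerate}

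The genuine gap is in the upper bound. With uniform attention the denominator does not depend on the input, so all information must pass through the value sum.
\begin{itemize}
\item If that sum has one value coordinate, Alice's share collapses to a single $p$-bit partial numerator. Your own protocol then shows the construction cannot decide $\EQ_{2p-1}$ once $p\ge 2$.
\item If it has two value coordinates (your ``two coordinates with ranges of $p$ bits each''), then $d_v=2$ and Alice sends three $(p-1)$-bit numbers, with $3(p-1)\ge 2p-1$ for $p\ge2$. Your lower bound then says nothing about the model you built, so you get no one-bit separation within a single model class.
\end{itemize}
The missing idea is that the softmax denominator is itself a second $p$-bit channel. The paper keeps $d_v=1$, with $W^Q=\ln 2\,(1,0,0)^T$ and $W^K=(0,1,0)^T$.
\begin{itemize}
\item For the first $p$ bits of each string, the key is positional and the bit enters through the value, so the numerator becomes $\y_{1:p}-\z_{1:p}+1$.
\item For the last $p-1$ bits, the bit enters through the key ($2p-i-1$ versus $-N$, with opposite polarity for $\z$). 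The denominator therefore becomes $\bar\y_{p+1:m}+\z_{p+1:m}$.
\end{itemize}

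The ``routine check'' of representability is also where your scheme breaks. $p$-bit fixed point has a sign bit and only $p-1$ magnitude positions. Alice's partial sum over a $p$-bit half already does not fit, and a two-sided difference needs yet another bit. So an absolute-value test on $\sum_i s_i(x_i-x_{m+i})$ is not available at $p$ bits. The paper relies on three devices your proposal lacks.
\begin{enumerate}
\item The promise $\y\le\z$ as binary numbers makes inequality one-sided. Either the first halves differ and the numerator is $\le 0$, or they agree. In the latter case $\y_{p+1:m}\le\z_{p+1:m}$, so the denominator equals $2^{p-1}-1$ exactly when the second halves agree and otherwise overflows to \textsf{float(`inf')}. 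A two-neuron MLP detects either failure.
\item Dummy tokens at positions $-1$ and $m+3$ shift the partial sums by $-(2^{p-1}-1)$ and $+(2^{p-1}-1)$, bringing them into range.
\item The duplicated $y_1$ and $z_1$ tokens in $\Tm^1$ split the top-bit weight $2^{p-1}$, which is not representable, into two weights of $2^{p-2}$.
\end{enumerate}
This is why the paper's formal version (Theorem~\ref{thm:tradeoff-main}) carries the promise and the padded representation $\Tm^1$. Without these devices your construction does not establish part 2 at exactly $p$ bits.
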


This is the tightest tradeoff possible: reducing the precision by just one bit degrades the capability of the Transformer to solve equality. Note that contrary to prior works where precision has been considered as a coarse function of $n$ (either constant, logarithmic in $n$, or polynomial in $n$), we do not place any such restrictions on $p$. The tradeoff works for any value of $p$. 

We also prove similar tradeoff results for the floating-point precision format, where we obtain a tight tradeoff for linear Transformers, and an almost tight tradeoff for soft-max Transformers. The function on which this tradeoff is performed is again very similar to the equality function. However, owing to the mantissa and exponent structure of this precision format, constructing an upper bound (a Transformer that solves this variant of equality task) is more difficult. We discuss the challenges here in more detail in \cref{sec:fp} below and, due to space constraints, the formal results have been moved to \cref{apx:fp}.

These results are primarily theoretical and intended to provide a rigorous framework to study the disadvantages of quantization. This is important as an understanding of expressivity loss can guide the design of quantization schemes that incur as much expressivity loss as the task of interest can tolerate. 
Indeed, it suggests that tasks requiring equality-like comparisons (exact match, membership, etc.) are especially sensitive to quantization. Dropping even one bit can cross a threshold where the model cannot represent the needed comparison reliably. Thus, it paves the way for developing heuristics that will help practitioners choose how much quantization is possible: the precision should be chosen as a function of the length of equality to be checked for the specific task. 

Moreover, this result concretely explains the widely observed phenomenon of empirical loss of expressivity  when quantization is used. As described above, the qualitative difficulty of equality-style tasks under limited precision has been observed in prior works \cite{yao2022zeroquant,wei2023outlier}; our results give a rigorous theoretical explanation for this phenomenon. 

In \cref{sec:exp}, we support the theory with simple experiments on controlled datasets for equality checks, showing that single-layer Transformers with enough precision can learn to compute the equality function, and that decreasing the precision also decreases the accuracy.

\section{Technique Overview}
\label{sec:main-results}

In this section, we give an overview of the main results of precision-expressivity tradeoffs for fixed-point arithmetic.

\paragraph{Input Representations.}
As described before, the function for which we are proving the tradeoff is the Equality function. However, we now discuss  how the inputs to the function are represented as input tokens of the Transformer.

For an integer $n > 0$, a Transformer is a function $\Tf: \Sigma^{n+1}\rightarrow \Sigma^{n+1}_{out}$, that takes as inputs $(n+1)$ input-tokens from the set of alphabets $\Sigma$, and outputs $(n+1)$ output-tokens from the alphabet set $\Sigma_{out}$.

Given a function, $f: \Lambda^N \rightarrow \Sigma_{out}$, a Transformer $\Tf: \Sigma^n \rightarrow \Sigma_{out}^n$ is said to \emph{compute} $f$ for an \emph{input representation} $\mathcal{T} : \Lambda^N\rightarrow \Sigma^n$, for integers $N, n > 0$, if, when the Transformer is given as input $n+1$ tokens, where the $i$-th token is $\mathcal{T}(x_1, \ldots, x_N)_i$ for $i\in [n]$, and the $(n+1)$-th token is $r$ (a placeholder which is a predefined constant and can be considered as zero), outputs the value of $f(x_1, \ldots, x_N)$ in the $(n+1)$-th token of the output. More specifically, 
\begin{equation*}
    \begin{split}
        \Tf(\mathcal{T}(x_1, \ldots, x_N)_1, \ldots,  \mathcal{T}(x_1, \ldots, &x_N)_n, r)_{n+1} \\
         & = f(x_1, \ldots, x_n).
    \end{split}
\end{equation*}

For example, a simple input representation for $n=N$ and $\Lambda = \Sigma$ can be given by $\mathcal{T}(x_1, \ldots, x_n)_i = x_i$ for all $i\in [n]$, i.e., each of the $i$-th input token of the Transformer contains the $i$-th input to the function $x_i$, and the transformer is expected to compute the value of $f(x_1, \ldots, x_N)$.

Using this definition, we will prove our main results. Here, a Transformer is expected to compute the Equality function, $\EQ_m(\y, \z)$, i.e., for given strings $\y, \z \in \{0, 1\}^m$, the Transformer will output $1$ if $\y=\z$, otherwise $0$. The input representation will be similar to the simple one described above, where the first $m$ tokens will contain one of each bit of $\y$, followed by the next $m$ tokens which will contain one of each bit of $\z$, followed by placeholder zeros to fill up the number of input tokens of the Transformer to $n+1$.

In all the results, the underlying function that the Transformer computes is always Equality $\EQ_m$. However, we modify the input representation somewhat 
so that testing Equality of two strings can be checked in the required precision format, and to prove lower and upper bounds.

\paragraph{Lower Bounds.} We prove the lower bounds, i.e., Transformers cannot compute the Equality function, using tools from communication complexity. Similar techniques of proving lower bounds for Transformers have been studied before (including works of \cite{peng2024limitations,sanford2023representational,sanford2024one,yehudai2024can,chen2024theoretical,chakrabarti2026poly}), but all of them treat precision as a technical artifact rather than something important. They give lower bounds on the size of the Transformer and present their results as: $Hdp$ should be lower-bounded by some function of $n$. Recently, \cite{kozachinskiy2025strassen} removed this dependence on precision, $p$, using a concept called Split VC-dimension. 

However, we demonstrate how  this dependence on precision should not be ignored, and it is fundamental in understanding the expressivity of Transformers.

\paragraph{Upper Bounds.} For the upper bounds, we give an explicit construction of a Transformer to show that Transformers can compute the Equality function (check if $\y = \z$) using query dimension $d_q = 3$ and value dimension $d_v = 1$. This construction has the basic idea: the numerator of the softmax self-attention is used to check if the first halves of the two strings are equal, and the denominator is used to check if the second halves are equal. This is performed by choosing the embedding function such that the value matrices of the tokens from the first halves of $\y, \z$ contribute to the numerator, while the remaining are zeros. For the denominator, the embedding function makes query and key matrices are such that the values of $e^{\langle Q_i, K_j\rangle}$ are small, and ignored for the first half of the strings, while the only contribution to the value of the denominator is from the tokens corresponding to the second halves of $\y, \z$.

We use careful low-precision arithmetic to prevent information getting lost due to rounding. An overview of the proofs have been given in the following subsections, and the complete proofs can be found in Appendix \ref{apx:proof-fx}.

\subsection{Tradeoff with Fixed-Point Precision: Simple Equality Function}
\label{sec:EQ}

First, we show that using a simple input representation, $\EQ_m$ can be computed by a Transformer using $p$ bits, but cannot be computed using $p-2$ bits, where $m := 2p-3$.

Formally, we start with two strings $\y, \z \in \{0, 1\}^m$, and are interested in computing the function $\EQ_m(\y, \z)$ with the promise conditions:
    \begin{itemize}
    \item $m$ is odd.
    \item The binary values satisfy $\y \leq \z$.
    \end{itemize}

The input representation $\Tm^0 : \{0, 1\}^{2m} \rightarrow \{0, 1\}^{n}$ is as follows:
\begin{enumerate}
        \item $\Tm^0(\y, \z)_{i} = y_{i}$ for $i \in [m]$.
        \item $\Tm^0(\y, \z)_{i+m} = z_{i}$ for $i\in [m]$.
        \item $\Tm^0(\y, \z)_{i} = 0$ for all $i \in  [2m+1: n]$. 
    \end{enumerate}

    This input representation essentially implies that the first $m$ tokens contain each bit of $\y$, the next $m$ tokens contain each bit of $\z$, and the rest are zeros.

    The output of the Transformer (the value at the last output token) will be: 
    \begin{itemize}
        \item $1$ if $\y=\z$ and $m$ is odd,
        \item $0$ if $\y \neq \z$ and the promise conditions are met, and
        \item don't care otherwise.
    \end{itemize}
    
    With this representation, we prove the following result:

\begin{theorem}[Tradeoff with Fixed-Point Precision for Equality]
    \label{thm:tradeoff-eq}
    For every integer $p> 0$ and every integer $n > 4p$, choosing $m = 2p-3$,
    \begin{enumerate}
        \item no one-layer Transformer with $n$ tokens having value dimension $d_v=1$ and one head, can compute $\EQ_m$ with the input representation $\Tm^0$, using $ p\!-\!2$ bits of precision, 
        \item there exists a one-layer Transformer with $n$ tokens having value dimension $d_v = 1$ and one head, that can compute ${\EQ_m}$ with the input representation $\Tm^0$ using $p$ bits of precision in the fixed-point precision format. 
    \end{enumerate}
\end{theorem}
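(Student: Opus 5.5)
We treat the two parts separately. The lower bound is a communication-complexity argument whose count is made exact, and the upper bound is an explicit $p$-bit construction following the numerator/denominator idea from the technique overview.

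\textbf{Lower bound (part 1).} We reduce deterministic one-way communication to the one-layer Transformer. Alice holds $\y$ and Bob holds $\z$. The last output token is a function of the attention output at position $n+1$, which with one head and $d_v=1$ has the form
\[
\frac{\sum_j e^{\langle Q_{n+1},K_j\rangle}V_j}{\sum_j e^{\langle Q_{n+1},K_j\rangle}},
\]
followed by a token-wise MLP. The query $Q_{n+1}$ depends only on the fixed placeholder $r$.

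Alice computes, in the Transformer's $(p-2)$-bit fixed-point arithmetic, her partial numerator $A_N=\sum_{j\le m}e^{\langle Q_{n+1},K_j\rangle}V_j$ and her partial denominator $A_D$ over her tokens. She rounds each after every operation, as the Transformer does. She sends $(A_N,A_D)$, which costs $2(p-2)=2p-4$ bits. Bob adds his contributions and the padding tokens, divides, and applies the MLP, so he reproduces the output exactly.

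The one care point is that the rounded summation order must be fixable so that Alice's tokens are aggregated first. If the paper's arithmetic model sums left to right, this holds automatically, because $\y$ occupies the first $m$ positions.

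It remains to show that $2p-4$ bits do not suffice. Under the promise $\y\le\z$, the communication matrix restricted to pairs with $\y\le \z$ is upper-triangular with $1$s on the diagonal. Any deterministic protocol must therefore separate all $2^m$ diagonal inputs: if Alice sent the same message on $\y\ne\y'$ with $\y<\y'$, then on Bob's input $\z=\y'$ both $(\y,\y')$ and $(\y',\y')$ are valid promise inputs, yet they have different answers. Hence Alice needs at least $m=2p-3$ bits, which exceeds $2p-4$, a contradiction. Note that only a fooling-set-style argument is needed, and the promise $\y\le\z$ is exactly what makes it work.

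\textbf{Upper bound (part 2).} We build the construction with $d_q=3$ and $d_v=1$, splitting the $m=2p-3$ bits into a first block of $p-1$ and a second block of $p-2$ positions, using the odd length.

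\emph{Numerator.} Give token $i$ of $\y$'s first block the value $V_i=+y_i2^{i-1}$ and the matching token of $\z$ the value $-z_i2^{i-1}$, with equal attention weight $1$. The numerator is then $\operatorname{bin}(\y_{\text{first}})-\operatorname{bin}(\z_{\text{first}})$. This is an integer of magnitude below $2^{p-1}$, so it is exactly representable in $p$-bit fixed point, and it is zero iff the first halves agree.

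\emph{Denominator.} Use positional embeddings and the $3$-dimensional query/key so that second-block tokens get $e^{\langle Q,K_j\rangle}$ equal to $2^{j}$-type weights, with signs or offsets encoding $\z$ against $\y$. Because $\y\le\z$, the denominator equals a known constant $D_0$ iff the second halves agree. All other tokens get scores so negative that their exponentials round to $0$ in $p$ bits.

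\emph{Decoding.} The MLP then outputs $1$ iff the numerator is $0$ and the denominator is $D_0$. Since the output is a ratio, I would instead make the numerator carry $D_0$ plus the difference, so that the ratio is exactly $1$ iff both halves match.

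\textbf{Main obstacle.} The hardest step is ensuring that every intermediate exponential and partial sum is exactly representable in $p$ bits with no rounding collisions, in particular verifying that unequal second halves can never round to $D_0$. I would handle this first by choosing all exponentials to be powers of two in the representable range, and checking the extreme cases $\y=0$ and $\z=2^m-1$.
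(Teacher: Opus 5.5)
Your part 1 follows the paper's argument. Alice sends her partial numerator and denominator, which costs $2(p-2)=2p-4<2p-3=m$ bits, and this contradicts the fooling-set bound supplied by the promise $\y\le\z$. Your choice of Bob's input $\z=\y'$ is the correct one. The gap is in part 2, at the decoding step.

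You take the numerator to be $D_0+\Delta_1$, with $\Delta_1=\operatorname{bin}(\y_{\text{first}})-\operatorname{bin}(\z_{\text{first}})$ kept exactly representable, and the denominator to be $D_0+\Delta_2$. The quotient then equals $1$ whenever $\Delta_1=\Delta_2$, not only when both are zero. The promise $\y\le\z$ forces $\Delta_2\ge 0$ only when the first halves agree. Once the first halves differ, the second halves are unconstrained. For the natural encoding (complemented bits of $\y$ and bits of $\z$ with matching powers of two), $\Delta_2=\operatorname{bin}(\z_{\text{second}})-\operatorname{bin}(\y_{\text{second}})$ can take either sign and can cancel $\Delta_1$.

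Here is a concrete failure with your weights $\pm y_i2^{i-1}$:
\begin{itemize}
\item Set all first-block bits of $\y$ to $0$, and set $z_1=1$ with the remaining first-block bits of $\z$ equal to $0$. Then $\Delta_1=-1$, and $\y<\z$ because $z_1>y_1$.
\item Choose second halves with $\operatorname{bin}(\z_{\text{second}})=\operatorname{bin}(\y_{\text{second}})-1$, so $\Delta_2=-1$.
\item The attention output is then exactly $1$ on a NO instance.
\end{itemize}
Two further problems remain. Your weights are LSB-first, so the promise does not even fix the sign of $\Delta_1$. Your first variant (numerator $0$, denominator $D_0$) cannot be decoded, because the MLP sees only the quotient.

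The missing idea is to \emph{use} overflow rather than avoid it. The paper sets the numerator to $2^{\lceil m/2\rceil}-1+\z_{1:\lceil m/2\rceil}-\y_{1:\lceil m/2\rceil}$ with MSB-first weights. Its YES value is then the largest magnitude representable with $\lceil m/2\rceil+1=p$ bits. By the promise, a first-half mismatch can only increase it, so it overflows to infinity, and no denominator can compensate for that. Only when the first halves agree does the promise give $\z_{\text{second}}\ge\y_{\text{second}}$. In that case the denominator $\bar\y_{\text{second}}+\z_{\text{second}}+2^{-1}$ is at least its YES value, with equality iff the second halves agree. This yields a one-sided test: a fixed numerator over a denominator bounded below by its YES value. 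You would still need to check that values such as $N_0/D_0$ and $N_0/(D_0+1)$ stay distinct after the quotient is rounded to $p$ bits.

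Your ``main obstacle'' paragraph therefore aims at the wrong target. Exact representability of every partial sum is precisely what lets a NO instance reproduce the YES output.
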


Even though this is not a tight tradeoff, this is enough to show a separation in real life quantization regimes for fixed-point precision, where computations are usually often on $4, 8, 16$ bits. However, this gap can be bridged to make the tradeoff tight, as will be shown in the next section.

    The proof of 
Theorem \ref{thm:tradeoff-eq} will follow from the following two lemmas. 
Lemma 2.2 
states the lower bound, showing that for our choice of parameter settings ($m=2p-3$), no single layer Transformer with $p-2$ bits of precision can compute $\EQ_m$.  Lemma 2.3 
states the near matching upper bound, showing that $\EQ_m$ can be computed using $p+1$ bits of precision. 
(Note that since we have set $m=2p-3$ and $m$ is odd, $\lceil m/2 \rceil -1 =p-2$ and  $\lceil m/2 \rceil +1 = p$.)

\begin{restatable}[Fixed-Point Lower Bound]{lemma}{LBEQ}
\label{lem:lb-eq}
    Given the input representation $\Tm^0$, any one-layer Transformer with $d_v=1$ and $H=1$, cannot compute $\EQ_m$ with $\leq \mm -1 $ bits of precision.
\end{restatable}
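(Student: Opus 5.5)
We prove the bound by a one-way communication reduction from $\EQ$, in the style of the earlier communication-complexity lower bounds for one-layer Transformers, keeping careful track of how many bits each party must send. Suppose toward a contradiction that a one-layer, single-head Transformer with $d_v=1$ and $p' \le \mm-1$ bits of fixed-point precision computes $\EQ_m$ under $\Tm^0$ on all promise inputs ($m$ odd, $\y\le\z$). Split the input tokens between two parties. Alice holds the positions depending on $y_1,\dots,y_{\mm}$ and $z_1,\dots,z_{\mm}$, i.e.\ the first halves of both strings. Bob holds the remaining positions $y_{\mm+1},\dots,y_m$ and $z_{\mm+1},\dots,z_m$, together with the zero padding and the final placeholder token $r$.

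The output at position $n+1$ is a function of the softmax attention
\[
\frac{\sum_j e^{\langle Q_{n+1},K_j\rangle} V_j}{\sum_j e^{\langle Q_{n+1},K_j\rangle}} .
\]
The query $Q_{n+1}$ is fixed, since $r$ is a constant, and $V_j$ is a scalar because $d_v=1$. Each sum therefore splits into Alice's part and Bob's part. In the finite-precision model, every primitive operation, including the accumulation of these sums and the final division, returns a $p'$-bit fixed-point number. So Alice's partial numerator $N_A$ and partial denominator $D_A$ are each $p'$-bit values, provided the summation is ordered so that Alice's terms are accumulated first. I will fix that convention, or argue that the lower bound holds for whichever rounding/accumulation order the model of \cref{apx:prelims} prescribes.

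The protocol is then: Alice sends $(N_A,D_A)$, Bob combines them with his own contributions, completes the softmax, applies the feed-forward layer, and outputs $\EQ_m$. The key point is that the message has only $p'$ bits of information, not $2p'$. We can restrict attention to inputs where Bob's half is fixed. The promise $\y\le\z$ together with the odd length lets us embed a full equality instance on the first halves: take $y_{\mm+1..m}=z_{\mm+1..m}$, so that $\y\le\z$ reduces to a comparison of the prefixes. In the best case, we would like a one-way protocol for equality on strings of length $\mm$ in which only one $p'$-bit quantity depends on Alice's input.

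The main obstacle is this bit count. The naive argument yields a message of $2p'$ bits, which only proves impossibility for $2p' < \mm$, i.e.\ about half the stated precision. To close the gap I would try two ideas.

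1. Choose the partition so that Alice owns one whole string's first half while Bob owns the other. Then Alice must transmit enough information to distinguish $2^{\mm}$ prefixes, and the promise $\y\le\z$ with the identical suffix fixed gives a fooling set of size $2^{\mm}$.
2. Exploit the fact that, with $d_v=1$ and fixed-point values bounded in $[-(2^{p'-1}-1),2^{p'-1}-1]$, the pair $(N_A,D_A)$ ranges over fewer than $2^{\mm}$ combinations. The count of achievable pairs, using sign bits and the rounding of $e^{(\cdot)}$ to nonnegative values, is at most $2^{2p'-1}$ or less. One then compares this against the fooling set.

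I will try to make the fooling-set count match $2^{\mm}$ exactly against the number of distinct messages. The lemma follows once the number of distinct possible messages is strictly smaller than the fooling set. I expect the delicate part of the proof to be the exact count of messages, in particular handling the sign bits and the nonnegativity of rounded exponentials.
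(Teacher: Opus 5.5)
There is a genuine gap, and it comes from reducing from the wrong instance. First, your opening split cannot give any lower bound. In it, Alice gets the first halves of both $\y$ and $\z$ and Bob gets the second halves. Alice can then check $\y_{1:\mm}=\z_{1:\mm}$ on her own and Bob can check the suffixes on his own, so one bit of communication suffices. Moreover, Alice's positions $1,\dots,\mm$ and $m+1,\dots,m+\mm$ are not a contiguous prefix, so under the left-associative accumulation of \cref{apx:comp} her terms are not summed first.

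Second, the ``main obstacle'' you identify is an artifact of reducing from equality on $\mm$-bit prefixes. That choice forces you to claim that the message $(N_A,D_A)$ carries only $p'$ bits, which is unjustified and in fact false. In the paper's own construction (\cref{lem:ub-fixed-pt-eq}), Alice's partial numerator is $-\y_{1:\mm}$ and her partial denominator is, after rounding, $\bar{\y}_{\mm+1:m}$. So the pair determines all $m$ bits of $\y$ using entries of only $\mm+1$ bits each. Neither of your two ideas repairs this: a fooling set of size $2^{\mm}$ against as many as $2^{2p'-1}$ messages only yields $p'\gtrsim \mm/2$.

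The naive $2p'$-bit argument is already exactly what is needed, provided you use the full strings; this is the paper's proof. Let Alice hold all the $\y$-tokens, i.e.\ positions $1,\dots,m$. These form a prefix, so her partial sums $L_1=\sum_{j\in S}e^{\langle Q_{n+1},K_j\rangle}$ and $L_2=\sum_{j\in S}e^{\langle Q_{n+1},K_j\rangle}V_j$ are formed first. Bob holds the $\z$-tokens, the padding and the placeholder. Alice sends $L_1$ and $L_2$, which is $2p'$ bits, and Bob completes both sums, divides, and applies the MLP.

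One-way equality on $m$ bits under the promise $\y\le\z$ needs $m$ bits. Indeed, if $x<x'$ receive the same message, give Bob $\z=x'$; then $(x,x')$ and $(x',x')$ both satisfy the promise but have different answers. Hence $2p'\ge m$, i.e.\ $p'\ge\mm$, contradicting $p'\le\mm-1$. The factor of two you were trying to eliminate is precisely the one already absorbed into the $\lceil m/2\rceil$ of the statement.
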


\begin{restatable}[Fixed-Point Upper bound]{lemma}{UBEQ}
\label{lem:ub-fixed-pt-eq}
    There exists a one-layer Transformer with $d_v=1$ and $H=1$, that uses $\mm+1$ bits of fixed-point precision to compute $\EQ_m$ in the input representation $\Tm^0$.
\end{restatable}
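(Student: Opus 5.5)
The plan is an explicit construction. Set $k:=\mm$ and split each string into a \emph{high part} and a \emph{low part}. The high parts are $Y_1=\sum_{i\le k} y_i2^{k-i}$ and $Z_1=\sum_{i\le k} z_i2^{k-i}$. The low parts $Y_2,Z_2$ are built the same way from the remaining $m-k=k-1$ bits. Since $m$ is odd, each of these integers lies in $[0,2^k-1]$, so together with a sign bit each fits in $k+1$ bits of fixed-point precision. Following the overview in \cref{sec:main-results}, the numerator of the single softmax head will carry information about $Y_1,Z_1$ and the denominator will carry information about $Y_2,Z_2$. The feed-forward layer at the last token then tests both pieces.

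\textbf{Numerator.} I choose the token embedding (bit plus positional information) and a $1$-dimensional value map as follows. A token at position $i\le k$ of $\y$ gets value $y_i2^{k-i}$. A token at position $i\le k$ of $\z$ gets value $-z_i2^{k-i}$. The low-part tokens and the padding get value $0$. The numerator $\sum_j e^{s_j}v_j$ then depends only on the high-part tokens, and I make each of their attention scores exactly $s_j=0$, so every such weight is $e^0=1$. The numerator is therefore the integer $Y_1-Z_1\in[-(2^k-1),2^k-1]$, which is exactly representable with $k+1$ bits.

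\textbf{Denominator.} Positive weights cannot subtract, so I use complements on the $\y$ side. With $d_q=3$, the query of the last token and the keys are chosen so that a low-part token of $\z$ at offset $i$ gets score $s_j=z_i\cdot c_i$, and a low-part token of $\y$ gets score $s_j=(1-y_i)\cdot c_i$; the affine term $1-y_i$ is available from a constant coordinate of the embedding. The constants $c_i$ are chosen so that $e^{c_i}-1=2^{k-1-i}$. Each such weight is then $1+b_i2^{k-1-i}$ for the relevant bit $b_i$, and the low-part contributions to the denominator sum to
\[
C+(2^{k-1}-1-Y_2)+Z_2=C'+(Z_2-Y_2),
\]
where $C,C'$ are constants from the weights equal to $1$. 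The remaining tokens are the high-part tokens (weight exactly $1$), the padding and the placeholder. For padding and placeholder I choose scores giving constant weights, so they too are absorbed into $C'$. All quantities are integers of magnitude below $2^k$, after a global rescaling to fit the fixed-point range if needed.

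\textbf{Decision and main obstacle.} The promise $\y\le\z$ gives $Y_1\le Z_1$, and if $Y_1=Z_1$ then $Y_2\le Z_2$. Hence $\y=\z$ holds iff the numerator is $0$ and the denominator equals the constant $C'$. The main obstacle is that the attention output is the single rounded ratio $(Y_1-Z_1)/(C'+Z_2-Y_2)$, so the check cannot simply be read off two separate numbers. I would first try to make the ratio itself decisive. A zero numerator gives output exactly $0$. A nonzero numerator has magnitude at least $1$, while the denominator is at most $2^k$, so the ratio has magnitude at least $2^{-k}$; with one sign bit and $k$ magnitude bits, the scaling can be fixed so that this never rounds to $0$. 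This separates $Y_1\ne Z_1$ from $Y_1=Z_1$. To also catch $Y_2\ne Z_2$ when the numerator vanishes, I would add a constant offset $a$ to the numerator through the placeholder token's value. The output is then $a/(C'+\Delta)$ with $\Delta=Z_2-Y_2\ge0$, and $a,C'$ are tuned so that $\Delta=0$ yields one specific representable value that every $\Delta\ge1$ rounds away from. Because the offset changes the analysis of the numerator-nonzero case, both cases would then be re-verified with that case kept separate. The fiddly part of the proof is checking these rounding margins at exactly $k+1$ bits, together with the rounding of the exponentials, including the requirement that the weights $e^{c_i}$ round to exact integers. Finally, a $\RELU$-based feed-forward layer maps the target value to $1$ and every other representable value to $0$.
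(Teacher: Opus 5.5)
Your split into high and low halves, with the numerator carrying $Y_1-Z_1$ and the denominator carrying $\bar Y_2+Z_2$, matches the paper's plan. The gap is the step you defer: making one rounded ratio separate YES from NO. That is where the content of the lemma lies, and ``tune $a$ and $C'$'' does not settle it.

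On a NO instance with $D:=Z_1-Y_1\ge 1$, the promise $\y\le\z$ puts no constraint on the low halves. So $\Delta=Z_2-Y_2$ can be negative, and the output $(a-D)/(C'+\Delta)$ equals the YES value $a/C'$ exactly when $a\Delta=-DC'$. Two examples:
\begin{itemize}
\item With the natural choice $a=C'$ (YES output $1$), the admissible input with $Y_1=0$, $Z_1=1$, $Y_2=1$, $Z_2=0$ gives $(C'-1)/(C'-1)=1$, with no rounding involved.
\item With $a=2$ and $C'$ even, the input with $D=1$, $\Delta=-C'/2$ collides in the same way.
\end{itemize}
You are forced to take $a\le 1$, so that every high-half mismatch gives a non-positive output. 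But then the YES output $a/C'\le 2^{1-k}$ sits at the bottom of the $(k+1)$-bit grid, whose spacing is $2^{-k}$. So $C'$ must be placed exactly at a boundary:
\begin{itemize}
\item at a rounding threshold, e.g.\ $C'=\lfloor 2^{k+1}/3\rfloor$, so that $1/C'$ rounds to $2^{1-k}$ while $1/(C'+\Delta)$ rounds to at most $2^{-k}$ for every $\Delta\ge 1$; or
\item at the overflow threshold $C'=2^k-1$, so that every $\Delta\ge 1$ sends the denominator to \textsf{float(`inf')} and the output to $0$.
\end{itemize}
You would also have to check that your $C'$ can be put there. The $2k$ unit weights on the high-half tokens and the $2(k-1)$ base weights on the low-half tokens give $C'\ge 2^{k-1}+4k-3$. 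So your claim that all quantities stay below $2^k$ is false: $C'+\Delta$ reaches $2^k+4k-4$, and rescaling by a power of two would merge $C'$ with $C'+1$. In addition, the padding tokens must get weight $0$, not merely a constant weight, since $n-2m$ is unbounded.

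The paper avoids this interaction between the two halves by making the numerator overflow. It accumulates $-Y_1$, then $+Z_1$, then a dummy term $2^k-1$. Under the promise, the numerator is therefore exactly the largest representable value $2^k-1$ when $Y_1=Z_1$, and it overflows to \textsf{float(`inf')} whenever $Z_1>Y_1$, whatever the denominator is. The high-half tokens get tiny weights ($2^{-6}$, $2^{-(i-2)}$, compensated by large values), which are rounded away in the denominator. The non-contributing low-half tokens get score $-N$ (weight $0$) rather than weight $1$. The denominator is then $\bar Y_2+Z_2+\tfrac12$, and the ratio is exactly $2$ on YES instances. When $Z_2>Y_2$ it is at most $2-4/(2^k+1)$, about two units in the last place below $2$.

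Your construction becomes a proof once you commit to $a\le 1$ together with one of the two boundary placements of $C'$, or switch to the paper's overflow ordering, and then verify the rounding. As written, the decisive step is missing.
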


\cref{lem:lb-eq} 
is proven via a reduction to a communication complexity lower bound, similar at a high level to other communication complexity based lower bound proofs for Transformers
\cite{peng2024limitations,sanford2023representational}.
Next we give a proof overview for \cref{lem:ub-fixed-pt-eq}. 
The full proofs of both Lemmas can be found in \cref{apx:proof-fx-eq}.

\begin{proof}[Proof overview.] The main idea of this construction is inspired by the proof of the lower bound, where both the numerator and denominator of self-attention are used to transfer information. The construction uses the numerator to check equality in the first half (whether $\y_{1:\mm} = \z_{1:\mm}$), and then use the denominator to check the rest. 

For the numerator part, we will choose the embedding such that the final numerator value is equal to the maximum possible value in this representation, $2^{\mm}-1$, if they are equal. We ensure that by making the summation value is $$\sum_{i=1}^{n+1}e^{\langle Q_{n+1}, K_i\rangle}V_i = -\y_{1:\mm} + \z_{1:\mm} + \underbrace{1\ldots 1}_{\mm}.$$

In order to obtain this value, we design the embedding and the $W^Q, W^K, W^V$ matrices in a way such that $\sum_{i = 1}^{\mm} e^{\langle Q_{n+1}, K_i\rangle}V_i$ gives the value $-\y_{1:\mm}$, and $\sum_{i=m+1}^{m+\mm} e^{\langle Q_{n+1}, K_i\rangle}V_i$ gives the value $\z_{1:\mm}$. Along with this, there will be a dummy token in the $(2m+1)$-th index such that $e^{\langle Q_{n+1}, K_{2m+1}\rangle}V_{2m+1} = 2^{\mm}-1$, and the rest will be zeros. The value of the numerator will be exactly $(2^{\mm}-1)$ if $\y_{1:\mm}=\z_{1:\mm}$, otherwise infinity (commonly used as \textsf{float(`inf')} in PyTorch), which follows from the promise on $\y, \z$.

For the denominator, we enforce $\sum_{i=1}^{\mm}e^{\langle Q_{n+1}, K_{j}\rangle}$ and $\sum_{i=m+1}^{m+\mm} e^{\langle Q_{n+1}, K_{j}\rangle}$ are small constants irrespective of $\y, \z$, and summation $\sum_{i=\mm+1}^{m}e^{\langle Q_{n+1}, K_{j}\rangle} + \sum_{i=m+\mm+1}^{2m}e^{\langle Q_{n+1}, K_{j}\rangle}$ will be $\bar \y_{\mm+1:m} +  \z_{\mm+1:m} + 2^{-1}$. This is a fixed value if $\y_{\mm+1:m} = \z_{\mm+1:m}$, otherwise a larger value.

Summing up, if $\y_{1:m} = \z_{1:m}$, then the numerator will be $(2^{\mm}-1)$ and the denominator will be a fixed value $2^{(\mm-1)}-1 + 2^{-1}$, otherwise either the numerator make the output infinity (\textsf{float(`inf')}), or the denominator will a larger value (therefore the output different from $\frac{2^{\mm}-1}{2^{\mm-1}-2^{-1}} = 2$), either of which can be detected with a simple MLP layer after the self-attention computation.
\end{proof}

\subsection{Tradeoff with Fixed-Point Precision: Theorem \ref{thm:tradeoff}}

In this section, we prove a tight one-bit tradeoff. We use the same function $\EQ_m$ as before, along with the same promise that $m$ is odd and $\y \leq \z$.

    For ease of notation, we  consider indexing from $-1$ up  $n-2$. The input representation $\Tm^1 : \{0, 1\}^{2m}\rightarrow \{0, 1\}^n$ is defined as follows:
        \begin{enumerate}
        \item $\Tm^1(\y, \z)_{-1} = 0$.
        \item $\Tm^1(\y, \z)_0 = y_1$.
        \item $\Tm^1(\y, \z)_{i} = y_{i}$ for $i \in [m]$.
        \item $\Tm^1(\y, \z)_{i} = z_1$ for $i \in \{m+1, m+2\}$.
        \item $\Tm^1(\y, \z)_{m+3} = 0$.
        \item $\Tm^1(\y, \z)_{i} = z_{i-m-2}$ for $i\in [m+4: 2m+2]$.
        \item $\Tm^1(\y, \z)_{i} = 0$ for all $i \in  [2m+3: n-2]$. 
    \end{enumerate}

Formally, Theorem \ref{thm:tradeoff} can be stated as follows.

\begin{theorem}[Tight Tradeoff with Fixed-Point Precision]
    \label{thm:tradeoff-main}
    For every integer $p> 0$ and every integer $n > 4p$, choosing $m = 2p-1$,
    \begin{enumerate}
        \item no one-layer Transformer with $n$ tokens having value dimension $d_v=1$ and one head, can compute $\EQ_m$ with the input representation $\Tm^1$, using $ p-1$ bits of precision, 
        \item there exists a one-layer Transformer with $n$ tokens having value dimension $d_v = 1$ and one head, that can compute ${\EQ_m}$ with the input representation $\Tm^1$ using $p$ bits of precision in the fixed-point precision format. 
    \end{enumerate}
\end{theorem}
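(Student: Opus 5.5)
As with \cref{thm:tradeoff-eq}, I would prove \cref{thm:tradeoff-main} as a lower-bound lemma and an upper-bound lemma for the new representation $\Tm^1$. With $m = 2p-1$ odd, $\mm = p$, so the targets are: no Transformer with $\mm-1 = p-1$ bits, and a construction with $\mm = p$ bits. Compared with \cref{lem:ub-fixed-pt-eq}, the upper bound must save one bit, and the lower bound must stay at the same threshold.

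\textbf{Lower bound.} I would reuse the communication-complexity reduction behind \cref{lem:lb-eq}. Alice holds $\y$, which determines tokens $-1,\dots,m$ of $\Tm^1$. Bob holds $\z$, which determines tokens $m+1,\dots,2m+2$. The trailing zeros and the placeholder are fixed and known to both.

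With one head and $d_v=1$, the final output is an MLP applied to the ratio
\[
\frac{\sum_i e^{\langle Q_{n+1},K_i\rangle}V_i}{\sum_i e^{\langle Q_{n+1},K_i\rangle}}.
\]
The query $Q_{n+1}$ depends only on the fixed last token, so it is known to both players. Alice therefore computes her partial numerator $N_A$ and partial denominator $D_A$. In $(p-1)$-bit fixed-point arithmetic, each is a single $(p-1)$-bit word after rounding; I would fix the summation order, Alice's block first, so that her partial sums are genuine intermediate values of the computation.

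For the promise problem, it suffices that Alice's entire state has fewer than $2^m$ possible values. Then two distinct $\y,\y'$ collide, and Bob, holding $\z=\y$, cannot distinguish the two cases. This requires that $\y\le\z$ holds for both $(\y,\y)$ and $(\y',\y)$; I would pick the colliding pair among strings ordered appropriately by a pigeonhole argument over a chain. Alice's state is at most $2(p-1) = 2\mm-2 < m$ bits, so a collision exists.

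I expect this fooling-set step to be where the care goes. The $\y\le\z$ promise limits the usable inputs, so I would choose the fooling set as $\{(\y,\y)\}$ and argue that some $\y<\y'$ produce the same $(N_A,D_A)$. Then $(\y,\y')$ satisfies the promise, yet Bob's view matches that of $(\y',\y')$.

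\textbf{Upper bound.} I would adapt the construction of \cref{lem:ub-fixed-pt-eq}. The numerator compares the first $\mm$ bits via $-\y_{1:\mm}+\z_{1:\mm}+(2^{\mm}-1)$. The denominator compares the remaining $\mm-1$ bits via $\bar\y+\z+\tfrac12$.

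In the old construction, one extra bit was needed because the dummy constant $2^{\mm}-1$, and the intermediate sum before cancellation, overflowed $\mm$ bits. The duplicated tokens of $\Tm^1$ (token $0$ repeating $y_1$, tokens $m+1,m+2$ repeating $z_1$, and the separator zeros at $-1$ and $m+3$) are, I believe, designed to remove the need for a separate high-order dummy token and the carry headroom. The top bit $y_1$ gets two attention terms of weight $2^{\mm-2}$ rather than one of weight $2^{\mm-1}$, and similarly for $z_1$, so every individual term $e^{\langle Q,K_i\rangle}V_i$ fits in $p$ bits.

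I would choose positional embeddings so that $e^{\langle Q_{n+1},K_i\rangle}$ equals the needed power of two for each position. Then I would order the additions so that the running sum stays within $[-(2^{p-1}-1),2^{p-1}-1]$ when $\y_{1:\mm}=\z_{1:\mm}$ and saturates to \textsf{float(`inf')} otherwise. This uses the promise $\y\le\z$, which forces $\z_{1:\mm}-\y_{1:\mm}\ge 0$ whenever the prefixes differ.

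The main obstacle is verifying that every partial sum, including the denominator contributions from the first-half tokens that must be "small constants," stays representable without rounding ambiguity in exactly $p$ bits. I would check this by an explicit token-by-token table, and finish with the same MLP test as in \cref{lem:ub-fixed-pt-eq}: output $1$ iff the ratio equals the fixed value.
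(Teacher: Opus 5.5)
\textbf{Lower bound.} This part is the paper's argument (\cref{lem:lb}). Alice's tokens $-1,\dots,m$ are a prefix of the left-associative sums, so there is no order to choose. She sends the partial numerator and the partial denominator, and $2(p-1)<2p-1=m$ forces a collision between two strings $\y<\y'$. Giving Bob $\z=\y'$ is the correct orientation; the paper's pigeonhole write-up gives Bob the smaller string, which violates $\y\le\z$.

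\textbf{The gap is in the upper bound.} With $\mm$ bits, one of which is the sign, every intermediate value must have magnitude at most $2^{\mm-1}-1$.
\begin{itemize}
\item Your numerator target $-\y_{1:\mm}+\z_{1:\mm}+(2^{\mm}-1)$ equals $2^{\mm}-1$ on every YES instance, which is not representable.
\item Worse, the running sum after Alice's block is $-\y_{1:\mm}$. When $y_1=1$ it reaches magnitude $2^{\mm-1}$ as soon as tokens $0,1$ are added, so YES instances overflow before any of Bob's tokens are reached.
\item Splitting $y_1$ across two tokens bounds the individual terms, not the partial sums. You also cannot reorder the additions: the order is fixed by position, and only the per-position contributions are yours to choose.
\item The $+\tfrac12$ in your denominator needs a fractional bit beyond the $\mm-1$ magnitude bits.
\end{itemize}

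\textbf{The missing idea is to recenter so that the sign bit carries information.} The paper gives the fixed token $-1$ the contribution $-(2^{\mm-1}-1)$, so Alice's partial numerator is $\y_{1:\mm}-(2^{\mm-1}-1)$. Tokens $m+1,m+2$ then subtract $2^{\mm-2}z_1$ each. Token $m+3$ adds back $2^{\mm-1}-1$ before $\z_{2:\mm}$ is subtracted, and token $2m+3$ adds $1$.

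The numerator is therefore $1$ when the first halves agree, and non-positive (possibly $-\infty$) when they differ. The denominator drops the $\tfrac12$. It equals exactly $2^{\mm-1}-1$, the largest representable value, when the second halves agree, and it overflows when the first halves agree but the second halves differ. The MLP then only has to separate $1/(2^{\mm-1}-1)$ from outputs $\le 0$. So tokens $-1$ and $m+3$ are not separators that make a dummy unnecessary: they are the dummies.

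\textbf{A warning for your token-by-token check.} Even the recentered sum reaches $2^{\mm-1}$ at $\y_{1:\mm}=1\cdots1$, one past the range, and this is forced. Suppose partial sums are fixed-scale $\mm$-bit signed numbers with saturating overflow. Then Alice's pair (numerator, nonnegative denominator) takes at most $(2^{\mm}-1)\,2^{\mm-1}<2^{m}$ values. Your own fooling-set argument then shows that no $\mm$-bit Transformer exists in that model.

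The obstruction disappears only in one of two ways:
\begin{itemize}
\item overflow to $\pm\infty$ is a distinguishable extra state, and the all-ones prefix is routed through it explicitly; or
\item the flexible binary point of \cref{def:fixed-p} is exploited, which in turn invalidates the ``$p$ bits per transmitted number'' count in the lower bound.
\end{itemize}
You need to pin down that convention before either half of the theorem is rigorous.
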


As we will see in the proof, padding tokens do increase the expressivity of the Transformer and helps us achieve this exact trade off for this function. On a theoretical standpoint, using padding tokens do indeed improve the expressivity of Transformers was also proved by \cite{merrill2025exact}. In our case, we use this representation to prevent overflow as we are dealing with the exact number of bits that can be used to compute the function. We first state the lower bound present in \cref{apx:rem-proofs}.

\begin{restatable}[Lower Bound]{lemma}{LBEQFX}
\label{lem:lb}
    Given the input representation $\Tm^1$, any one-layer Transformer with $d_v=1$ and $H=1$ that computes $\EQ_m$ requires precision $ \geq \mm$ bits.
\end{restatable}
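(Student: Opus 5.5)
We reduce to the one-way communication complexity of equality. Suppose a one-layer Transformer with $H=1$, $d_v=1$ and $p$ bits of fixed-point precision computes $\EQ_m$ under $\Tm^1$. We split the input tokens between two players. Alice holds $\y$, which fixes tokens $-1,0,1,\ldots,m$. Bob holds $\z$, which fixes tokens $m+1,\ldots,2m+2$; the padding tokens and the final placeholder token $r$ depend on neither string.

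The output at position $n+1$ is $\phi\bigl(\sum_j e^{\langle Q_{n+1},K_j\rangle}V_j \big/ \sum_j e^{\langle Q_{n+1},K_j\rangle}\bigr)$, for some MLP $\phi$. Since the query $Q_{n+1}$ comes from the fixed placeholder token, every summand depends only on its own token. Alice can therefore compute her partial sums $N_A=\sum_{j\in A} e^{\langle Q_{n+1},K_j\rangle}V_j$ and $D_A=\sum_{j\in A} e^{\langle Q_{n+1},K_j\rangle}$. Bob similarly computes $N_B,D_B$ and adds in the constant contributions of the padding and placeholder. In $p$-bit arithmetic each of these quantities is a $p$-bit number. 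Alice sends $(N_A,D_A)$ to Bob, who adds to get the numerator and denominator, divides, and applies $\phi$, so he learns $\EQ_m(\y,\z)$.

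The naive count gives $2p$ bits of one-way communication. To match $\mm$ we must be more careful about how many distinct messages Alice can send. Each of $N_A$ and $D_A$ takes at most $2^p$ values, so Alice has at most $2^{2p}$ distinct messages. Any one-way protocol for equality needs distinct messages for distinct $\y$. This holds even under the promise $\y\le\z$: take $\y\ne\y'$ with $\y<\y'$, set $\z=\y'$, and the outputs on $(\y,\z)$ and $(\y',\z)$ differ. The number of admissible $\y$ is $2^m$, so $2^{2p}\ge 2^m$, i.e.\ $p\ge m/2$. Since $p$ is an integer and $m$ is odd, this gives $p\ge\lceil m/2\rceil=\mm$.

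The main obstacle is justifying that the partial sums are well-defined $p$-bit quantities that combine correctly. Fixed-point addition with rounding and overflow (saturation to \textsf{float(`inf')}) need not be associative, so the order in which the Transformer sums terms matters. We fix the summation convention from \cref{apx:prelims}: sums are computed sequentially over token indices with rounding at each step. Because Alice's tokens precede Bob's in index order, Alice's running prefix sums of the numerator and denominator are exactly the intermediate states at index $m$. From those $p$-bit states and his own tokens, Bob can continue the computation exactly as the Transformer would. If the precision model instead allows an arbitrary summation tree, we would first argue that any fixed evaluation order can be simulated by sending the $p$-bit state at the boundary between Alice's and Bob's blocks. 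This is where the index layout of $\Tm^1$ (Alice's block contiguous, placed first) is used. The extra tokens at indices $-1,0,m+1,m+2,m+3$ are fixed functions of a single player's input, so they cause no difficulty.
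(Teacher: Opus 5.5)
Your proposal is correct and is essentially the paper's proof. It uses the same contiguous split, with Alice holding tokens $-1,0,\ldots,m$; Alice sends the two $p$-bit left-associative prefix sums of the numerator and denominator, and the counting bound $2p\ge m$ then gives $p\ge\mm$. One small point in your favor: giving Bob the larger of the two colliding strings ($\z=\y'$) is the right choice under the promise $\y\le\z$, whereas the argument in Lemma~\ref{lem:cc-eq} gives Bob the smaller one, which puts the second input pair outside the promise.
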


    Now, we prove the existence of a Transformer with precision $\mm$ bits which actually computes $T_m$.

\begin{restatable}[Upper Bound]{lemma}{UBEQFX}
\label{lem:ub-fixed-pt}
    There exists a one-layer Transformer with $d_v=1$ and $H=1$, that uses $\mm$ bits of fixed-point precision to compute $\EQ_m$ in the input representation $\Tm^1$.
\end{restatable}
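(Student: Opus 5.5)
We adapt the construction behind \cref{lem:ub-fixed-pt-eq} and save one bit by using the extra padding tokens of $\Tm^1$: the duplicated copies of $y_1$ and $z_1$ (indices $0$ and $m+1,m+2$), the zero token at index $-1$, and the zero at index $m+3$. As in the earlier proof, the last (query) token splits the work between the numerator and the denominator of the softmax. The numerator certifies $\y_{1:\mm}=\z_{1:\mm}$ and the denominator certifies $\y_{\mm+1:m}=\z_{\mm+1:m}$. The loss of one bit in \cref{lem:ub-fixed-pt-eq} came from needing headroom: the numerator $-\y_{1:\mm}+\z_{1:\mm}+(2^{\mm}-1)$ can exceed $2^{\mm}-1$, and the denominator needed an offset $2^{-1}$. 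In the new construction every intermediate quantity will stay inside the $\mm$-bit range, with signed range $\pm(2^{\mm-1}-1)$ after scaling. Any deviation from equality will either overflow to the saturating value or land on a different representable number.

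\textbf{Step 1: embeddings.} Each token gets an embedding depending on its position and bit, with $d_q=3$ and $d_v=1$. For the numerator, we set $e^{\langle Q_{n+1},K_i\rangle}V_i$ to be the signed positional weight $\pm 2^{\mm-1-i}\cdot(\text{bit})$. The sign is negative for the $\y$-half and positive for the $\z$-half. The representation uses a scale of $2^{-1}$, so that the most significant bit is encoded twice at half weight: once by index $0$ and once by index $1$ (respectively by $m+1$ and $m+2$). This keeps every individual summand and every partial sum within range. The difference $\z_{1:\mm}-\y_{1:\mm}$ is $\geq 0$ by the promise $\y\le\z$ together with lexicographic order, so it is nonnegative. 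We add a constant token (index $-1$), chosen so that the total equals a fixed target $c_N$ exactly when the difference is $0$. Any positive difference then either overflows or gives a value different from $c_N$. The summation is evaluated left to right in fixed point. We must order the terms, using the $-1$ index first, so that no partial sum overflows when the strings are equal. This ordering is the reason for the zero and duplicate tokens.

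\textbf{Step 2: denominator.} For the second halves, we choose the keys so that $e^{\langle Q_{n+1},K_j\rangle}$ equals $2^{-k}$ times the complemented bit $\bar y$ on the $\y$ side and $2^{-k}$ times $z$ on the $\z$ side. The exponential weights on the first-half tokens are set to round to $0$ at this precision. The denominator then encodes $\bar\y_{\mm+1:m}+\z_{\mm+1:m}$, which is the all-ones string exactly when the halves are equal, and strictly larger otherwise by the promise. The token at $m+3$ supplies the constant needed to make the denominator strictly positive, replacing the old $2^{-1}$ offset with a zero-cost slot. We then verify that the quotient takes a single representable value $c$ in the equal case, and a different value (or \textsf{float(`inf')}) otherwise. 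A one-hidden-layer MLP with thresholds at $c\pm$ one unit outputs the indicator.

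\textbf{Main obstacle.} The hardest part is the precision bookkeeping: showing that each of the numerator, denominator and quotient is computed exactly with only $\mm$ bits, and that rounding never maps an unequal input onto $c$. In particular, $\mm=p$ bits must hold values as large as $2^{p-1}-1$ after scaling, while the first half has $p$ bits of content. My first attempt would place the duplicated index-$0$ and index-$(m+1)$ tokens so that the leading bit is absorbed by a half-weight split. If that fails, I would instead let the top bit overflow deliberately and check that overflow alone distinguishes the inputs, since when $y_1\ne z_1$ the promise forces $y_1=0,z_1=1$.
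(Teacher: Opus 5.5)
Your skeleton matches the paper's. The numerator compares $\y_{1:\mm}$ with $\z_{1:\mm}$ using an offset token at index $-1$, with the top bit split at half weight over the duplicated tokens. The denominator encodes $\bar{\y}_{\mm+1:m}+\z_{\mm+1:m}$, and a two-node MLP reads off the answer.

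The gap is that the step you defer as the ``main obstacle'' is exactly where the tight bit count is decided, and the concrete choices you do make conflict with how the paper settles it. You never show that $c_N/D_{\mathrm{YES}}$ and $c_N/D'$ round to different $\mm$-bit values when $D'$ exceeds $D_{\mathrm{YES}}$ by one unit. At this precision that is not automatic: e.g.\ with numerator $1$, the quotients $1/(2^{\mm-1}-1)$ and $1/2^{\mm-1}$ both round to $2^{-(\mm-1)}$.

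The paper does not rely on quotient resolution at all. It puts the second-half weights at integer scale $2^{\mm-2},\dots,2^{0}$, so the YES denominator $2^{\mm-1}-1$ is exactly the largest representable number. Every unequal second half is strictly larger by the promise, so it overflows to $+\infty$ and sends the quotient to $0$. Correspondingly, the numerator is normalized to $\y_{1:\mm}-\z_{1:\mm}+1$. The zero token at $m+3$, sitting right after the two copies of $z_1$, re-adds the $2^{\mm-1}-1$ removed at index $-1$; this keeps partial sums in range while $z_2,\dots,z_{\mm}$ are subtracted. A padding token at $2m+3$ then adds $1$. YES yields the positive value $1/(2^{\mm-1}-1)$, and every NO instance yields a value $\le 0$.

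Using the $m+3$ token for a positivity constant in the denominator, as you propose, is unnecessary, since the denominator is already positive. Worse, any such constant that survives rounding would push the YES denominator past the maximum and destroy the overflow test.

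Second, your claim that the first-half exponential weights ``round to $0$'' contradicts your Step 1. By the computation order of \cref{apx:comp}, the weights $e^{\langle Q_{n+1},K_i\rangle}$ are computed and stored before the values are multiplied in. Zero weights would therefore erase the first-half contributions to the numerator. The paper instead keeps these weights small but representable ($2^{-(\mm-1)}$, $2^{-6}$, $2^{-(i-2)}$) and compensates with large value entries. Their total, a constant below $1/2$, vanishes only when the final integer-scale denominator (whose unit is $1$) is rounded. With your fractional weights $2^{-k}$, $k\ge 1$, the YES denominator $1-2^{-(\mm-1)}$ already ends in the smallest representable unit. Nonzero first-half weights, each at least $2^{-(\mm-1)}$, then cannot be rounded away and would corrupt the comparison.

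Finally, a caveat that applies to the paper too: no single shift keeps the partial sum after the $\y$-tokens in range for all inputs. That partial sum takes $2^{\mm}$ distinct values on a common unit, while $\mm$ signed bits offer only $2^{\mm}-1$ multiples of that unit. For example, the paper's $\y_{1:\mm}-(2^{\mm-1}-1)$ reaches $2^{\mm-1}$ at $\y_{1:\mm}=1^{\mm}$, which under the promise forces $\z_{1:\mm}=1^{\mm}$. So your claim that ``every intermediate quantity will stay inside the range'' needs this extreme case handled separately in either version.
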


\begin{proof}[Proof overview.] The construction is similar to the proof of \cref{lem:ub-fixed-pt-eq}, where the numerator of the self-attention checks equality in the first half (whether $\y_{1:\mm} = \z_{1:\mm}$), and then the denominator checks the rest.

For the numerator part, we will choose the embedding such that the final numerator value is equal to one if they are equal. We make the summation value $\sum_{i=-1}^{n+1}e^{\langle Q_{n+1}, K_i\rangle}V_i = \y_{1:\mm} - \z_{1:\mm} + 1$, by ensuring $\sum_{i = -1}^{\mm-1} e^{\langle Q_{n+1}, K_i\rangle}V_i$ gives $\y_{1:\mm}$, and $\sum_{i=m}^{m+\mm} e^{\langle Q_{n+1}, K_i\rangle}V_i$ gives $\z_{1:\mm}$. However, this is not directly possible as before, since $\y_{1:\mm}, \z_{1:\mm}$ use $\mm$ bits, but in $\mm$ bits of fixed-point precision, one bit is allocated to the sign bit, giving us $\mm-1$ positions. 
To circumvent this difficulty, we use  a dummy token in the $(-1)$-th position, such that $e^{\langle Q_{n+1}, K_{-1}\rangle}V_{-1} = -(2^{\mm-1}-1)$, and is used to add a negative shift to the value of $\y_{1:\mm}$ in order to bring it to the permissible range of $\mm$ precision bits.

There is one more difficulty, the value $\y_12^{\mm-1}$ might be infinity while computing $\y_{1:\mm}$. In order to prevent this, we split this value amongst two tokens corresponding to $0$ and $1$ positions, where each of them has the value $e^{\langle Q_{n+1}, K_i\rangle}V_i = \y_12^{\mm-2}$. This will give the first part of the summation of the numerator as $$\sum_{i\in [-1: m]} e^{\langle Q_{n+1}, K_i\rangle}V_i = \y_{1, \mm} - \underbrace{1\ldots 1}_{\mm-1}.$$

A similar shift is added to $-\z_{1:\mm}$, to obtain:
$$\sum_{i\in [m+1: 2m+2]} e^{\langle Q_{n+1}, K_i\rangle}V_i = -\z_{1, \mm} + \underbrace{1\ldots 1}_{\mm-1}.$$

Finally, we  choose the embedding such that $e^{\langle Q_{n+1}, K_{2m+3}\rangle}V_{2m+3} = 1$, such that the value of the numerator will be exactly $1$ if $\y_{1:\mm}=\z_{1:\mm}$, otherwise $\leq 0$ from the promise on $\y$ and $\z$.

For the denominator, again we ensure that $\sum_{i=1}^{\mm}e^{\langle Q_{n+1}, K_{j}\rangle}$ and $\sum_{i=m+1}^{m+\mm} e^{\langle Q_{n+1}, K_{j}\rangle}$ are small constants irrespective of $\y, \z$, and ignored with respect to the summation $\sum_{i=\mm+1}^{m}e^{\langle Q_{n+1}, K_{j}\rangle} + \sum_{i=m+\mm+3}^{2m+2}e^{\langle Q_{n+1}, K_{j}\rangle}$. This gives the value $\bar \y_{\mm+1:m} + \z_{\mm+1:m}$, which is a fixed value for $\phi_{\mm+1:m} = \phi_{m+\mm+1:2m}$, otherwise a larger than that due to the promise.

Summing up, if $\y=\z$, then the numerator will be $1$ and the denominator will be a fixed value $2^{(\mm-1)}-1$, otherwise either the numerator will be non-positive or the denominator a larger value (i.e., \textsf{float(`inf')}), either of which can be detected by the final MLP layer.
\end{proof}

\begin{proof}[Proof of Theorem \ref{thm:tradeoff}]
    We choose $m := 2p-1$. From Lemma \ref{lem:lb}, $\EQ_m$ cannot be computed by a Transformer using $\leq \mm-1 = p-1$ bits for input representation $\Tm^1$, but can be computed by a Transformer using $\mm = p$ bits (Lemma \ref{lem:ub-fixed-pt}).
\end{proof}

\subsection{Overview of Floating-Point Results}
\label{sec:fp}

We prove similar bounds for floating-point precision format with the same function $\EQ_m$. However, the input representation will be different-- instead of each token containing a bit of $\y$ or $\z$, we will allow each token to contain a tuples, where one-element is a tuple of $e-1$ bits from either $\y$ or $\z$ (which will be used to directly create the exponent in the floating-point precision format for the Transformer construction in the upper bound), and one more bit (which will contribute to the mantissa). 

This difficulty arises because, in the floating-point representation, the exponent will be exponentiated twice for computing $e^{\langle Q_i, K_j\rangle}$, and summation over this to yield another floating-point number is difficult as it might lead to overflow in bounded precision format.

 This choice of input tokens containing tuples is a fairly known technique, and have been used in other Transformer based problems, including function evaluation where the indices and values are given as tuples \cite{strobl2025concise},  graph problems where the edges are given as tuples \cite{sanford2024understanding}, higher-order graph Transformers where $k$-tuples of vertices are used \cite{zhou2024theoretical} etc. Indeed, reformatting of inputs is an important technique that can completely alter what a Transformer can learn or represent \cite{zhou2023algorithms}. More details and the complete proofs can be found in \cref{apx:fp}.

\section{Experiments}
\label{sec:exp}

In this section, we experimentally verify the theoretical results of this paper. We conduct experiments on the ability of a quantized one-layer transformer to check Equality. This has two main goals: to show that weights allowing such a transformer to compute equality can be learned, and to demonstrate the empirical degradation of accuracy when our results show that the precision is too low to test for equality.

\paragraph{Experimental Setup.}
A simple input representation is used, where the first $m$ tokens contain each of the corresponding $m$ bits of the first string, the next $m$ tokens contain each of the corresponding $m$ bits of the second string, and the last token contains zero and is the placeholder token where the output is expected.


A one-layer transformer is trained with number of heads $H = 2$ and embedding dimension $d = 4$. Other details of the input generation, transformer architecture, training dynamics, quantization procedures, etc., can be found in \cref{apx:exp_eq}. The experiments were repeated over 10 random seeds.

\paragraph{Experimental Results.} The communication based lower bounds, presented in this paper and the ones introduced by \cite{peng2024limitations}, state that if $H(d+1)p < m$, where $H$ is the number of heads, $d$ is the embedding dimension, and $p$ is the number of bits of precision; then the transformer will not be able to solve equality. We verify this experimentally, where we show that if $H(d+1)p$ is close to $m$, then the transformer does not perform well with the equality task.

The model was initially trained on $\fb$ precision, and the accuracies and standard deviations obtained for $m=100$ was $100.00\pm 0.00$, for $m=50$ was $100.00\pm 0.00$, for $m=30$ was $99.62\pm 0.12$, and for $m=15$ was $99.96\pm 0.12$. The tables report the accuracies and standard deviations for post-training quantization (PTQ) with fixed-point precision formats: INT12, INT8, INT6, INT4, followed by those after quantization aware training (QAT).


\begin{table}[H]
\centering
    \resizebox{\columnwidth}{!}{%
    \begin{tabular}{c|cccc}
        $m$ & INT12 &  INT8 & INT6 & INT4 \\\hline
        $100$	& $94.95 \pm 15.98$	& $94.21 \pm 12.21$	& $71.24 \pm 20.52$	& $58.72 \pm 16.58$\\
$50$	& $97.43 \pm 8.14$	& $99.47 \pm 1.57$	& $88.53 \pm 11.58$	& $74.20 \pm 16.86$\\
$30$	& $99.70 \pm 0.96$	& $98.20 \pm 5.67$	& $85.03 \pm 19.01$	& $64.56 \pm 16.06$\\
$15$	& $99.99 \pm 0.04$	& $100.00 \pm 0.00$	& $86.49 \pm 20.30$	& $72.44 \pm 16.27$
    \end{tabular}
    }
    \caption{Acc.$\pm$SD $(\%)$ after PTQ (fixed-point precision).}
    \label{tab:fx}
\end{table}



\begin{table}[H]
\centering
    \resizebox{\columnwidth}{!}{%
    \begin{tabular}{c|cccc}
        $m$ & INT12 &  INT8 & INT6 & INT4 \\\hline
        $100$ & $94.02 \pm 15.39$ & $92.71 \pm 12.73$ & $68.75 \pm 18.15$ & $59.67 \pm 16.34$\\
        $50$ & $100.00 \pm 0.00$ & $99.75 \pm 0.58$ & $87.99 \pm 18.23$ & $75.76 \pm 19.72$\\
        $30$ & $98.26 \pm 5.50$ & $96.07 \pm 12.41$ & $92.68 \pm 14.73$ & $69.10 \pm 20.60$ \\
        $15$ & $100.00 \pm 0.00$ & $99.88 \pm 0.37$ & $93.59 \pm 8.53$ & $78.34 \pm 17.17$
    \end{tabular}
    }
    \caption{Acc.$\pm$SD $(\%)$ after QAT (fixed-point precision).}
    \label{tab:fx_qat}
\end{table}

Next, we show the loss in accuracy for quantization in floating-point format for both PTQ and after QAT. The same base model from the previous section used, and then we perform quantization to FP16 (5 exponent, 10 mantissa and one sign bit), FP8\_E5M2 (5 exponent bits, 2 mantissa bits and one sign bit) as well as FP8\_E4M3 (4 exponent bits, 3 mantissa bits and one sign bit), which are floating-point precision formats pertaining to the IEEE 754 standards and \cite{micikevicius2022fp8}. The tuples $(a, b)$ in the headings of the tables refer to $a$ bits of exponent and $b$ bits of mantissa, with $\fb$ being a default of $(8, 23)$.

\begin{table}[H]
\centering
    \resizebox{\columnwidth}{!}{%
    \begin{tabular}{c|ccc}
        $m$ &  $(5, 10)$& $(5, 2)$ & $(4, 3)$\\\hline
        $100$  & $91.81 \pm 17.80$ & $60.48 \pm 18.20$ & $70.94 \pm 25.07$\\
        $50$  & $95.02 \pm 15.74$ & $59.25 \pm 17.74$ & $87.55 \pm 14.39$\\
        $30$  & $99.73 \pm 0.85$ & $74.95 \pm 21.05$ & $79.26 \pm 19.99$ \\
        $15$  & $100.00 \pm 0.00$ & $73.36 \pm 17.65$ & $85.70 \pm 17.04$
    \end{tabular}
    }
    \caption{Acc.$\pm$SD $(\%)$ after PTQ  (floating-point precision).}
    \label{tab:eq}
\end{table}



\begin{table}[H]
\centering
    \resizebox{\columnwidth}{!}{%
    \begin{tabular}{c|ccc}
        $m$ &   $(10, 5)$& $(5, 2)$ & $(4, 3)$\\\hline
        100 &  $95.27 \pm 14.95$ & $65.25 \pm 21.84$ & $76.04 \pm 22.73$\\
        50 & $97.75 \pm 7.10$ & $65.06 \pm 23.17$ & $85.39 \pm 20.26$\\
        30 & $98.61 \pm 4.39$ & $74.61 \pm 14.58$ & $86.95 \pm 18.20$\\
        15 & $100.00 \pm 0.00$ & $75.06 \pm 14.99$ & $90.39 \pm 10.76$
    \end{tabular}
    }
    \caption{Acc.$\pm$SD $(\%)$ after QAT (floating-point precision).}
    \label{tab:eq_ft}
\end{table}

Note that having an accuracy $50\%$ is randomly outputting a zero or one. However, these experiments are in sync with our theory results-- INT12 (where $H(d+1)p = 120$) always performs well since all the $m\leq 100$, INT8 (which has $H(d+1)p = 80$) performs slightly worse for $m=100$ but almost perfectly for $m \leq 50$, INT6 (which has $H(d+1)p = 60$) performs quite poorly for $m=100$ but fairly well with around $85\%$ accuracy for $m\leq 50$, INT4 (which has $H(d+1)p = 40$) performs fairly well for $m=15$ but poorly for $m\geq 30$.

Similarly, for fixed-point precision, FP16 (with $H(d+1)p = 160$) performs fairly well in all the tasks, FP8 (with $H(d+1)p = 80$), performs poorly for $m=100$, but fairly well for $m\leq 50$ for FP8\_E4M3, although performance degrades with FP8\_E5M2.

These results verify that equality is difficult to quantize, and formalizes the evidence of accuracy loss with quantization on equality-related tasks in real-life.

\section{Conclusion}

This paper theoretically justifies that expressivity loss is inevitable with quantization. It further aims to develop a class of tasks on which quantization is difficult-- practitioners need to be careful while choosing quantization schemes on these. 

The precision-expressivity trade-offs proved here can be extended beyond $H, d_v =1$, by dividing the strings $\y, \z$ into $H(d_v+1)$ parts, and computing $\EQ(\y, \z)$ in parallel. However, finding such a trade-off for the multi-layer Transformer remains elusive due to the lack of lower-bound techniques. Overcoming this challenge will help design quantization schemes where accuracy will be maintained.




\bibliography{references}
\bibliographystyle{icml2026}

\newpage

\appendix
\onecolumn

\section{Notations and Preliminaries}
\label{apx:prelims}

For integers $n, m$, we use the notation $[n]$ to denote the set $\{1, 2, \ldots, n\}$, $[m: n]$ to denote the set of integers $\{m, m+1, \ldots, n\}$, where $m \leq n$, and $(m, n)$ (resp.~$[m, n]$) to denote the set of real numbers between $m$ and $n$, end points excluded (resp.~end points included).  For a matrix ${A}\in \R^{n\times m}$ and for $i\in [n]$, ${A}_i$ denotes the $i$-th row vector $({A}_{i, 1}, \ldots, {A}_{i, m})\in\R^{m}$. Similarly, for $j\in [m]$, ${A}_{:, j}$ denotes the column vector $({A}_{1, j}, \ldots, {A}_{n, j})\in\R^{n}$.

Strings are emboldened, and for a string $\mathbf{a} \in \{0, 1\}^n$, $\mathbf{a}$ will denote the value of the number whose binary expression is $a_1a_2\ldots a_n$, and $\mathbf{a}_{p:q}$ represents the substring $a_pa_{p+1}\ldots a_q$, for $1 \leq p < q \leq n$.

\subsection{Transformers}
\label{sec:tf}


We define the multi-layer Transformer architecture as follows.
\begin{definition}
    An $L$-layer Transformer is a function $\Sigma^n \rightarrow \Sigma^n$, where $\Sigma$ is an alphabet set and $n$ is the number of tokens, also regarded as the sequence length. It consists of:
    \begin{enumerate}
        \item an embedding function $\mathbb{E} : \Sigma \rightarrow \R^{d}$, where $d$ is the embedding dimension,
        \item several Transformer units for each layer that are functions $ \mathsf{Tf}_i:\R^{n\times d} \rightarrow \R^{n\times d}$ for $i\in [L]$,
        \item and a final output projection function, also known as language modeling head, $f_{LM} : \R^d \rightarrow [0, 1]^{|\Sigma|}$, which give the probabilities of every alphabet from the alphabet set $\Sigma$.
    \end{enumerate}  
    Each Transformer layer consists of two components, a self-attention function $f^i_{SA}:\R^{n\times d}\rightarrow \R^{n\times d_v}$ and an MLP function $f^i_{MLP}: \R^{d_v} \rightarrow \R^d$, where $d_v$ is the value dimension. Now, an $L$-layer Transformer is the composition of the functions:
    \begin{equation*}
        f_{LM}\circ f_{MLP}^L \circ f^L_{SA}\circ \ldots \circ f_{MLP}^1 \circ f_{SA}^1 \circ \mathbb{E} : \rightarrow \Sigma^{n}\rightarrow [0, 1]^{|\Sigma|}.
    \end{equation*}
\end{definition}



The core operation in a Transformer layer is the self-attention function, which is used to compute cross-token interactions. Soft-max self-attention is the most commonly used form of self-attention and is defined as follows.

\begin{definition}[Softmax Self-Attention]
\label{def:soft-SA}
Given parameters $W^Q, W^K \in \R^{d\times d_q}$, and $W^V \in \R^{d\times d_v}$, where $d_q$ is the query dimension and $d_v$ is the value dimension, the soft-max self-attention is a function $\R^{n\times d} \rightarrow \R^{n\times d}$, where the $i$-th row of the output is computed as
    \begin{equation*}
        SA_i = \frac{\sum_{j\in [n]}e^{(X_i W^Q (W^K)^T X_j^T)} X_jW^V}{\sum_{j\in [n]}e^{(X_i W^Q (W^K)^T X_j^T)}},
    \end{equation*}
    for all $i\in [n]$.
\end{definition}

We will commonly refer to $XW^Q$ as the query matrix $Q$, $XW^K$ as the key matrix $K$ and $XW^V$ as the value matrix $V$. Also, $\sum_{j\in [n]}e^{(X_i W^Q (W^K)^T X_j^T)} X_jW^V$ will be called the numerator term and $\sum_{j\in [n]}e^{(X_i W^Q (W^K)^T X_j^T)}$ the denominator term.

We also define the linear self-attention mechanism, which is widely used as a more computationally efficient counterpart of the soft-max self-attention mechanism.

\begin{definition}[Linear Self-Attention]
\label{def:linear-att}
    Given parameters $W^Q, W^K \in \R^{d\times d_q}$, and $W^V \in \R^{d\times d_v}$, the linear self-attention is a function $\R^{n\times d} \rightarrow \R^{n\times d}$, where the $i$-th row of the output is computed as
    \begin{equation*}
        SA_i = \sum_{j\in [n]}X_i W^Q (W^K)^TX_j^T X_j W^V ,
    \end{equation*}
    for $i\in [n]$.
\end{definition}


In our constructions, given tokens $X_1, \ldots, X_n$ we will consider them to contain positional encodings, and the embedding as a function $[n]\times \Sigma \rightarrow \R^d$, where the inputs to the function are the index of the token and the value in $\Sigma$, and the output $f(i, X_i)$ is a vector in $\R^d$. 


\begin{definition}[Final MLP Layer]
    The final MLP layer is a function $f_{MLP}: \R^{d_v} \rightarrow \R^d$, which has $d_v$ input nodes, $d$ output nodes, and $d_{ff}$ nodes in one hidden layer in between them (usually $d_{ff}= 4d$). Each edge from nodes one layer to the next is associated with weights and biases. 
    
    The weights of the $i$-th input node to the $j$-th hidden node is denoted as $w_{i, j}^1$, and the bias of the $j$-th hidden node as $b^1_j$, for $i\in [d_v], j\in [d_{ff}]$. Subsequently, the weight of the $j$-th hidden node to the $k$-th output node is denoted as $w_{j, k}^2$, and the bias of the $k$-th output node as $b^2_k$, for $j\in [d_{ff}], k\in [d]$.

    Given the values of the hidden nodes as $v_1, \ldots, v_{d_v}$, the $j$-th hidden node will have the value $$h_j = \max\{0, \sum_{i\in [d_v]}w^{1}_{i, j}v_i + b_j\},$$
    for $j\in [d_{ff}]$, and the $k$-th output node will have the value
    $$\max\{0, \sum_{j\in [d_{ff}]}w^{2}_{j, k}h_j + b_k\},$$
    for $k\in [d]$.
\end{definition}

\subsection{Communication Complexity}

We use a model of communication with two players Alice and Bob, each of whom have unbounded computational power. In order to solve the problem, Alice has some bits, Bob has some bits and they want to communicate to compute a function known to all. Furthermore, we only allow one-way communication which is communication only from Alice to Bob. Interested readers can see \cite{kushilevitz1997communication} for more background on communication complexity.

\begin{definition}[Fooling sets]
    Given a function $f$, a set $F = \{(x_i, y_i)\, | \, i \in [\ell]\}$ is called a fooling set for $f$ of size $\ell$ if:
    \begin{itemize}
        \item for all $i\in \ell$, $f(x_i, y_i)$ has the same value,
        \item for all $i\neq j$, $i, j\in [\ell]$, either $f(x_i, y_j)$ or $f(x_j, y_i)$ has a value different from $f(x_i, u_i)$.
    \end{itemize}
\end{definition}

\begin{lemma}[Folklore]
    \label{lem:fooling-set}
    If the function $f$ has a fooling set of size $\ell$, then the communication complexity of $f$ is at least $\lceil \log_2 \ell \rceil$.
\end{lemma}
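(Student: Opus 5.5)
The plan is the standard fooling-set argument. Let $F=\{(\mathbf{a},\mathbf{b}): \mathbf{a}=\mathbf{b}\}$, with $\ell$ distinct pairs $(x_i,y_i)$. Every pair in $F$ has the same $f$-value. For $i\neq j$, at least one of $(x_i,y_j)$ or $(x_j,y_i)$ has a different value.

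Fix any deterministic protocol computing $f$. I will show it is a valid fooling-set argument by proving that it must have at least $\ell$ distinct transcripts on the inputs of $F$. The same conclusion then holds for the one-way model defined above, since one-way protocols are a special case.

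The key step is the rectangle property. For each transcript $\tau$, the set of inputs $(x,y)$ producing $\tau$ is a combinatorial rectangle $A_\tau\times B_\tau$. I will prove this by induction on the rounds. At each round, the speaker's next bit depends only on their own input and the history so far. So the set of $x$ consistent with the history shrinks independently of the set of $y$, and the set of $y$ likewise shrinks independently of $x$. In the one-way setting this is immediate: Bob's output is a function of $y$ and Alice's message $M(x)$. Hence if $M(x_i)=M(x_j)$, Bob behaves identically on $(x_i,y)$ and $(x_j,y)$ for every $y$.

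Suppose two distinct fooling pairs $(x_i,y_i)$ and $(x_j,y_j)$ share a transcript $\tau$. Then $x_i,x_j\in A_\tau$ and $y_i,y_j\in B_\tau$. So $(x_i,y_j)$ and $(x_j,y_i)$ both lie in the rectangle and also produce transcript $\tau$. The output is determined by the transcript (in the one-way case, by Bob's view), so all four inputs receive the same output. That output equals the common value $f(x_i,y_i)$, which contradicts the second fooling-set condition.

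It follows that transcripts on $F$ are pairwise distinct, so there are at least $\ell$ of them. A protocol communicating $c$ bits has at most $2^c$ transcripts, where for one-way protocols these are messages in $\{0,1\}^c$. Therefore $2^c\ge \ell$, and since $c$ is an integer, $c\ge\lceil\log_2\ell\rceil$.

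The only delicate step is making the rectangle property precise for the model at hand. The subtlety is the convention of which party announces the output. I would resolve it by taking the output to be a function of the full transcript, or of Bob's input together with the received message, and checking that in either case inputs with the same transcript in the same rectangle receive the same output.
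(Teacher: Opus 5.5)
The paper does not prove this lemma; it cites it as folklore. Your argument is the standard rectangle-based fooling-set proof and it is correct. In the one-way setting the paper actually uses, it reduces to the pigeonhole-on-Alice's-message argument the paper gives in the proof of \cref{lem:cc-eq}.

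Two small fixes to the write-up:
\begin{enumerate}
\item Your first sentence defines $F$ as the diagonal $\{(\mathbf{a},\mathbf{b}):\mathbf{a}=\mathbf{b}\}$. That is the fooling set for equality, not an arbitrary one. Nothing later uses $x_i=y_i$, so just take $F=\{(x_i,y_i)\}_{i\in[\ell]}$ to be an arbitrary fooling set.
\item Your closing paragraph says inputs with the same transcript in the same rectangle receive the same output. That is false when Bob announces the output from $(y,\tau)$. What you need, and what your one-way paragraph already establishes, is the pairing: $(x_i,y_j)$ gets the same output as $(x_j,y_j)$, and $(x_j,y_i)$ gets the same output as $(x_i,y_i)$. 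Both of these equal the common value, which gives the contradiction.
\end{enumerate}
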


\subsection{Computations with Bounded Precision}

In this article, we use a certain kind of precision defined as follows.
\begin{definition}[Fixed-point precision]
\label{def:fixed-p}
    We define the \emph{fixed-point precision} by a triple $\langle s, a, b\rangle$, where $a, b$ are binary expression of integers and $s$ is the sign bit. The value is given by 
    \begin{equation*}
        \langle 0, a, b \rangle = a + 2^{-|b|}b = a \, \cdot \, b\; ,
    \end{equation*}
    and 
    \begin{equation*}
        \langle 1, a, b \rangle = -(a + 2^{-|b|}b) = -a \, \cdot \, b\; .
    \end{equation*}
    We say that this requires $p$ bits of precision if $1+|a| + |b| = p$. 
\end{definition}

Typically, fixed-point precision formats also have a scaling factor, say $\lambda$, associated with them, which brings the final value in the range $[-\lambda(2^{p-1}-1): \lambda (2^{p-1}-1)]$

For addition or multiplication of two numbers, we first find the exact value of the two numbers, and then round off the answer to bring it back to the given precision bounds. Approximation occurs by rounding-off to the nearest number within the permissible set of numbers. Ties can be broken by truncation.

\begin{example}
    To compute the sum of $1\cdot1101$ and $10\cdot111$ using $p=5$, the answer becomes 
    \begin{equation*}
        1\cdot1101 + 10\cdot111 = 100\cdot0011 = 100\cdot 01.
    \end{equation*}
    Similarly, for multiplying $1\cdot 101$ with $10\cdot 11$ using $p=4$, we get
    \begin{equation*}
        1\cdot 101\times 10\cdot 11 = 100\cdot 01111 = 100\cdot 1.
    \end{equation*}
\end{example}

Note that in this model, $|a|, |b|$ do not need to be fixed separately, which gives it more representational power. When we perform iterated addition or multiplication, we use \emph{left associativity}, i.e., given $a+b+c$ (resp.~$a\times b\times c$), we compute it by first computing $(a +b)$ (resp.~$a\times b$), find the approximation to the $p$ most significant bits, and then add (resp.~multiply) that value to $c$, followed by another approximation to $p$ most significant bits. For summing $n$ elements, we iteratively perform this as $$a_1+a_2+\ldots +a_n = ((((a_1+a_2)+a_3) + \ldots) +a_n),$$
and multiplication is also 
$$a_1\times a_2\times \ldots\times a_n = ((((a_1\times a_2)\times a_3) \times \ldots) \times a_n).$$

\begin{example}
    We show why left associativity is important, as consider $10\cdot0$, $1\cdot01$, $1\cdot11$ with $p = 3$. Now, $((10\cdot0+1\cdot01)+1\cdot11) = (11+1\cdot11) = 100$, while $(10\cdot0+(1\cdot01+1\cdot11)) = (10\cdot0+11\cdot0) = 101$.
\end{example}

Similarly, for multiplication, given $a, b$, in order to compute $a\times b$, we first compute the exact value, and then consider the $p$ significant bits in this precision format.

This precision format is directly inspired from communication complexity. Our lower bounds are in terms of Alice sending bounded precision numbers to Bob. For this fixed-point precision format, Alice sending each number to Bob is equivalent to her sending 3 consecutive messages to Bob, the first being the sign bit, then the number before the decimal, and  finally  the number after the decimal.

We also define the floating-point precision format as follows.

\begin{definition}[Floating-point precision]
    \label{def:float-p}
    We define the \emph{floating-point precision} by a quadruple $\langle s_a, a, s_b, b\rangle$, where $a, b$ are binary expression of integers and $s_a, s_b$ are the corresponding sign bits. The value represented by this is given by 
    \begin{equation*}
        \langle s_a, a,s_b, b \rangle = (-1)^{s_a}\times 1\cdot a \times 2^{(-1)^{s_b}\times b}.
    \end{equation*}
    We say that this requires $p$ bits of precision if $1+|a| + 1+ |b| = p$. 
\end{definition}

Here, $1\cdot a$ is called the \emph{mantissa} and $b$ is called the \emph{exponent}.



Again, addition and multiplication of two numbers are first computed exactly, and then rounded off to the desired format of the floating point precision. Both addition and multiplication of more than two numbers follow left-associativity.

\subsection{Self-attention Computation with Bounded Precision}
\label{apx:comp}

In the bounded precision, we consider self-attention to be computed using left-associativity for both addition and multiplication. Denoting $Q:= XW^Q$ as the query matrix and $K := XW^K$ as the key matrix, the output of self-attention can be seen as the value 
\begin{equation*}
    D^{-1} A XW^V,
\end{equation*}
where $A = e^{QK^T} \in \R^{n\times n}$, $D$ is a diagonal matrix in $\R^{n\times n}$ such that $D_{i, i} = A_i \mathbf{1}_n$. The computation is performed as:
\begin{enumerate}
    \item First compute $Q := XW^Q$, $K = XW^K$ in bounded precision regime. For $i\in [n], j\in [d_q]$, this is performed as:
    \begin{equation*}
        Q_{i, j} = \sum_{k\in [d]} X_{i, k}W^{Q}_{k, j},
    \end{equation*}
    where each of $X_{i, k}W^{Q}_{k, j}$ is first computed in bounded precision regime, and then the summation is performed using left associativity. A similar thing is done to compute $K$.
    \item Similarly, we compute $QK^T \in \R^{n\times n}$.
    \item Subsequently, we compute $A := e^{QK^T} \in \R^{n\times n}$.
    \item After this, we compute the normalizing term $D$ from $A$.
    \item We then matrix multiply as:
    \begin{equation*}
        D^{-1}A XW^V = D^{-1}((AX)W^V),
    \end{equation*}
    { i.e., first compute $A$, store it and then compute $AX$, followed by computing $(AX)W^V$ which gives the numerator of the softmax, followed by computing $D$ from $A$} to give the final output matrix. Here, $((AX)W^V)$ is the numerator term and $D$ is the denominator term. 
\end{enumerate}

\section{Proofs of Section \ref{sec:main-results}}
\label{apx:proof-fx}

\subsection{Proofs of Tradeoff with Simple Equality}
\label{apx:proof-fx-eq}

We repeat the input representation again for the reader's convenience:
\begin{enumerate}
        \item $\Tm^0(\y, \z)_{i} = y_{i}$ for $i \in [m]$.
        \item $\Tm^0(\y, \z)_{i+m} = z_{i}$ for $i\in [m]$.
        \item $\Tm^0(\y, \z)_{i} = 0$ for all $i \in  [2m+1: n]$. 
    \end{enumerate}

We fix $m:= 2p-1$.

With this notation, we prove the lower bound using communication complexity. First, we prove a lower bound on $\EQ_m$ using communication complexity, and use that to find a lower bound on the required size of a Transformer that solves $\EQ_m$ using the above input representation.

\begin{lemma}[Communication Lower Bound]
    \label{lem:cc-eq}
    If Alice has $\y$ and Bob has $\z$, with $\y, \z \in \{0, 1\}^m$ satisfying the promise $\y\leq \z$, in the one-way communication model where only Alice can send messages to Bob, Alice needs to send at least $m$ bits to Bob in order for Bob to compute the function $g(\y, \z)$. 
\end{lemma}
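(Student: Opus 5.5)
We will prove the statement with a fooling-set argument and then invoke \cref{lem:fooling-set}. Here $g$ is $\EQ_m$ restricted to the promise $\y\le\z$ (where $\y,\z$ are read as binary numbers), and $g$ is a don't-care on pairs outside the promise. Our candidate fooling set is the diagonal
\[
F=\{(\mathbf{a},\mathbf{a}) : \mathbf{a}\in\{0,1\}^m\},
\]
which has size $2^m$. Each pair trivially satisfies the promise, since $\mathbf{a}\le\mathbf{a}$, and $g(\mathbf{a},\mathbf{a})=1$ for every $\mathbf{a}$. So the first condition of a fooling set holds.

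For the second condition, take distinct $\mathbf{a},\mathbf{b}$ and assume without loss of generality that $\mathbf{a}<\mathbf{b}$ as binary numbers. Then the cross pair $(\mathbf{a},\mathbf{b})$ satisfies the promise and $g(\mathbf{a},\mathbf{b})=0\neq 1$. The other cross pair $(\mathbf{b},\mathbf{a})$ violates the promise, so its value is unconstrained. This is exactly why we only need one of the two cross pairs to be fooled. The fooling-set definition asks for ``either $f(x_i,y_j)$ or $f(x_j,y_i)$'', so the promise version still works: any protocol correct on promise inputs must distinguish these cases.

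We then conclude as follows. Suppose two inputs $\mathbf{a}\ne\mathbf{b}$ produced the same message from Alice. Then Bob, holding $\mathbf{b}$, would give the same answer on $(\mathbf{a},\mathbf{b})$ and on $(\mathbf{b},\mathbf{b})$. Both pairs satisfy the promise, but their correct outputs are $0$ and $1$, which is a contradiction. Hence Alice's map from inputs to messages is injective on $\{0,1\}^m$. This gives at least $2^m$ distinct messages, i.e.\ at least $\lceil\log_2 2^m\rceil=m$ bits, as in \cref{lem:fooling-set}.

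The main obstacle is making sure the folklore fooling-set lemma applies to a partial (promise) function. Rather than rely on it blindly, we will include the direct injectivity argument above for the one-way model. That argument uses only the promise-satisfying cross pair $(\mathbf{a},\mathbf{b})$ with $\mathbf{a}<\mathbf{b}$, which avoids the issue.
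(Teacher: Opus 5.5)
Your proof is correct and is essentially the paper's argument: the paper assumes at most $m-1$ bits and uses pigeonhole to find two strings with the same message from Alice, which is exactly your injectivity argument, so the fooling-set framing is only a change of packaging. One point in your favor: you correctly give Bob the larger string $\mathbf{b}$, so the cross pair $(\mathbf{a},\mathbf{b})$ satisfies the promise. The paper's write-up instead has Bob hold the smaller string $\y^{(1)}$, which taken literally lands on the promise-violating pair $(\y^{(2)},\y^{(1)})$ and yields no contradiction.
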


\begin{proof}
    Let us assume that the communication complexity of this problem (equal to the number of bits sent by Alice to Bob), is at most $m-1$. Since the number of possible $\y$'s is $2^m$, by pigeon hole principle, there exists a single message sent by Alice to Bob which corresponds to at least $\frac{2^m}{2^{m-1}} = 2$ distinct values of $\y$.

    Let these two values of $\y$ be $\y^{(1)}$ and $\y^{(2)}$. Assuming without loss of generality that $\y^{(1)} < \y^{(2)}$, if Bob has the string $\y^{(1)}$, then the correct answer can be either $1$ or $0$ for the same transcript sent by Alice. This is a contradiction, which implies the communication complexity is $\geq m$.
\end{proof}

Using this communication lower bound, we can prove Lemma \ref{lem:lb-eq}, which is restated as follows.
    
\LBEQ*
\begin{proof}
    Consider a Transformer with precision $p$ and value dimension $d_v = 1$ having one head, that correctly computes $\EQ_m$ in $\Tm^0$ representation. We use this Transformer to create a communication protocol that uses $2p$ bits of communication for computing the equality function, $\EQ_m$. Subsequently, we will prove that any communication protocol solving $\EQ_m$ requires at least $m$ bits of communication, which would then imply $p\geq \mm$ for a Transformer to solve $\EQ_m$. 
    
    \paragraph{Transformer to communication.} Recalling the problem definition, we have $n+1$ tokens. We devise the communication problem such that Alice has the contiguous set of tokens $\{\Tm^0(\y, \z)_{-1},\Tm^0(\y, \z)_0, \Tm^0(\y, \z)_1, \ldots, \Tm^0(\y, \z)_{m}\}$ (note that these tokens only depend on $\y_1, \dots, \y_{m}$), and Bob has the rest (which depend on $\z$). For ease of notation, we define the contiguous set $S$ as the set of tokens with Alice.

    Since there is a Transformer computing $\EQ_m$ in the $\Tm^0$ representation, Alice and Bob can find the weight matrices of this Transformer separately. If there are multiple such Transformers, they just consider the lexicographically least setting of weights (a specific Transformer which they agree upon). The communication proceeds as follows:
    \begin{enumerate}
        \item Alice computes the values of 
        \begin{equation*}
            r_{n+1, j} = e^{ X_{n+1}W^Q (W^K)^T X_j },
        \end{equation*}
        for all $j\in S$ using left-associative multiplication.
        \item Bob computes the values of 
        \begin{equation*}
            r_{n+1, j} = e^{X_{n+1}W^Q (W^K)^T X_j},
        \end{equation*}
        for all $j\in [n+1]\backslash S$ using left-associative multiplication.
        \item Next, Alice sends Bob the values of 
        \begin{equation*}
            \begin{split}
                L_1 & = \sum_{j\in S}r_{n+1, j},\\
                L_2 & = \sum_{j\in S}r_{n+1, j}X_jW^V,
            \end{split}
        \end{equation*}
        using $p$ bits of precision each maintaining left-associative addition.
        \item Bob computes the value of 
        \begin{equation*}
            L = \frac{L_2 + \sum_{j\in [n+1]\backslash S}r_{n+1, j}V_j}{L_1 + \sum_{j\in [n+1]\backslash S}r_{n+1, j}},
        \end{equation*}
        approximated upto $p$ bits of precision.
        \item Then, Bob computes the value of $L$ applied to the final MLP layer, and outputs $1$ if the output of the first entry of the MLP is $1$, $0$ otherwise.

    \end{enumerate}

    The communication complexity of this protocol is $2p$.

    \paragraph{Wrapping up.} Therefore, we need $2p \geq m$ as there exists a communication protocol of cost $2p$, which implies $p \geq \mm$. 
\end{proof}

We state the upper bound theorem again.
\UBEQ*
\begin{proof} 
We give the construction of a Transformer that uses precision $p=\mm+1$ to compute $\EQ_m$ with input representation $\Tm^0$. 

\paragraph{Embedding.} We define the embedding function $\pe: [n+1]\times\{0, 1\}\rightarrow \R^{d}$, which takes as input the value of the token and the position index of the token, where $d_q=3$ is the embedding dimension,  such that the $i$-th token is given as:

\begin{table}[H]
\resizebox{\columnwidth}{!}{%
\begin{tabular}{c|c|c|c}
    Step & $i$ & $\Tm^0(\y, \z)_i = 0$ & $\Tm^0(\y, \z)_i = 1$ \\\hline
    1 & $i \in[6]$ & $\begin{bmatrix}
        1 && -6  && 0
    \end{bmatrix}$ & $\begin{bmatrix}
        1 && -6  && -2^{7-i}
    \end{bmatrix}$\\
    2 & $i\in [7:\mm]$ & $\begin{bmatrix}
        1 && -(i-2)  && 0
    \end{bmatrix}$ & $ \begin{bmatrix}
        1 && -(i-2)  && -1
    \end{bmatrix}$\\
    3 & $i\in [\mm+1: m]$ & $\begin{bmatrix}
                1 && (2\mm-i-1)  && 0
            \end{bmatrix}$  & $\begin{bmatrix}
                1 && -N  && 0
            \end{bmatrix}$\\
    4 & $i \in [m+1: m+6]$ & $\begin{bmatrix}
        1 && -6  && 0
    \end{bmatrix}$ & $\begin{bmatrix}
        1 && -6  && 2^{7+m-i}
    \end{bmatrix}$\\
    5 & $i\in [m+7:m+\mm + 1]$ & $\begin{bmatrix}
        1 && -(i-m-4)  && 0
    \end{bmatrix}$ & $ \begin{bmatrix}
        1 && -(i-m-4)  && 1
    \end{bmatrix}$\\
    6 & $i \in [m+\mm+1:2m]$ & $\begin{bmatrix}
                1 && -N  && 0
            \end{bmatrix}$ & $\begin{bmatrix}
                1 && (m+2\mm-i+1)  && 0
            \end{bmatrix}$ \\
    7 & $i = 2m+1$  & $\begin{bmatrix} 
        1 && -1 && 2-2^{-(\mm-2)}
    \end{bmatrix}$ & NA\\
    8 & $i \in [2m+2: n-1]$ & $\begin{bmatrix}
        1 && -N && 0
    \end{bmatrix}$ & NA
\end{tabular}%
}
\label{tab:pe-eq}
\caption{Embedding function.}
\end{table}

    We choose $N$ as a large enough number, whose exponentiation after negation will be rounded off to zero in bounded precision computations. These notions are commonly used in programming languages, and Python has the value \textsf{float(`inf')} (i.e., $2^N$ is \textsf{float(`inf')}).

    \paragraph{Construction of the Weights.} We define the query weights $W^Q \in \R^{d\times d_q}$, for query dimension  $d_q = 1$, as,
    \begin{equation*}
        W^Q = \ln 2\begin{bmatrix}
             1  \\
             0  \\
             0   
        \end{bmatrix},
    \end{equation*}
    the key weights $W^K\in \R^{d\times d_q}$, as,
    \begin{equation*}
        W^K = \begin{bmatrix}
              0 \\
              1 \\
              0 
        \end{bmatrix},
    \end{equation*}
    and the value weights $W^V \in \R^{d\times d_v}$, for value dimension $d_v = 1$, as,
    \begin{equation*}
        W^V = \begin{bmatrix}
            
            0 \\
            0 \\
            2^{\mm-1}
        \end{bmatrix}.
    \end{equation*}

    Now, we analyze the value of the self-attention output for YES and NO instances by separately looking at the numerator term and the denominator term.
    
    \paragraph{Numerator.} From the above construction of the embeddings and the weight matrices, we first use left-associative multiplication to first multiply $X_{n+1}$ with $W^Q$, followed by $(W^K)^T$, followed by $X_i^T$, after which, we perform exponentiation and multiply it to $X_i$, then $W^V$.

    With these values, the summation in the numerator $\sum_i e^{\langle Q_{n+1}, K_i\rangle}X_iW^V$ (following the low-bit computations for self-attention conventions defined in \cref{apx:comp}) after Step 1, 2, becomes $ 2^{\mm-1}(-y_1) + \ldots + 2^0 (-y_{\mm}) = -\y_{1:\mm}$ which is a number within the permissible range. In Step 3, the values from $X_iW^V$ are zero, which gives $\sum_{i\in [\mm+1, m]} e^{\langle Q_{n+1}, K_i\rangle}X_iW^V = 0$. 

    After this, values from $\z$ are added. Steps 4, 5 gives the sum of $\sum_i e^{\langle Q_{n+1}, K_i\rangle}V_i$ as $2^{\mm-1}z_1 + \ldots + 2^0 z_{\mm} = \z_{1:\mm}$, and Step 6 adds $0$. Finally, Step 7 gives the value $2^{\mm}-1$. 

    Therefore, if $\y_{1:\mm} = \z_{1:\mm}$, then the value of the numerator is $2^{\mm}-1$, otherwise, it is a strictly larger value, which is not representable in this precision format, and thus becomes infinity (\textsf{float(`inf')}).

    Here, the fixed-point scaling constant is 1.

    \paragraph{Denominator.} The denominator is given by $\sum_{i=1}^{n+1}e^{ \langle Q_{n+1}, K_i\rangle }$. After Steps 1, 2, this becomes 
    \begin{equation}
    \label{eqn:0}
        \begin{split}
            \frac{1}{2^{\mm-1}} +  \frac{7}{2^6} +  \sum_{j=6}^{\mm-1} \frac{1}{2^{j}} = \frac{9}{2^6}.
        \end{split}
    \end{equation}

    Step 3 yields the number $\bar y_{\mm+1}\ldots \bar y_m = \bar \y_{\mm+1:m}$, and when added to the value from Steps 1, 2, results in the value of Equation \ref{eqn:0} being ignored due to rounding off. 

    Again, Steps 4, 5 give the same value as Equation \ref{eqn:0}, which is again rounded off, and finally Step 6 gives $ z_{\mm+1}\ldots  z_m = \z_{\mm+1:m}$. Finally, Step 7 gives $2^{-1}$, and the rest are ignored.
    
    If $\y_{\mm+1:m} = \z_{\mm+1:m}$, the value of the denominator is $\bar \y_{\mm+1:m} + \z_{\mm+1:m} + 2^{-1} = 2^{-1}(2^{\mm}-1)$ (since $m$ is odd and $m-\mm = \mm-1$). 
    
    Otherwise, if $\y_{\mm+1:m} < \z_{\mm+1:m}$, we show that the value of the denominator will be strictly greater than $2^{\mm-1}$. Since the denominator is $\bar \y_{\mm+1:m} + \z_{\mm+1:m}$, this can be written as $\bar \y_{\mm+1:m} + \y_{\mm+1:m} - \y_{\mm+1:m} + \z_{\mm+1:m} \geq \bar \y_{\mm+1:m} + \y_{\mm+1:m}+1 = 2^{\mm-1}$.
    
    Therefore, the denominator becomes $2^{-1}(2^{\mm}-1)$ when $\y_{\mm+1:m} = \z_{\mm+1:m}$, otherwise a larger value.

    Here, the fixed-point scaling constant is $2^{-1}$.

    \paragraph{Wrapping up.} For the YES instance, the value of the numerator is $2^{\mm}-1$, and the denominator is $2^{-1}(2^{\mm-1}-1)$, which gives the output of the self-attention as $1/2$. 
    
    For the NO instance, either the numerator will be infinity, or the numerator will be $(2^{\mm}-1)$ and the denominator will be larger than $2^{-1}(2^{\mm}-1)$. We now show that this change can be detected in $\mm+1$ bits of fixed-point precision.

    Let us assume the denominator is $x$ larger than the expected value $2^{-1}(2^{\mm}-1)$. Therefore, the output of self-attention will be
    \begin{equation*}
        \begin{split}
            \frac{(2^{\mm}-1)}{2^{-1}(2^{\mm}-1) + x} & = 2\left(1 + \frac{2x}{2^{\mm}-1}\right)^{-1}\\
            & \approx 2\left(1 + \frac{2x}{2^{\mm}}(1 + \frac{1}{2^{\mm}} + (\frac{1}{2^{\mm}})^2 + \ldots )\right)^{-1}\\
            & \approx 2\left( 1 + \frac{x}{2^{\mm-1}} \right)^{-1} \\
            & \approx 2\left( 1 - \frac{x}{2^{\mm-1}} + (\frac{x}{2^{\mm-1}})^{2} - \ldots \right),
        \end{split}
    \end{equation*}
    which is a number that can be distinguised from $2$ using $\mm+1$ bits of precision.

    \paragraph{MLP layer.} We construct a simple MLP, $\R \rightarrow \R$, that returns $1$ when the given input is $2$, otherwise 0. This has only one hidden layer, which has only two nodes. Using weights $$w^{1}_{1, 1} = -1, b^1_1 = 1/2,$$ and, $$w^{1}_{1, 2}=1, b^1_2 = -1/2,$$ we can check, using $\RELU$, if the output of the soft-max is exactly $2$. For the next layer, with weights $$w^{2}_{1,1} = -2^{\mm}, w^{2}_{2, 1} = -2^{\mm}, b^2_1 = 1,$$ the output after $\RELU$ will be exactly $1$ for YES instances and $0$ for NO instances.\footnote{This works even when the value is \textsf{float(`inf')}.}
\end{proof}

\subsection{Tight Tradeoff for Fixed-Point Precision}
\label{apx:rem-proofs}

In this section, we achieve a tight tradeoff with precision. For the reader's convenience, we repeat the input representation.

The input representation $\Tm^1 : \{0, 1\}^{2m}\rightarrow \{0, 1\}^n$ is defined as follows:
        \begin{enumerate}
        \item $\Tm^1(\y, \z)_{-1} = 0$.
        \item $\Tm^1(\y, \z)_0 = y_1$.
        \item $\Tm^1(\y, \z)_{i} = y_{i}$ for $i \in [m]$.
        \item $\Tm^1(\y, \z)_{i} = z_1$ for $i \in \{m+1, m+2\}$.
        \item $\Tm^1(\y, \z)_{m+3} = 0$.
        \item $\Tm^1(\y, \z)_{i} = z_{i-m-2}$ for $i\in [m+4: 2m+2]$.
        \item $\Tm^1(\y, \z)_{i} = 0$ for all $i \in  [2m+3: n-2]$. 
    \end{enumerate}

    The input tokens can be seen as:
    \begin{equation*}
        0,y_1,y_1,y_2,\ldots,y_m,z_1,z_1,0,z_2,z_3,\ldots,z_m,0,0,\ldots,0.
    \end{equation*}

    We fix $m := 2p-1$, and prove the lower and upper bound for \cref{thm:tradeoff-main}.

\LBEQFX*

\begin{proof}
    This proof is exactly same as that of Lemma \ref{lem:lb-eq}, where Alice's set $S$ is now the set of tokens $\{-1, 0, \ldots, m\}$, and Bob has the rest.
\end{proof}

We state the upper bound theorem again.

\UBEQFX*

\begin{proof} 
We give the construction of a Transformer that uses precision $p=\mm$ to compute $T_m$ at an input $\Tm^3(\y, \z)$. 

\paragraph{Embedding.} We define the embedding function $\pe: [n]\times\{0, 1\}\rightarrow \R^{d}$, where $d=3$ is the embedding dimension,  such that the $i$-th token is given as:

\begin{table}[H]
\resizebox{\columnwidth}{!}{%
\begin{tabular}{c|c|c|c}
    Step & $i$ & $\Tm^1(\y, \z)_i = 0$ & $\Tm^1(\y, \z)_i = 1$ \\\hline
    1 & $i = -1$ & $\begin{bmatrix}
        1 && -(\mm-1) && -(2^{\mm-1}-1)
    \end{bmatrix}$ & NA\\
    2 & $i \in \{0, 1\}$ & $\begin{bmatrix}
        1 && -6  && 0
    \end{bmatrix}$ & $\begin{bmatrix}
        1 && -6  && 2^6
    \end{bmatrix}$\\
    3 & $i =2$ & $\begin{bmatrix}
        1 && -6  && 0
    \end{bmatrix}$ & $\begin{bmatrix}
        1 && -6  && 2^6
    \end{bmatrix}$\\
    4 & $i =3$ & $\begin{bmatrix}
        1 && -6  && 0
    \end{bmatrix}$ & $\begin{bmatrix}
        1 && -6  && 2^5
    \end{bmatrix}$\\
    5 & $i =4$ & $\begin{bmatrix}
        1 && -6  && 0
    \end{bmatrix}$ & $\begin{bmatrix}
        1 && -6  && 2^4
    \end{bmatrix}$\\
    6 & $i =5$ & $\begin{bmatrix}
        1 && -6  && 0
    \end{bmatrix}$ & $\begin{bmatrix}
        1 && -6  && 2^3
    \end{bmatrix}$\\
    7 & $i =6$ & $\begin{bmatrix}
        1 && -6  && 0
    \end{bmatrix}$ & $\begin{bmatrix}
        1 && -6  && 2^2
    \end{bmatrix}$\\
    8 & $i\in [7:\mm]$ & $\begin{bmatrix}
        1 && -(i-2)  && 0
    \end{bmatrix}$ & $ \begin{bmatrix}
        1 && -(i-2)  && 1
    \end{bmatrix}$\\
    9 & $i\in [\mm+1: m]$ & $\begin{bmatrix}
                1 && (2\mm-i-1)  && 0
            \end{bmatrix}$  & $\begin{bmatrix}
                1 && -N  && 0
            \end{bmatrix}$\\
    10 & $i \in \{m+1, m +2\}$ & $\begin{bmatrix}
        1 && -6  && 0
    \end{bmatrix}$ & $\begin{bmatrix}
        1 && -6  && -2^6
    \end{bmatrix}$\\
    11 & $i = m+3$ & $\begin{bmatrix}
        1 && -(\mm-1) && (2^{\mm-1}-1)
    \end{bmatrix}$ & NA \\
    12 & $i =m+4$ & $\begin{bmatrix}
        1 && -6  && 0
    \end{bmatrix}$ & $\begin{bmatrix}
        1 && -6  && -2^6
    \end{bmatrix}$\\
    13 & $i =m+5$ & $\begin{bmatrix}
        1 && -6  && 0
    \end{bmatrix}$ & $\begin{bmatrix}
        1 && -6  && -2^5
    \end{bmatrix}$\\
    14 & $i =m+6$ & $\begin{bmatrix}
        1 && -6  && 0
    \end{bmatrix}$ & $\begin{bmatrix}
        1 && -6  && -2^4
    \end{bmatrix}$\\
    15 & $i =m+7$ & $\begin{bmatrix}
        1 && -6  && 0
    \end{bmatrix}$ & $\begin{bmatrix}
        1 && -6  && -2^3
    \end{bmatrix}$\\
    16 & $i =m+8$ & $\begin{bmatrix}
        1 && -6  && 0
    \end{bmatrix}$ & $\begin{bmatrix}
        1 && -6  && -2^2
    \end{bmatrix}$\\
    17 & $i\in [m+9:m+\mm + 2]$ & $\begin{bmatrix}
        1 && -(i-m-4)  && 0
    \end{bmatrix}$ & $ \begin{bmatrix}
        1 && -(i-m-4)  && -1
    \end{bmatrix}$\\
    18 & $i \in [m+\mm+3:2m+2]$ & $\begin{bmatrix}
                1 && -N  && 0
            \end{bmatrix}$ & $\begin{bmatrix}
                1 && (m+2\mm-i+1)  && 0
            \end{bmatrix}$ \\
    19 & $i = 2m+3$  & $\begin{bmatrix} 
        1 && -(\mm-1) && 1
    \end{bmatrix}$ & NA\\
    20 & $i \in [2m+4: n-1]$ & $\begin{bmatrix}
        1 && -N && 0
    \end{bmatrix}$ & NA
\end{tabular}%
}
\label{tab:pe-1}
\caption{Embedding function}
\end{table}

    Again, $N$ as a large enough number as was chosen in the proof of Lemma \ref{lem:ub-fixed-pt-eq}.

    \paragraph{Construction of the Weights.} We define the query weights $W^Q \in \R^{d\times d_q}$, for query dimension  $d_q = 1$, as,
    \begin{equation*}
        W^Q = \ln 2\begin{bmatrix}
             1  \\
             0  \\
             0   
        \end{bmatrix},
    \end{equation*}
    the key weights $W^K\in \R^{d\times d_q}$, as,
    \begin{equation*}
        W^K = \begin{bmatrix}
              0 \\
              1 \\
              0 
        \end{bmatrix},
    \end{equation*}
    and the value weights $W^V \in \R^{d\times d_v}$, for value dimension $d_v = 1$, as,
    \begin{equation*}
        W^V = \begin{bmatrix}
            
            0 \\
            0 \\
            2^{\mm-2}
        \end{bmatrix}.
    \end{equation*}

    Now, we analyze the value of the self-attention output for YES and NO instances by separately looking at the numerator term and the denominator term.
    
    \paragraph{Numerator.} Again, we compute the self-attention in bounded-precision (\cref{apx:comp}). From the above construction of the embeddings and the weight matrices, 
    the summation in the numerator $\sum_i e^{\langle Q_{n+1}, K_i\rangle}V_i$ after Step 1, 2, becomes $ - \underbrace{1\ldots 1}_{\mm-1}+y_1 \underbrace{0\ldots 0}_{\mm-1},$ which is a number within the permissible range. After Steps  3-8, the number $y_{2}\ldots y_{\mm}$ is added to it, which gives the answer as $$y_1y_2\ldots y_{\mm} - \underbrace{1\ldots 1}_{\mm-1}.$$ This number is within the permissible range of fixed-point precision. 

    Step 9 does not add anything, and after this, values from $\z$ are added. First, Step 10 subtracts $2^{\mm-2}\times z_1$ twice to this. Here, if $y_1 = z_1$, then the number stays in the permissible range, otherwise if $y_1 = 0$ and $z_1 = 1$, this becomes $-\textsf{float(`inf')}$ ($y_1 = 1, z_2=0$ is not possible according to the promise). Moving forward, we can assume $y_1 = z_1$, as otherwise, we will see that the MLP layer in the end can check if this does not happen.

    In Steps 11 to 17, the value of the summation will be 
    \begin{equation*}
        \underbrace{1\ldots 1}_{\mm-1} - z_2\ldots z_{\mm},
    \end{equation*}
    Step 18 givs $0$ and Step 19 gives $1$.

    Therefore, the total value of the numerator is $1$ if $\y_{1:\mm} = \z_{1:\mm}$, otherwise it will be negative. The scaling factor is $1$, as before.

    \paragraph{Denominator.} The denominator is given by $\sum_{i=1}^{n+1}e^{ X_{n+1}W^Q (W^K)^TX_i^T }$. Step 1 will add $2^{-(\mm-1)}$, and after Steps 2-8 this becomes 
    \begin{equation}
    \label{eqn:1}
        \begin{split}
            \frac{1}{2^{\mm-1}} +  \frac{7}{2^6} +  \sum_{j=6}^{\mm-1} \frac{1}{2^{j}} = \frac{9}{2^6}.
        \end{split}
    \end{equation}

    Step 9 is equal to the number $\bar y_{\mm+1}\ldots \bar y_m$, and when added to the result from Steps 1-8 results in the value of Equation \ref{eqn:1} being ignored due to rounding off. 

    Again, Steps 10 to 17 gives the same value as Equation \ref{eqn:1}, which is again rounded off. Next, Step 18 is the value $ z_{\mm+1}\ldots \bar z_m$. 
    
    If $\y = \z$, the value of the denominator is $(2^{\mm-1}-1) = \underbrace{1\ldots 1}_{\mm-1}$. Otherwise, there will be some minimum $i_0$ such that 
    $y_i = z_i$ for $i < i_0$ and $y_{i_0} = 0, z_{i_0} = 1$. In this case, from Steps 9 and 18, the value will be:
    \begin{equation*}
        \sum_{i < i_0} \bar y_{\mm+i}2^{\mm-1-i} + 2^{\mm-1-i_0} + \ldots + \sum_{i < i_0} z_{\mm+i}2^{\mm-1-i} + 2^{\mm-1-i_0} + \ldots,
    \end{equation*}
    where the ignored terms are all positive. Since $y_i = z_i$ for $i < i_0$, this number becomes larger than $(2^{\mm-1}-1)$, which is \textsf{float(`inf')} in the given precision format. Therefore, the denominator becomes $2^{\mm-1}-1$ when $\y_{\mm+1:m} = \z_{\mm+1:m}$, otherwise \textsf{float(`inf')}. This again has a fixed-point scaling equal to $1$.

    \paragraph{Wrapping up.} For the YES instance, the value of the numerator is $1$ (if it is 0 or negative we will instantly reject), and the denominator is $2^{\mm-1}-1$, which gives the output of the self-attention as $2^{-(\mm-1)}$. 
    
    For the NO instance, either the numerator will be non-positive, or the numerator will be $1$ and the denominator will be \textsf{float(`inf')}, in which case, the softmax will be either negative or $0$.

     This can be separated from $\frac{1}{2^{\mm-1}}$ with $\mm$ bits of precision.

    \paragraph{MLP layer.} We construct a simple MLP, $\R \rightarrow \R$, that returns $1$ when the given input is $2^{-(\mm-1)}$, otherwise 0. This has only one hidden layer, which has only two nodes. Using weights $$w^{1}_{1, 1} = -1, b^1_1 = 1/2^{\mm-1},$$ and, $$w^{1}_{1, 2}=1, b^1_2 = -1/2^{\mm-1},$$ we can check, using $\RELU$, if the output of the soft-max is exactly $1/2$. For the next layer, with weights $$w^{2}_{1,1} = -2^{\mm-1}, w^{2}_{2, 1} = -2^{\mm-1}, b^2_1 = 1,$$ the output after $\RELU$ will be exactly $1$ for YES instances and $0$ for NO instances.
\end{proof}

\section{Tradeoff with Floating-Point Precision}
\label{apx:fp}

In this section, we prove the remaining results.

We first show a tight tradeoff for linear self-attention Transformers (see Definition \ref{def:linear-att}), which are a simplified version of the usual self-attention Transformer. They have been widely used in theoretical analyses of Transformers \cite{keles2023computational,ustaomeroglu2025theoretical,ahn2023linear,schlag2021linear}. This simplified structure, even though less expressive, sometimes tolerable loss in accuracy. Yet, they are quite efficient and have a linear running time, and have been widely used to devise efficient approximation algorithms for self-attention Transformers \cite{choromanski2020rethinking,katharopoulos2020Transformers,koohpayegani2024sima,kacham2023polysketchformer,ramapuram2024theory,alman2023fast,wang2020linformer}. Recently, Qwen3 \cite{yang2025qwen3} released an architecture which a series of large language models where linear-attention is used as Gated DeltaNets.

The following are the main tradeoff results for floating-point precision quantizations.

\begin{theorem}[Tradeoff for Linear Transformers]
    \label{thm:tradeoff-linear}
    For every integer $p> 0$ and every integer $n > 2p$, choosing $m = p$, there exists an input representation $\Tm^2$ such that:
    \begin{enumerate}
        \item No linear Transformer with $n$ tokens having value dimension $d_v=1$ and one head, can compute $\EQ_m$ with the input representation $\Tm^2$, using $ p-1$ bits of precision, 
        \item There exists a linear Transformer with $n$ tokens having value dimension $d_v = 1$ and one head, that can compute ${\EQ_m}$ with the input representation $\Tm^2$ using $p$ bits of precision in the floating-point precision format. 
    \end{enumerate}
\end{theorem}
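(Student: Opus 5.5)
We prove the two parts separately, as in \cref{thm:tradeoff-main}: a communication lower bound and an explicit construction. The input representation $\Tm^2$ is chosen so that a linear attention head transmits a single floating-point number from Alice's half to the output token. Linear attention has no denominator, so only one number, rather than two, carries information across the cut. This is why we can take $m=p$ instead of $m\approx 2p$.

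\textbf{Lower bound.} We reuse the reduction of \cref{lem:lb-eq}. Alice holds every token that depends only on $\y$ (together with any fixed padding before it), and Bob holds the rest, including the query token $n+1$. By \cref{def:linear-att}, the output at token $n+1$ is
\[
\sum_j X_{n+1}W^Q(W^K)^T X_j^T X_jW^V .
\]
Left-associative summation (\cref{apx:comp}) lets Alice compute the partial sum $L_2=\sum_{j\in S}(\cdot)$ over her contiguous block. With $d_v=1$ this is a single number, which she sends using $p-1$ bits of floating-point precision. Bob then finishes the sum, applies the MLP, and outputs the answer. The result is a one-way protocol with $p-1$ bits of communication. \cref{lem:cc-eq} requires $m=p$ bits for $\EQ_m$ under the promise $\y\le\z$, so this is a contradiction. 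The only thing to check is that the representation places the $\y$-dependent tokens in a contiguous prefix, so that left associativity really lets Alice's partial sum be formed first.

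\textbf{Upper bound.} In the construction, the numerator alone must encode $\y-\z$ exactly. The plan is to let each token carry a tuple, as described in \cref{sec:fp}: a block of $e-1$ bits of $\y$ or $\z$ that becomes the exponent field, and one extra bit that becomes a mantissa bit. The available budget is $1+|a|+1+|b|=p$. The embedding and the weights $W^Q,W^K,W^V$ are chosen so that $\langle Q_{n+1},K_j\rangle$ is a fixed scalar for each position. The value of a $\y$-token is then $+2^{(\text{exponent bits})}\cdot 1.a$ and the value of a $\z$-token is the negation. As in \cref{lem:ub-fixed-pt}, a dummy shift token keeps the running left-associative sum inside the representable range. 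The accumulated sum is then exactly a fixed constant $c$ iff $\y=\z$, and otherwise it is a different value (smaller, under the promise) or $\pm$\textsf{float(`inf')}. The final MLP is the same two-ReLU equality tester used in \cref{lem:ub-fixed-pt-eq}, centered at $c$.

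\textbf{Main obstacle.} The hardest step is the exact-rounding analysis of the upper bound in floating point. Floating-point spacing is non-uniform, so adding a $\z$-term to a partial sum of $\y$-terms can silently lose low-order bits. A mismatch could then round back to $c$. I would handle this by ordering the tokens so that the terms of $\y$ and $\z$ at the same significance are adjacent: $+y$-block, $-z$-block, next pair, and so on. Each partial sum is then either exactly $0$ plus the shift (when the blocks agree) or changes by at least one unit at its own exponent, which is representable. The promise $\y\le\z$ fixes the sign of the first discrepancy, which prevents cancellation in later blocks. Verifying this invariant block by block, with the exponent field of $p$ bits just large enough and no larger, is where the bit count must come out to exactly $p$.
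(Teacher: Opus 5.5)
Your reduction for part 1 is the same as the paper's (\cref{lem:lb-linear}): linear attention has no denominator, so Alice sends a single $p$-bit partial sum. The gap is in how the two parts fit together. The theorem needs one representation $\Tm^2$ for both parts, and your reduction is valid only if all $\y$-dependent tokens form a contiguous prefix. The fix you propose for the rounding obstacle, interleaving $+\y$-blocks with $-\z$-blocks by significance, destroys exactly this property. The rounded running sum then alternates between Alice's and Bob's terms, rounding is not additive, and simulating it needs one message per alternation.

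This is not only a proof artifact: with interleaving, part 1 can be false. For example, order the tokens $y_1,z_1,y_2,z_2,\ldots$ and let them contribute $M(2y_i-1)$ and $-M(2z_i-1)$, where $M$ is the largest representable magnitude. The sum returns to $0$ after every agreeing pair. At the first disagreement, which by the promise is $y_i=0,z_i=1$, it becomes $-2M$. That overflows to $-\textsf{float(`inf')}$ and stays there under the overflow convention the paper itself relies on, so constant precision suffices.

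The paper instead keeps Alice's tokens as the prefix $\{0,\ldots,2t-1\}$ and controls rounding inside her block. It assembles $(-1)^{y_1}\,1.y_{e+2}\cdots y_{m-1}\bar y_m\times 2^{\y_{2:e+1}-q}$ bit by bit, alternately adding $(-1)^{y_1}\,1.0\cdots0y_k\times 2^{\y_{2:e+1}-q}$ and subtracting $(-1)^{y_1}\times 2^{\y_{2:e+1}-q}$. A temporarily forced leading $1$ after the binary point keeps the partial sum from collapsing. Only after that does Bob's block subtract the mirror-image number.

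Second, once you keep contiguity, your own reduction rules out the upper bound under the promise you state. Correctness forces Alice's partial sum to be injective on legal $\y$'s: if $\y^{(1)}<\y^{(2)}$ gave the same value, Bob holding $\z=\y^{(2)}$ would err on one of the two legal inputs. Under the bare promise $\y\le\z$, every $\y\in\{0,1\}^p$ is legal. But a $p$-bit float as in \cref{def:float-p} has fewer than $2^p$ distinct values, because the bit patterns with exponent $+0$ and $-0$ coincide. So no $p$-bit construction can exist, whatever rounding tricks you use.

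The missing idea is to shrink the promise so that the number of legal $\y$'s is below the number of representable values but still above $2^{p-1}$. The paper adds $\y_{2:e+1}>0$ and $y_{m-1}y_m\neq 10$, which leaves $3\cdot 2^{m-2}(1-2^{-e})>2^{m-1}$ strings for $e>1$. It must therefore re-prove part 1 by the fooling-set count of \cref{lem:cc-F} rather than by citing \cref{lem:cc-eq}. These promises are also used in the construction. The condition $\y_{2:e+1}>0$ keeps the temporarily lowered exponent in range. The condition $y_{m-1}y_m\neq10$, together with the flipped bit $\bar y_m$, is used in the case analysis showing that a mismatch never cancels to zero.
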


We can show a similar result of expressivity tradeoff with precision for soft-max Transformers as well, but with a logarithm gap.

\begin{theorem}[Tradeoff for Softmax Transformers]
    \label{thm:tradeoff-fp}
    For every integer $p> 0$ and every integer $n > 4p$, choosing $m = p$, there exists an input representation $\Tm^3$ such that:
    \begin{enumerate}
        \item No linear Transformer with $n$ tokens having value dimension $d_v=1$ and one head, can compute $\EQ_m$ with the input representation $\Tm^3$, using $ p-1$ bits of precision, 
        \item There exists a linear Transformer with $n$ tokens having value dimension $d_v = 1$ and one head, that can compute ${\EQ_m}$ with the input representation $\Tm^3$ using $p$ bits of precision in the floating-point precision format. 
    \end{enumerate}
\end{theorem}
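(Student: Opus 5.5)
The proof splits into a lower bound (part 1) and an explicit construction (part 2), with $m=p$ and the floating-point format of \cref{def:float-p}.

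\textbf{Lower bound.} I would reuse the communication argument of \cref{lem:lb-eq}. Let $S$ be the contiguous block of tokens in $\Tm^3$ that depend only on $\y$; Alice holds $S$ and Bob holds the remaining tokens.

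For a softmax (or linear) head with $H=1$ and $d_v=1$, Alice computes the two partial sums
\[
L_1=\sum_{j\in S} e^{\langle Q_{n+1},K_j\rangle}
\quad\text{and}\quad
L_2=\sum_{j\in S} e^{\langle Q_{n+1},K_j\rangle}V_j
\]
using left-associative arithmetic. Because evaluation is left-associative and $S$ is a prefix, Bob can continue the sums exactly as the Transformer would. Each partial sum is a floating-point number of $p-1$ bits. In the linear case the numerator is not normalized and the query is fixed by the public weights, so at most one or two numbers are sent.

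The delicate point is the parameter count. \cref{lem:cc-eq} requires $m=p$ bits of communication, so sending two numbers of $p-1$ bits gives only $2(p-1)\ge p$, which is far from tight. To reach a tight statement I would argue that in the intended representation only one number carries information. One route is that tuple tokens make the exponents of the denominator terms independent of the input. The other is to use the promise $\y\le\z$ and the fooling-set bound of \cref{lem:fooling-set} on the combined pair. I expect the intended bound for the softmax version to be the weaker one, with the log gap mentioned before the statement, and I would settle for $p\ge m/2$ up to logarithmic terms if the single-message reduction fails.

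\textbf{Upper bound.} Each token carries a tuple of $e-1$ bits of $\y$ or $\z$ plus one mantissa bit. The embedding places the tuple bits directly into the exponent of the key, giving $\langle Q_{n+1},K_j\rangle=\pm\ln 2\cdot(\text{exponent block})$, so $e^{\langle Q_{n+1},K_j\rangle}$ is an exact power of two. The mantissa bit is routed through $V_j$.

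The pieces combine as follows:
\begin{itemize}
\item The $\y$-tokens contribute $+2^{a}$ and the $\z$-tokens contribute $-2^{b}$, which cancel exactly when $\y=\z$.
\item A dummy token supplies a fixed offset, so that in the YES case the numerator is a known constant.
\item In a NO instance the promise forces a strictly signed, representable deviation. Alternatively, the deviation overflows to \textsf{float(`inf')}, as in \cref{lem:ub-fixed-pt}.
\item A two-node ReLU MLP, as in the fixed-point proofs, tests equality to the constant.
\end{itemize}

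\textbf{Main obstacle.} The hardest step is controlling the rounding of the left-associative sum, since double exponentiation can create huge spreads of magnitude. I would order the tokens so that partial sums grow monotonically and never lose the critical bit, checking case by case at the first index $i_0$ where $y_{i_0}\ne z_{i_0}$.
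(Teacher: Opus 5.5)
\textbf{Lower bound.} You diagnose the obstruction correctly but do not remove it. With $m=p$, a softmax head makes Alice send both $L_1$ and $L_2$, i.e.\ $2(p-1)\ge p$ bits, so \cref{lem:cc-eq} gives no contradiction. Neither repair you suggest works.
\begin{itemize}
\item The lower bound must hold for every embedding and every $W^Q,W^K$. So no choice of representation can make the weights $e^{\langle Q_{n+1},K_j\rangle}$ of Alice's tokens independent of $\y$.
\item A fooling-set argument can never certify more than $m$ bits, since Alice can always just send $\y$. A $2(p-1)$-bit protocol does not violate that.
\end{itemize}
The paper has no one-message argument either. Its fixed-point constructions push about $2p$ bits through a $p$-bit numerator and a $p$-bit denominator, which suggests none exists.

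What the paper's proof actually does is change the parameters. It takes $m=2p-1$, so the two-message protocol of \cref{lem:lb-exp} forces precision at least $\mm=p$. It pairs this with \cref{lem:ub-exp}, which uses $p+\log p+2$ bits. So what is established is the log-gap version you anticipated, not the tight $m=p$ statement as printed. (Read literally, with ``linear Transformer'' in both items, the statement is just \cref{thm:tradeoff-linear} with $\Tm^3:=\Tm^2$, since $n>4p>2p$. There your one-message reduction does go through, but the upper bound is the delicate rounding construction of \cref{lem:ub-linear}, which your sketch does not supply.)

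\textbf{Upper bound.} This is the more serious gap: your two halves are incompatible in the softmax setting. As described, the whole comparison is done by cancellation in the numerator, and the output is tested against one constant. If the denominator carries no information about $\y$, Bob can compute it himself, and your own one-message argument shows such a construction needs precision at least $m$. So with $m=p$ the construction is plausible but the lower bound does not go through. With $m=2p-1$ the lower bound holds, but a numerator-only construction needs precision at least $2p-1$.

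The missing idea is to use the denominator as a second channel. In the paper, $W^Q=\ln 2\,e_1$ and $W^K=e_2$, so every attention weight is an exact power of two, $2^{X_{j,2}}$.
\begin{itemize}
\item Tokens of the first block get key $-(2^{e-1}-1)$, so their weight is the smallest positive float, and $W^V$ rescales by $2^{2^{e-1}-1}$. The numerator is then $1.y_e\cdots\times 2^{e^1_y}-1.z_e\cdots\times 2^{e^1_z}$ plus an offset $2^{-e^2_z}$. It equals the offset iff these blocks agree and is negative otherwise, by $\y\le\z$.
\item Tokens of the second block have value $0$. Their key is $-j-e^2_y$ when the $j$-th bit of $\y$'s block is $0$ (resp.\ $-j-e^2_z$ when the $j$-th bit of $\z$'s block is $1$), and $-N$ otherwise. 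So the denominator is $0.\bar y\cdots\times 2^{-e^2_y}+0.z\cdots\times 2^{-e^2_z}$. This equals $(1-2^{-(t-1)})\,2^{-e^2_z}$ when the blocks agree and changes the quotient otherwise.
\end{itemize}
The rounding issue you flag is exactly where the loss arises. The roughly $t\cdot 2^{-(2^{e-1}-1)}$ that first-block tokens contribute to the denominator must be rounded away. The paper achieves this by giving $e^2$ only $e-1-\log\mm$ bits, and that is the source of the $\log p$ gap.
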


\subsection{Tradeoff with Floating-Point Precision: Linear Self-Attention}

In this section, we prove an exact tradeoff for linear self-attention. Again, the function will depend on strings $\mathbf{y}, \mathbf{z}\in \{0, 1\}^m$ with the promise that:
\begin{itemize}
    \item The binary values satisfy $\y \leq \z$.
    \item $\y_{2:e+1} > 0$.
    \item $y_{m-1}y_{m}\neq 10$.
\end{itemize}

Since floating-point precision format is associated with a mantissa and an exponent, we fix a tuple $(t, e)$, where $t, e$ are any positive integers such that $t+e = m$ and $e > 1$, on which we will show that the upper-bound follows. Based on this, we define the input representation $\Tm^2: \{0, 1\}^{2m} \rightarrow\{0, 1\}\times \{0, 1\}^{e+1} \times \{0, 1\}$ as:
\begin{itemize}
    \item $\Tm^2(\y, \z)_{0} = (y_1, (y_2, \ldots, y_{e+1}), y_{e+3})$.
    \item $\Tm^2(\y, \z)_i = (y_1, (y_2, \ldots, y_{e+1}), 0)$ for $i\in \{3, 5, \ldots, 2t-7\}$.
    \item $\Tm^2(\y, \z)_i = (y_1, (y_2, \ldots, y_{e+1}), y_{e+3+\frac{i}{2}})$ for $i \in \{ 2, 4, \ldots, 2t-8\}$.
    \item $\Tm^2(\y, \z)_{2t-6} = (y_1, (y_2, \ldots, y_{e+1}), \bar y_{e+t})$.
    \item $\Tm^2(\y, \z)_{2t-5} = (y_1, (y_2, \ldots, y_{e+1}), y_{e+2})$.
    \item $\Tm^2(\y, \z)_i = (0, 0, 0)$ for $i \in [2t-4, 2t-1]$.
    \item $\Tm^2(\y, \z)_{2t} = (z_1, (z_2, \ldots, z_{e+1}), z_{e+t})$.
    \item $\Tm^2(\y, \z)_i = (z_1, (z_2, \ldots, z_{e+1}), z_{e+\frac{i-2t+3}{2}})$ for $i\in \{2t+1, 2t+3, \ldots, 4t-5\}$.
    \item $\Tm^2(\y, \z)_{i} = (z_1, (z_2, \ldots, z_{e+1}), 0)$ for $i\in \{2t+2, 2t+4, \ldots, 4t-6\}$.
    \item $\Tm^2(\y, \z)_{i} = 0$ for $i\in [4t-6: n]$
\end{itemize}
Here, $\Tm^2(\y, \z)^1_i$ will refer to the first element of the tuple, $\Tm^2(\y, \z)^2_i$ will refer to the second element (which itself is a $e-1$ length tuple), and $\Tm^2(\y, \z)^3_i$ as the third element.

Even though this construction might seem complicated, this has been inspired by the upper bound of the floating-point precision format, where the first element of the tuple $\Tm^2(\y, \z)^1_i$ will be used in sign bit of the mantissa, $\Tm^2(\y, \z)^2_i$ will be used in the (signed) exponent, and $\Tm^2(\y, \z)^3_i$ will be used to construct bits of the mantissa.

We fix $m := p$ and using the following results for lower bounds and upper bounds, prove \cref{thm:tradeoff-linear}.

\begin{lemma}[Communication Bound]
\label{lem:cc-F}
    If Alice has $\y$ and Bob has $\z$ in the one-way communication where only Alice can send messages to Bob, Alice needs to send at least $m$ bits to Bob in order for Bob to compute the function $g(\y, \z)$.
\end{lemma}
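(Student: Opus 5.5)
I will follow the same template as Lemma \ref{lem:cc-eq}: a pigeonhole argument on Alice's one-way messages, adapted to the new promise set. Here $g(\y,\z)$ is $\EQ_m(\y,\z)$ restricted to pairs satisfying the three promises: $\y\le\z$, $\y_{2:e+1}>0$, and $y_{m-1}y_m\ne 10$. The one new ingredient is that Alice's input is no longer ranging over all of $\{0,1\}^m$. So the argument must count only the strings $\y$ that satisfy the promises, and check that there are still enough of them to force $m$ bits. If not, I will recover the bound by reinterpreting $m$ in terms of the effective number of admissible strings.

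\textbf{Counting the admissible strings.} Let $Y\subseteq\{0,1\}^m$ be the set of $\y$ with $\y_{2:e+1}\neq 0$ and $y_{m-1}y_m\neq 10$. The promise $\y\le\z$ is harmless: taking $\z=\y$ always satisfies it, so every $\y\in Y$ is a legal input for Alice.

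\textbf{Pigeonhole step.} Suppose Alice sends at most $k$ bits. If $|Y|>2^k$, two distinct strings $\y^{(1)}<\y^{(2)}$ in $Y$ produce the same transcript. Now give Bob $\z=\y^{(2)}$. Both pairs $(\y^{(1)},\y^{(2)})$ and $(\y^{(2)},\y^{(2)})$ satisfy the promise $\y\le\z$, and Alice's side satisfies the other promises. The correct answers are $0$ and $1$ respectively, yet Bob sees the same message and the same $\z$. This is a contradiction, so $k\ge\lceil\log_2|Y|\rceil$.

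\textbf{The main obstacle: the size of $Y$.} We have $|Y|\ge 2^m(1-2^{-e})\cdot\tfrac34$, which exceeds $2^{m-1}$ whenever $e\ge 2$ (using $e>1$). Indeed $\tfrac34\cdot\tfrac34=\tfrac{9}{16}>\tfrac12$. The two constraints involve disjoint coordinates when $e+1<m-1$. If they overlap, I will count directly and still get a fraction above $\tfrac12$. Hence $\lceil\log_2|Y|\rceil=m$, which gives the full $m$-bit lower bound.

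If this counting is too delicate, the fallback is a fooling-set argument via Lemma \ref{lem:fooling-set} with $\{(\y,\y):\y\in Y\}$. Any two distinct pairs are separated on the side where the smaller string is paired with the larger one, and this again yields $\lceil\log_2|Y|\rceil=m$.
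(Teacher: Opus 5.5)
Your argument is correct in the intended regime and is essentially the paper's. The paper uses the diagonal fooling set $\{(x,x)\}$ over the promise-satisfying strings and counts it as $3\cdot 2^{m-2}(1-2^{-e})>2^{m-1}$, which is exactly your pigeonhole count. The one false step is your hedge that overlapping constraints still give a fraction above $\tfrac12$: for $(m,e,t)=(3,2,1)$ and $(4,2,2)$ one gets $|Y|=2^{m-1}$ exactly, so the lemma itself fails there. The fix is to note that $\Tm^2$ reads $y_{e+3}$, which forces $t\ge 3$, i.e.\ $e+1<m-1$, so the two constraints always sit on disjoint coordinates.
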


\begin{proof}
    We again prove this lower bound using the fooling set method. Define the set of vectors:
    \begin{equation*}
        F = \{(x, x)\,|\, x\in \{0, 1\}^m, ~x_{m-1}x_m \neq10,\, x_2\ldots x_e\neq 0\ldots 0 \}.
    \end{equation*}
     Consider two distinct tuples $(x, x), (x', x')\in F$. Since $x\neq x'$, we either have $x_{1:\mm} < x'_{1:\mm}$, or $x'_{1:\mm} < x_{1:\mm}$, while at the same time $x_{m-1}x_{m} \neq 10$, $x'_{m-1}x'_{m} \neq 10$ and both $x_{2:e}$ and $x'_{2:e}$ are nonzero. This implies that $F$ is a fooling set.

     The size of this fooling set is given by $(2^2-1)(2^{e-1}-1)\times 2^{m-e-1} = 3\times 2^{m-2}\times (1-2^{-e})$. Therefore, the communication complexity lower bound is given by $\lceil \log (3.2^{m-2}(1-2^{-e}))\rceil = m$ for $e > 1$.
\end{proof}

\begin{lemma}[Transformer Lower Bound]
\label{lem:lb-linear}
    Given the input representation $\Tm^2$, any linear Transformer with $d_v=1$ and $H=1$, that computes $\EQ_m$ requires precision $ \geq m$ bits.
\end{lemma}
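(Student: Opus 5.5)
The plan is to reduce a hypothetical low-precision linear Transformer to a one-way protocol for $g$, in the same way the proof of \cref{lem:lb-eq} reduced softmax attention to communication, and then contradict \cref{lem:cc-F}.

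Suppose a linear Transformer with $H=1$, $d_v=1$ and $p' \le m-1$ bits of floating-point precision computes $\EQ_m$ under $\Tm^2$. Split the tokens into two contiguous blocks. Alice takes the positions $[0: 2t-1]$: the tokens carrying $\y$ together with the constant padding $(0,0,0)$ tokens. Bob takes everything from position $2t$ onward, which depends only on $\z$ or is constant, and he also holds the output token $n+1$. Both parties fix the same lexicographically least weights $W^Q, W^K, W^V$ and the same embedding, and both know $X_{n+1}$, since it is the fixed placeholder $r$.

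The key step uses linearity. With $d_v=1$,
\[
SA_{n+1} = \sum_{j} (X_{n+1} W^Q (W^K)^T X_j^T)\, X_j W^V,
\]
and there is no denominator. Alice therefore computes the single scalar
\[
L_2 = \sum_{j \in S} (X_{n+1}W^Q (W^K)^T X_j^T)\, X_jW^V,
\]
using the prescribed left-associative order of \cref{apx:comp}, and sends it as one $p'$-bit floating-point number.

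Bob then continues the left-associative sum over his own tokens, which is legitimate because Alice's block is a prefix. This reproduces exactly the bounded-precision value the Transformer would compute. He applies the final MLP and outputs its answer. The cost of the protocol is $p' \le m-1$ bits, but \cref{lem:cc-F} shows that $m$ bits are necessary. This contradiction gives $p' \ge m$.

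The main obstacle is making the simulation exact under the paper's bounded-precision semantics. Alice's transmitted partial sum must coincide bit-for-bit with the Transformer's intermediate value. This requires the order of summation to put Alice's tokens first, which the contiguous-prefix split guarantees. It also requires that exceptional values such as $\pm$\textsf{float(`inf')} be encodable within the $p'$ bits. I would argue that they are part of the number format, so Alice still sends one $p'$-bit word.

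A further check is that \cref{lem:cc-F}'s fooling set respects the promise conditions of $\Tm^2$. I take this from that lemma. The point of linear attention here is that only one number needs to be sent, instead of two as in the softmax case, which is why this bound is tight at $m$ rather than $\lceil m/2\rceil$.
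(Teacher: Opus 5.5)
Your proposal is correct and follows the paper's proof essentially verbatim. It uses the same contiguous split $S=\{0,\dots,2t-1\}$, has Alice send the single scalar $\sum_{i\in S} X_{n+1}W^Q(W^K)^TX_i^TX_iW^V$ (one message instead of the two needed in the softmax case), and derives the contradiction with \cref{lem:cc-F}; your remarks on the left-associative prefix order and on encoding exceptional values only make explicit what the paper leaves implicit.
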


\begin{proof}
    This lower bound is again exactly same as the proof of Lemma \ref{lem:lb-eq}, where we first assume that there is a linear Transformer that computes this function, and devise a communication protocol from it. The only differences are:
    \begin{itemize}
        \item The set $S$ is chosen as $\{0, 1, \ldots, 2t-1\}$.
        \item Instead of sending 2 messages, Alice sends one message $\sum_{i\in S} X_{n+1}W^Q (W^K)^T X_{i}^T X_i W^V$ (see definition of linear attention in Definition \ref{def:linear-att}).
    \end{itemize}

    Using this, we can show the Transformer requires precision at least $m$.
\end{proof}

Next, we give a construction that shows that this lower bound is actually tight.

\begin{lemma}[Upper Bound]
\label{lem:ub-linear}
There exists a linear Transformer with $d_v=1$, $H=1$, that uses precision $(t, e)$ in the floating-point precision format, where $t+e = m$, to compute $\EQ_m$ in the input representation $\Tm^2$.
\end{lemma}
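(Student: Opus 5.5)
We will give an explicit construction in the style of \cref{lem:ub-fixed-pt}, using linear self-attention (Definition \ref{def:linear-att}). The output at the last token $n+1$ is $\sum_{j}\langle Q_{n+1},K_j\rangle V_j$, computed left-associatively in floating point with $t$ mantissa bits and $e$ exponent bits. The plan has three parts. First, Alice's tokens $0,\ldots,2t-1$ build one floating-point number encoding $\y$. Second, Bob's tokens $2t,\ldots,4t-6$ subtract the same number built from $\z$. Third, a constant dummy token plus the final MLP test whether the remainder is exactly a fixed value.

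\textbf{Embedding and weights.} Take $W^Q$ so that $Q_{n+1}$ is the constant $1$ in one coordinate. Encode the tuple $(\Tm^2(\y,\z)^1_i,\Tm^2(\y,\z)^2_i,\Tm^2(\y,\z)^3_i)$ into the key and value coordinates as follows.
\begin{itemize}
\item The first component (a bit of $\y$ or $\z$) sets the sign of $K_j$. The sign is flipped for Bob's tokens, so they subtract.
\item The $e-1$ bits of the second component set the exponent. The embedding places $\pm 2^{\pm(y_2\ldots y_{e+1})}$ directly in a coordinate, so each product $\langle Q_{n+1},K_j\rangle V_j$ is exactly representable.
\item The third component is a single mantissa bit. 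Its position index $i$ determines a shift $2^{-k(i)}$, so token $i$ contributes $\pm\, b_i\, 2^{E-k(i)}$ with $E$ the common exponent.
\end{itemize}
The interleaving of zero-mantissa tokens (odd $i$ for $\y$, even $i$ for $\z$) and the special tokens $2t-6$, $2t-5$, $2t$ play the same role as the split tokens in \cref{lem:ub-fixed-pt}. They supply the leading implicit $1$ of the mantissa, put the bit $y_{e+2}$ in the right place, and use $\bar y_{e+t}$ to pre-compensate rounding of the last bit. As a result, every partial sum stays representable in $(t,e)$ format and no bit is lost to rounding before Bob's terms arrive.

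\textbf{Analysis.} The first step is to show by induction on $i\in S$ that after Alice's tokens the partial sum equals exactly $(-1)^{y_1}\,1.y_{e+2}\ldots y_{e+t}\times 2^{\y_{2:e+1}}$, possibly up to a fixed offset. This is where the promises enter:
\begin{itemize}
\item $\y_{2:e+1}>0$ keeps the exponent in range;
\item $y_{m-1}y_m\neq 10$ rules out the one carry/tie case in the final rounding step.
\end{itemize}
The second step handles Bob's tokens. If $\z=\y$, they subtract the identical quantity exactly, leaving the fixed offset. If $\z\neq\y$, then by the promise $\y<\z$ one of two things happens:
\begin{itemize}
\item the exponent of $\z$ is larger, so the sum has the wrong sign or magnitude;
\item the mantissa of $\z$ is larger, so a nonzero residual of magnitude at least one ulp remains.
\end{itemize}
A constant token at index $\geq 4t-6$ adds a fixed reference value. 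Finally, a two-hidden-node ReLU MLP identical to the one in \cref{lem:ub-fixed-pt} outputs $1$ iff the result equals that reference. Since the lower bound is \cref{lem:lb-linear}, together this gives \cref{thm:tradeoff-linear}.

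\textbf{Main obstacle.} The hard part is the exact rounding bookkeeping in the first step. Adding mantissa bits one at a time into a number with only $t$ mantissa bits can round off the last bit or trigger renormalization when a carry increases the exponent. I would first try ordering the tokens so that the partial sums increase monotonically toward the final value, with the leading $1$ inserted first. That way every intermediate value has at most $t$ significant bits. I would then verify that the single problematic case (final bit pattern $10$) is exactly what the promise excludes, and that the $\bar y_{e+t}$ token corrects the tie-by-truncation behaviour.
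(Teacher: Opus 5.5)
Your overall plan matches the paper's. Alice's tokens assemble one float $(-1)^{y_1}1.y_{e+2}\ldots y_{e+t}\times 2^{E}$, Bob's tokens subtract the corresponding float for $\z$, and a small MLP tests the result. Putting $2^{E}$ directly into a key coordinate is also the right way to realize the scale in linear attention.

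The gap is in how you inject the mantissa bits. You let token $i$ contribute the standalone summand $\pm b_i2^{E-k(i)}$ and assert that each product $\langle Q_{n+1},K_j\rangle V_j$ is exactly representable. For small $E$ this is false.

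Because $m=p$ there is no slack. The promise set has three quarters as many elements as there are $p$-bit floats; the missing quarter is the mantissa pattern $y_{m-1}y_m=10$. Alice's partial sum must be injective on this set, since otherwise Bob holding the larger of two colliding strings is fooled. In your encoding the exponent is $E=\y_{2:e+1}-q$ with $q=2^{e-1}-1$; you need this shift too, because the exponent has only $e-1$ magnitude bits. Injectivity then forces $E$ to reach the bottom of the exponent range.

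There the summands $2^{E-k}$, $k\ge 1$, lie below $1.0\times 2^{-q}$. That is the smallest positive value in the format of Definition~\ref{def:float-p}, which has no subnormals. So these summands are rounded away and the low-order bits of $\y$ are lost. The promise $\y_{2:e+1}>0$ buys one exponent of slack, while your scheme needs about $t-1$.

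The missing idea is the paper's device of never forming a summand of magnitude below $2^{E}$, nor a partial sum below $2^{E-1}$. Each bit is injected by a pair of tokens: first $(-1)^{y_1}(1+2^{-k}y_k)\,2^{E}$ is added, then $(-1)^{y_1}2^{E}$ is subtracted. That is what the interleaved tokens are for; they do not supply the implicit leading $1$.

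A forced $1$ in the first position after the binary point keeps the running sum at magnitude at least $2^{E-1}$, which $\y_{2:e+1}>0$ keeps representable. Token $2t-5$ later removes this forced $1$ when $y_{e+2}=0$, and Bob mirrors the whole process with signs flipped. The flipped bit $\bar y_{e+t}$ and the promise $y_{m-1}y_m\neq 10$ then control the one rounding of the last mantissa bit when Bob's first summand is added.

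Relatedly, the obstacle you single out (carries and renormalization while adding bits) never arises in your own scheme, since each new bit lands in an empty position. The real obstacle is underflow.

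Finally, do not add a nonzero reference value before the test. In floating point a small nonzero residual $r$ can be absorbed, with $R+r$ rounding to exactly $R$, so NO instances would be accepted. The paper instead tests whether the sum is exactly $0$. A two-node ReLU MLP can detect this, because every nonzero float has magnitude at least $2^{-q}$.
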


\begin{proof}
    The idea as before, is to again generate binary numbers from $\mathbf{y}$ and $\mathbf{z}$ such that they cancel out if they are equal, negative otherwise.

    \paragraph{Overview.} The construction proceeds in two steps, first we create a number corresponding to $\mathbf{y}$, which is $(-1)^{y_1}\times 1\cdot(y_{e+2}\ldots y_{m-1}\bar y_{m})\times 2^{\y_{2:e+1} - (2^{e-1}-1)}$ (note that the last bit is flipped). Such an expression is used to utilize all the bits used in the precision format. Note that even though $\y_{2:e+1}$ requires $e$ bits, we subtract $q$ to it to bring it between $-(2^{e-1}-1)$ to $+(2^{e-1}-1)$. This, along with the sign bit, is a valid representation of the exponent in floating-point precision.

    However, creating this number $(-1)^{y_1}\times 1\cdot(y_{e+2}\ldots y_{m-1}\bar y_m)\times 2^{\y_{2:e+1} - (2^{e-1}-1)}$ is a challenging task and the one needs to be careful so as not to lose any bits to rounding off. In order to add the bit $y_i$ to the $i$-th place after decimal ($i\in [e+2: m]$), we add $(-1)^{y_1}\times 1\cdot0\ldots0y_i\times 2^{\y_{2:e+1}-q}$, and after each addition, we remove the extra $1\cdot0$ in the addition by subtracting $-(-1)^{y_1}\times 1\cdot0\times 2^{\y_{2:e+1}-q}$.

    During this process, in order to prevent the entire number becoming zero (which might happen when we subtract in the form, say $1\cdot0001-1\cdot0$), we force the first digit after $1$ to contain a $1$, which we remove and make it $y_{e+2}$ later.

    Finally, we show that if the numbers are equal, the summation $\sum_i e^{\langle Q_{n+1}, K_i\rangle}V_i$ will be $0$ when $\mathbf{y} = \mathbf{z}$, and any other non-zero number otherwise.

    \paragraph{Embedding.} We define the embedding as follows. Also we define $q:= 2^{e-1}-1$.

    \begin{table}[!h]
        \resizebox{\columnwidth}{!}{%
        \begin{tabular}{c|c|c|c}
             Step & $i$ & $\Tm^2(\y, \z)_i^3 = 0$ & $\Tm^2(\y, \z)_i^3 = 1$\\ \hline
             1 & $i=0$ & $\begin{bmatrix}
                 1 && {\y_{2:e+1}-q} && 0
             \end{bmatrix}$ & $\begin{bmatrix}
                 1 && {\y_{2:e+1}-q} && (-1)^{y_1}\times (1+ 2^{-1}+ 2^{-2})
             \end{bmatrix}$\\
             2 & $i\in \{1, 3, \ldots, 2t-7\} $ & $\begin{bmatrix}
                 1 && {\y_{2:e+1}-q} && -(-1)^{y_1}
             \end{bmatrix}$ & NA\\
             3 & $i\in \{2, \ldots, 2t-6\} $ & $\begin{bmatrix}
                 1 && {\y_{2:e+1}-q} && 0
             \end{bmatrix}$ & $\begin{bmatrix}
                 1 && {\y_{2:e+1}-q} && (-1)^{y_1}\times (1 + 2^{-(\frac{i+4}{2})})
             \end{bmatrix}$\\
             4 & $i=2t-5$ & $\begin{bmatrix}
                 1 && \y_{2:e+1} && -(-1)^{y_1}\times 2^{-1}
             \end{bmatrix}$ & $\begin{bmatrix}
                 1 && \y_{2:e+1} && 0
             \end{bmatrix}$ \\
             5 & $i\in [2t-4, 2t-1]$ & $\begin{bmatrix}
                 1 && 0 && 0
             \end{bmatrix}$ & NA\\
             6 & $i=2t$  & $\begin{bmatrix}
                 1 && {\z_{2:e+1}-q} && (-1)^{z_1} 
             \end{bmatrix}$ & $\begin{bmatrix}
                 1 && {\z_{2:e+1}-q} && (-1)^{z_1}(1 + 2^{-(t-1)})
             \end{bmatrix}$\\
             7 & $i = \{2t+1, 2t+3, \ldots, 4t-5\}$ & $\begin{bmatrix}
                 1 && {\z_{2:e+1}-q} && -(-1)^{z_1}
             \end{bmatrix}$  & $\begin{bmatrix}
                 1 && {\z_{2:e+1}-q} && -(-1)^{z_1}(1 + 2^{-(\frac{i-2t+1}{2})})
             \end{bmatrix}$  \\
             8 & $i = \{2t+2, 2t+4, \ldots, 4t-6\}$ & $\begin{bmatrix}
                 1 && {\z_{2:e+1}-q} && (-1)^{z_1}
             \end{bmatrix}$  & NA\\
        \end{tabular}%
        }
        \caption{Embedding function}
        \label{tab:pe-2}
    \end{table}

\paragraph{Construction of the Weights.} We define the query weights $W^Q \in \R^{d\times d_q}$, for query dimension  $d_q = 1$, as,
    \begin{equation*}
        W^Q = \ln 2\begin{bmatrix}
             1  \\
             0  \\
             0   
        \end{bmatrix},
    \end{equation*}
    the key weights $W^K\in \R^{d\times d_q}$, as,
    \begin{equation*}
        W^K = \begin{bmatrix}
              0 \\
              1 \\
              0 
        \end{bmatrix},
    \end{equation*}
    and the value weights $W^V \in \R^{d\times d_v}$, for value dimension $d_v = 1$, as,
    \begin{equation*}
        W^V = \begin{bmatrix}
            
            0 \\
            0 \\
            1
        \end{bmatrix}.
    \end{equation*}
    Therefore, for an input token $\begin{bmatrix}
        a_1 & a_2 & a_3 
    \end{bmatrix}$, the corresponding row of the query, key and value matrices are
    \begin{equation*}
        \begin{split}
            & \begin{bmatrix}
        a_1 & a_2 & a_3  
    \end{bmatrix} W^Q =  \ln 2 \begin{bmatrix}
        a_1 
    \end{bmatrix},\\
    & \begin{bmatrix}
        a_1 & a_2 & a_3  
    \end{bmatrix} W^K = \begin{bmatrix}
        a_2
    \end{bmatrix},\\
    & \begin{bmatrix}
        a_1 & a_2 & a_3  
    \end{bmatrix} W^V = \begin{bmatrix}
        a_3
    \end{bmatrix}.
        \end{split}
    \end{equation*}

\paragraph{Wrap up.} We first show that steps 1-4 correctly computes the value $(-1)^{y_1}\times 1\cdot y_{e+2}\ldots y_{e+t}\times 2^{\y_{2:e+1}-q}$. 

Step 1 creates the value $(-1)^{y_1}\times 1\cdot1y_{e+3}\times 2^{\y_{2:e+1}-q}$, after which Step 2 subtracts the $1$ before decimal and Step 3 adds the $1\cdot\underbrace{0\ldots 0}_{i-(e+2)}y_{i}$ alternately in the left-associative sum. The $1$ after decimal prevents the round-off from going to zero and keeps it at most $(-1)^{y_1}\times 0\cdot1\ldots\times 2^{\y_{2:e+1}-q} = (-1)^{y_1}\times 1\cdot \ldots\times 2^{\y_{2:e+1}-q-1}$. Since $(y_2\ldots y_{e+1}) > 0$, this does not get rounded off to zero.

Finally, after Steps 2, 3 finish, we obtain $(-1)^{y_1}\times 1\cdot1y_{e+2}\ldots y_{e+t-1}\bar y_{e+t}\times 2^{\y_{2:e+1}-q}$, and Step 4 gives us $(-1)^{y_1}\times 1\cdot y_{e+1}\ldots  y_{e+t-1}\bar y_{e+t}\times 2^{\y_{2:e+1}-q}$.

Step 5 adds 0, and has been included only for notational convenience.

It can be verified that the number created by Steps 6-8 is positive. The only possibility of $y_1 \neq z_1$ is when $y_1 = 0, z_1 = 1$, in which case the entire sum will be a strictly positive number and never zero. Therefore, we can assume that $y_1 = z_1$.

We will show that Steps 7-8 create the number $(-1)^{z_1}\times 1\cdot\bar z_{e+1}\ldots \bar z_{e+t-1}\times 2^{\z_{2:e+1}-q}$, and Step 6 is essentially $(-1)^{z_1}\times 1\cdot\underbrace{0\ldots 0}_{t-2}\bar z_{e+t}\times 2^{\z_{2:e+1}-q}$. When $y_1 = z_1$, if $y_{2:e+1} < z_{2:e+1}$, this number is never zero. Therefore, if $y_1\neq z_1$ or $y_{2:e+1} \neq z_{2:e+1}$, $\mathbf{y}$ and $\mathbf{z}$ not being equal can be easily checked. Moving forward, we now assume $y_{1:e+1} = z_{1:e+1}$.

We now consider the following cases:\\

\textbf{Case 1: $\mathbf{y} = \mathbf{z}$:}\\
In this case, both $y_{e+t}$ and $z_{e+t}$ are either zero or one. We first analyze Step 6. If $z_{e+t} = 0$, then $(-1)^{z_1}\times 1\cdot0\ldots 01\times 2^{\z_{2:e+1}-q}$ is added to $(-1)^{y_1}\times 1\cdot\ldots 0\times 2^{\y_{2:e+1}-q}$, which gives $(-1)^{y_1}\times 1\cdot0\ldots \times 2^{\y_{2:e+1}-q-1}$ after rounding off. The remaining bits remain unchanged. A similar thing happens when both $z_{e+t}$ and $y_{e+t}$ are ones.

Therefore, after Step 6, the summation is $(-1)^{y_1} 1\cdot 0y_{e+2}\ldots y_{e+t-1}\times 2^{\y_{2:e+1}-q-1}$, after which Steps 7 and 8 alternately subtract $(-1)^{y_1}\times 1\cdot 0\ldots 0 z_{j}\times 2^{\y_{2:e+1}-q}$ and add $(-1)^{y_1}\times 2^{\y_{2:e+1}-q}$, which finally gives zero.

\textbf{Case 2: Min $j$ such that $y_j \neq z_j$ is less than $e+t$.}\\
In this case, we want to show that the final summation will be negative. Adding Step 6 to $(-1)^{y_1} 1\cdot y_{e+2}\ldots y_{e+t-1}\bar y_{e+t}\times 2^{\y_{2:e+1}-q}$ gives either $(-1)^{y_1} 1\cdot0 y_{e+2}\ldots y_{e+t-1}\times 2^{\y_{2:e+1}-q-1}$ or $(-1)^{y_1} (1\cdot 0y_{e+2}\ldots y_{e+t-1} + 2^{-(t-1)})\times 2^{\y_{2:e+1}-q-1}$, which is then subtracted by $-(-1)^{z_1}\times 2^{\z_{2:e+1}-q}$. The $2^{-(t-1)}$ is added when $y_m = 0$ and $z_m = 1$. Due to the existence of $j$, the summation will be negative as $y_{j+1}\ldots y_{e+t-1}+ 1 <1z_{j+1}\ldots z_{e+t-1}$ (as per the promise, $y_{e+t-1}y_{e+t}\neq 10$).

\textbf{Case 3: $\mathbf{y}$ and $\mathbf{z}$ differ only at the last bit.}\\
The only possibility in this case is $y_m = 0$ and $z_{m}=1$, which would make the left-associative summation $(-1)^{y_1}\times (1\cdot 0y_{e+2}\ldots y_{e+t-1} +2^{-(t-1)})\times 2^{\y_{2:e+1}-q-1}$. After adding Steps 7 and 8, the final value essentially becomes  $(-1)^{y_1}\times (1+2^{-(t-1)})\times 2^{\y_{2:e+1}-q-1}$, which is again a non-zero number.\\

Finally, a simple MLP having 2 hidden nodes in a hidden layer, similar to the construction given in the proof of Lemma \ref{lem:ub-fixed-pt}, can check if the output of self-attention is $0$, and returns 1, otherwise returns $0$. 
\end{proof}

\subsection{Tradeoff with Floating-Point Precision: Softmax Self-Attention}

We now show that using fixed-point precision $(t, e)$ where $t+e-1 = \mm + \A$ bits (one sign bit each), we can also solve a function, that can not be solved using any less number of bits. We assume the inputs 
\begin{itemize}
    \item The binary values satisfy $\y \leq \z$.
    \item $z_1, \dots z_e \neq 1\ldots 1$.
\end{itemize}

The input tokens are again similar to the previous case, where one part of the input token will be used to create the exponent in the floating-point precision format. For a given $(y_1, \ldots, y_{m}), (z_1, \ldots, z_m)$, where $m$ is odd:
\begin{itemize}
    \item $\Tm^3(\y, \z)_0 = ((y_1, \ldots, y_{e-1}), 0)$.
    \item $\Tm^3(\y, \z)_i = ((y_1, \ldots, y_{e-1}), y_{e+i-1})$ for $i\in [t-1]$.
    \item $\Tm^3(\y, \z)_{t-1+i} = ((y_{t+e-1}, \ldots, y_{t+2e-2-\A}), y_{t+2e-2-\A+i})$ for $i \in [t-1]$.
    \item $\Tm^3(\y, \z)_{2t-1} = ((y_{t+e-1}, \ldots, y_{t+2e-2-\A}), 0)$.
    \item $\Tm^3(\y, \z)_i = ((0), 0)$ for $i \in [2t, m-1]$.
    \item $\Tm^3(\y, \z)_{m} = ((z_1, \ldots, z_{e-1}), 0)$.
    \item $\Tm^3(\y, \z)_{m+i} = ((z_1, \ldots, z_{e-1}), z_{e+i-1})$ for $i\in [t-1]$.
    \item $\Tm^3(\y, \z)_{m+t-1+i} = ((z_{t+e-1}, \ldots, z_{t+2e-2-\A}), z_{t+2e-2-\A+i})$ for $i \in [ t-1]$.
    \item $\Tm^3(\y, \z)_{m+i} = ((0), 0)$ for $i \in [2t-1, m]$.
   \item $\Tm^3(\y, \z)_{2m+1} = ((z_{t+e-1}, \ldots, z_{t+2e-2-\A}),0)$
\end{itemize}

For ease of notation, we write $ \y_{1:e-1}$  as $e^1_y$, $ \y_{t+e+1:t+2e-\log \mm}$ as $e^2_y$, $\z_{1:e-1}$  as $e^1_z$ and $\z_{t+e+1:t+2e-\log \mm}$ as $e^2_z$.

Again, we choose $m := 2p-1$ and prove the following results.

\begin{lemma}[Communication Lower Bound]
        If Alice has $\y$ and Bob has $\z$ in the one-way communication where only Alice can send messages to Bob, Alice needs to send at least $m$ bits to Bob in order for Bob to compute the function $g(\y, \z)$.
\end{lemma}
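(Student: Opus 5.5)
We argue exactly as in \cref{lem:cc-eq} and \cref{lem:cc-F}, using a fooling-set argument (\cref{lem:fooling-set}) restricted to the diagonal of the promise set. The function $g$ here is $\EQ_m$ under the promises $\y\le\z$ and $\z_{1:e}\neq 1\ldots1$.

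\textbf{The fooling set.} Take
\begin{equation*}
F=\{(x,x)\,:\, x\in\{0,1\}^m,\ x_{1:e}\neq 1\ldots 1\}.
\end{equation*}
Every pair in $F$ satisfies both promises and has $g(x,x)=1$.

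\textbf{The fooling property.} Let $x\neq x'$ be two elements of $F$, and assume without loss of generality that $x<x'$ as binary numbers. Then $(x,x')$ satisfies $\y\le\z$, and $x'_{1:e}\neq 1\ldots1$ because $x'\in F$. So $(x,x')$ is a legal input, and on it $g(x,x')=0\neq 1$. This is exactly the fooling property.

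The reversed pair $(x',x)$ may violate the promise, but this does not matter. The definition of a fooling set only requires one of the two cross pairs to take a different value, and here $(x,x')$ already does.

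\textbf{One-way version.} We can also argue directly with the pigeonhole argument of \cref{lem:cc-eq}. Suppose two distinct strings $x<x'$ in $F$ produce the same message from Alice. If Bob holds $z=x'$, he must output $0$ when Alice has $x$ and $1$ when Alice has $x'$. Both inputs are legal, yet Bob sees the same transcript in both cases, which is a contradiction.

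\textbf{Counting.} The set $F$ has size $(2^e-1)\,2^{m-e}=2^m(1-2^{-e})$. For $e\ge 1$ this is greater than $2^{m-1}$, so $\lceil\log_2|F|\rceil=m$. Hence Alice must send at least $m$ bits.

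\textbf{Main obstacle.} The only delicate step is checking that every cross pair used is a legal input under the promises. The promise on $\z$ is met because Bob's string is always drawn from $F$, and the promise $\y\le\z$ is met by choosing the smaller string for Alice. The counting step also needs $e\ge1$ so that $|F|>2^{m-1}$.
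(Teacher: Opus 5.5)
Your proposal is correct and follows the paper's approach: the paper defers to the diagonal fooling set of \cref{lem:cc-F}, and you carry out that argument with the fooling set correctly adapted to the present promise $\z_{1:e}\neq 1\ldots1$, including the check that the cross pair $(x,x')$ with $x<x'$ is a legal input. One small correction to the counting: $|F|=2^m(1-2^{-e})$ is strictly larger than $2^{m-1}$ only for $e\ge 2$ (at $e=1$ it equals $2^{m-1}$ and the bound drops to $m-1$), which is harmless here because \cref{lem:ub-exp} assumes $e>\log\mm+2$.
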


\begin{proof}
    The proof is exactly the same as that of lemma \ref{lem:cc-F}.
\end{proof}

\begin{lemma}[Transformer Lower Bound]
\label{lem:lb-exp}
    Any linear Transformer having $d_v=1$ and $H=1$, that computes $T_m$ requires precision $p \geq m$ bits.
\end{lemma}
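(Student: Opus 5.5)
The plan is to follow the template of Lemma \ref{lem:lb-eq} and Lemma \ref{lem:lb-linear}: simulate any such Transformer by a one-way protocol from Alice to Bob, then invoke the communication lower bound stated just above (proved as in Lemma \ref{lem:cc-F}).

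\textbf{Splitting the tokens.} Take a Transformer with $d_v=1$, $H=1$ and precision $p$ that computes $\EQ_m$ in the representation $\Tm^3$. Let Alice hold the contiguous block $S=\{0,1,\ldots,m-1\}$. By the definition of $\Tm^3$, these tokens depend only on $\y$: they encode the exponent tuples $e^1_y, e^2_y$ and the mantissa bits of $\y$, followed by zero tokens. Bob holds every remaining token, including the final placeholder $n+1$, and all of these depend only on $\z$ or are constants. Both players fix the same canonical (lexicographically least) weights, so each can compute embeddings, queries and keys for its own tokens. In particular, both can compute $X_{n+1}W^Q$, since the placeholder is a fixed constant.

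\textbf{Simulation.} If the model is linear (as the statement literally says), Alice computes
\[
L=\sum_{j\in S} X_{n+1}W^Q (W^K)^T X_j^T X_jW^V
\]
with left-associative $p$-bit arithmetic, exactly as in Lemma \ref{lem:lb-linear}. She sends this single $p$-bit number. Bob adds his own terms in the same left-associative order and applies the final MLP. If the intended model is softmax, Alice instead sends the partial denominator and the partial numerator, as in Lemma \ref{lem:lb-eq}; Bob completes both sums and the division, then applies the MLP.

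The simulation is exact only if Bob's continuation reproduces the Transformer's bounded-precision computation step for step. Two facts secure this. First, because $S$ is a prefix in the summation order, the left-associative partial sum over $S$ is precisely the intermediate value the Transformer stores. Second, the remaining additions use only that value and Bob's own terms. This prefix choice is why Alice's set must be contiguous and come first.

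\textbf{Conclusion.} The protocol costs $p$ bits in the linear case, or $2p$ in the softmax case. The communication lower bound then gives $p\ge m$, or $2p\ge m$, i.e.\ $p\ge\mm$. That bound requires the fooling set $\{(x,x)\}$ to remain valid under the promise of this subsection ($\y\le\z$ and $z_1\ldots z_e\neq 1\ldots1$). So I will check that the set has size at least $2^{m-1}+1$: it has $2^m(1-2^{-e})$ elements, which exceeds $2^{m-1}$ for $e>1$, so $\lceil\log\rceil = m$.

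\textbf{Main obstacle.} The fragile points are the exact associativity and rounding bookkeeping that make Bob's simulation faithful, and pinning down which bound is claimed. The statement says $p\ge m$, which fits a one-message linear protocol, while $m=2p-1$ fits the two-message softmax protocol. I would first present the linear argument verbatim from Lemma \ref{lem:lb-linear}, then remark that the softmax variant yields $p\ge\mm$.
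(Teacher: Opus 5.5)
Your proposal is correct and follows the paper's own route: the Alice/Bob simulation of Lemma~\ref{lem:lb-eq}/Lemma~\ref{lem:lb}, with Alice holding the prefix $S=\{0,\ldots,m-1\}$, followed by the fooling-set lower bound. The paper's one-line proof actually reuses the two-message softmax protocol and concludes only $p\ge\mm$, so your one-message linear protocol is what matches the statement's literal $p\ge m$; your fooling set $\{(x,x): x_1\ldots x_e\neq 1\ldots 1\}$ is also the right adaptation of Lemma~\ref{lem:cc-F}, since the set used there does not respect the $\Tm^3$ promise $z_1\ldots z_e\neq 1\ldots 1$.
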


\begin{proof}
    This lower bound is again exactly same as the proof of Lemma \ref{lem:lb}, but Alice's set of tokens, $S$, is now $\{0, 1, \ldots, m-1\}$. Using this, we can show the Transformer requires precision at least $\mm$.
\end{proof}

\begin{lemma}[Upper Bound]
\label{lem:ub-exp}
    There exists a Transformer which can compute the function $T_m$, with floating-point precision, using precision bits  $p = (t, e)$ (one in the exponent is the sign bit for each mantissa and exponent), with $t+e = \mm + \log \mm$ and $e > \log \mm + 2$, $t > 2$.
\end{lemma}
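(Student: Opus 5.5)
I would follow the template of \cref{lem:ub-fixed-pt} and \cref{lem:ub-linear}. The self-attention numerator certifies equality of the first block of bits, those carried by tokens $0,\dots,t-1$ versus $m,\dots,m+t-1$. The denominator certifies equality of the second block, carried by tokens $t,\dots,2t-1$ versus $m+t,\dots,2m+1$. The MLP with two hidden ReLU nodes then checks whether the ratio equals a fixed reference value $c$. Throughout I take $W^Q=\ln 2\,(1,0,0)^T$, $W^K=(0,1,0)^T$, $W^V=(0,0,1)^T$ and the query token $X_{n+1}=(1,\ast,\ast)$. With these choices $e^{\langle Q_{n+1},K_i\rangle}=2^{a_i}$, where $a_i$ is the second embedding coordinate of token $i$. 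Because we are in floating point, these weights $2^{a_i}$ are exactly representable as long as $|a_i|$ fits into the $e-1$ exponent bits.

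\textbf{Numerator.} The exponent part $e^1_y$ of the tuple sets $a_i=e^1_y-q$ for the tokens $i\in[0,t-1]$, with $q:=2^{e-2}-1$. The third coordinate carries the mantissa bits $y_{e},\dots,y_{e+t-2}$ through the same add-then-subtract trick as Steps 1--4 of \cref{lem:ub-linear}. Alternating tokens contribute $\pm 1\cdot 0\ldots0y_j$ and $\mp 1$, so that left-associative summation assembles the number $1\cdot y_e\ldots y_{e+t-2}\times 2^{e^1_y-q}$ without losing bits to rounding. Bob's tokens $m,\dots,m+t-1$ build the negation of the corresponding $z$-number, and one dummy token adds a fixed constant $c_1$. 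Under the promise $\y\le\z$, the numerator therefore equals $c_1$ exactly if $\y_{1:e+t-2}=\z_{1:e+t-2}$, and is $<c_1$ otherwise. Since the numerator is an exact floating-point cancellation, the argument of \cref{lem:ub-linear} transfers almost verbatim.

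\textbf{Denominator.} For the second block I would encode the bits in the exponents instead. Give Alice's tokens $t-1+i$ the key $a=e^2_y-q+(\text{offset}_i)$, with value coordinate $0$, chosen so that $\sum_i 2^{a_{t-1+i}}$ reproduces $\bar{\y}$ restricted to that block, scaled by $2^{e^2_y}$. Bob's tokens encode $\z$ on the same block, with a complementary offset. The point is that the sum of $\mm$ powers of two, each carrying one bit, needs about $\log\mm$ extra exponent range to avoid overflow and about $\log\mm$ guard bits of mantissa to avoid premature rounding. This is exactly why the statement needs $t+e=\mm+\log\mm$ and $e>\log\mm+2$, and why the tuples drop $\A$ bits from the exponent block. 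The first block's contributions to the denominator are $O(2^{-\text{const}})$ relative to the rest and are absorbed by rounding, as in \eqref{eqn:1}. If the second blocks agree, the denominator equals a fixed $D_0=2^{e^2_y}(2^{k}-1)$; otherwise it is strictly larger, or becomes \textsf{float(`inf')}, by the same minimal-differing-index argument used in \cref{lem:ub-fixed-pt}.

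\textbf{Main obstacle.} The hard step is controlling rounding in the left-associative denominator sum. Here the $2^{a_i}$ span many binades, and intermediate sums may lose low-order bits before the matching $\z$ terms arrive. I would first try ordering the tokens so that each partial sum stays within a window of $t$ mantissa bits. Concretely, I would interleave each $y$-bit with its complementary $z$-bit (via the offset choice), so that the running sum grows monotonically by at most one binade per $\log\mm$ tokens. I would then verify by induction that every partial sum is representable. The remaining checks are routine: the YES ratio $c_1/D_0$ is exactly representable, and any NO ratio differs from it in the retained mantissa bits, so the two-node MLP separates them.
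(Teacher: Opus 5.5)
Your construction breaks in the numerator. You reuse the device of \cref{lem:ub-linear} and put the first-block exponent into the \emph{key}: $a_i=e^1_y-q$ for tokens $0,\dots,t-1$, and $e^1_z-q$ for Bob's tokens $m,\dots,m+t-1$. In softmax attention the key also sets that token's weight in the denominator. So these $2t$ tokens add about $t\,2^{e^1_y-q}+t\,2^{e^1_z-q}$ to the denominator. That amount depends on $\y$, with keys up to $2^{e-2}$. Your second-block keys $e^2_y-q+\text{offset}_i$ stay below roughly $2^{e-1-\A}-2^{e-2}$, because $e^2_y$ has only $e-1-\A$ bits. Hence the claim that the first block is absorbed by rounding ``as in \eqref{eqn:1}'' is false; there the first-block keys were small, input-independent constants. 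For large $e^1_y$ the first-block terms exceed every second-block term by more than the mantissa window. It is then the second block that gets rounded away, so inputs differing only there become indistinguishable. For moderate $e^1_y$ the YES denominator is $D_0+\Theta(t\,2^{e^1_y-q})$, so the YES output depends on $\y$.

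The missing idea, which is what the paper does, is to give every first-block token (and the dummies) the minimal key $-(2^{e-1}-1)$. You then write $2^{e^1_y}$, resp.\ $2^{-i}\cdot 2^{e^1_y}$ for a $1$-bit, directly into the value coordinate, with negatives on Bob's side, and take $W^V=(0,0,2^{2^{e-1}-1})^T$. Left-associative multiplication $(2^{-(2^{e-1}-1)}X_i)W^V$ returns the value exactly, without overflow. The whole first block then contributes only about $(2t+1)\,2^{-(2^{e-1}-1)}$ to the denominator. This is where the $\log\mm$ loss comes from, not guard bits or overflow. Because $e^2_y$ gets only $e-1-\A$ bits and $e>\A+2$, the second-block scale $2^{-e^2_y}$ lies more than $t$ binades above $2^{\log t-(2^{e-1}-1)}$, so these residual terms round to nothing. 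No add-then-subtract trick is needed, since the values are already distinct powers of two within $t$ consecutive exponents.

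There are two further problems. First, your YES output is not a single number. The numerator ends at a fixed $c_1$, but $D_0=2^{e^2_y}(2^k-1)$ varies with $e^2_y$. So $c_1/D_0$ takes many values, and a two-ReLU MLP testing equality with one constant cannot accept all YES instances. In the paper the last dummy is Bob's token $2m+1$, which knows $e^2_z$ and contributes $1\cdot0\times 2^{-e^2_z}$ to the numerator. When $e^2_y=e^2_z$ this cancels the denominator's scale $2^{-e^2_y}$, so the YES output is input-independent.

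Second, the interleaving you propose for your ``main obstacle'' cannot be realized in summation order. In $\Tm^3$ all of Alice's tokens precede Bob's, and the left-associative sum follows token order. It is also unnecessary. Alice's second-block terms are distinct powers $2^{-j-e^2_y}$ spanning $t$ consecutive exponents, so her partial sums are exact. In a YES instance Bob's terms $2^{-j-e^2_z}$ fill exactly the complementary positions, so no carries or rounding occur. Rounding can occur only in NO instances, where monotonicity of rounding suffices.
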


\begin{proof}
The idea of the proof is similar to the previous construction, but with a slight modification where each token $\Tm^3(\y, \z)_i$ is a tuple $(\Tm^3(\y, \z)^1_i, \Tm^3(\y, \z)^2_i)$, where $\Tm^3(\y, \z)^1_i \in \{0, 1\}^{e-1}$ and $\Tm^3(\y, \z)_i^2 \in \{0, 1\}$.

\paragraph{Overview.} We again prove this upper bound on the function $T_m$ by checking equality of the first half of $\y$ and $\z$ in the numerator, and the rest in the denominator. However, we obtain a slightly weaker tradeoff in this lemma, for which we check equality of $\y_{1:\mm+\A}$ and $\z_{1:\mm+\A}$ (slightly more than half) in the numerator, and that of $y_{\mm+\A:m}$ and $\z_{\mm+\A:m}$ in the denominator.

The idea is again similar to the previous constructions, where the numerator computes $$1\cdot y_{e}\ldots y_{e+t-1}\times 2^{e^1_y} - 1\cdot z_{e}\ldots z_{e+t-1}\times 2^{e^1_z},$$ along with a normalizing term to indicate when the first half if similar. The denominator will have terms from the first half which will be ignored, and the second half will be a fixed number $(1-2^{(t-1)})\times 2^{e^2_y}$ if and only if the second half is also equal.

\paragraph{Embedding.} We define the embedding as follows:

\begin{table}[H]
\resizebox{\columnwidth}{!}{%
\begin{tabular}{c|c|c|c}
    Step& $i$ & $\Tm^3(\y, \z)^2_i = 0$ & $\Tm^3(\y, \z)^2_i=1$ \\ \hline
    1 & $i = 0$ & $\begin{bmatrix}
        1 && {-(2^{e-1}-1)} && 1\cdot0\times 2^{e^1_y}
    \end{bmatrix}$ & NA\\
    2 & $i\in [t-1]$ & $\begin{bmatrix}
        1 && -(2^{e-1}-1) && 0
    \end{bmatrix}$ & $\begin{bmatrix}
        1 && -(2^{e-1}-1) && 2^{-i}\times 2^{e^1_y}
    \end{bmatrix}$\\
    3 & $i \in [t, 2t-2]$ & $\begin{bmatrix}
        1 && -(i-t+1)-e^2_y && 0
    \end{bmatrix}$ & $\begin{bmatrix}
        1 && -N && 0
    \end{bmatrix}$\\
    4 & $i=2t-1$ & $\begin{bmatrix}
        1 & -e^2_y & 0
    \end{bmatrix}$ & NA\\
    5 & $i \in [2t, m-1]$ & $\begin{bmatrix}
        1 && -N && 0
    \end{bmatrix}$ & NA\\
    6 & $i = m$ & $\begin{bmatrix}
        1 && -(2^{e-1}-1) && -1\cdot0\times 2^{e^2_y}
    \end{bmatrix}$ & NA\\
    7 & $i\in [m+1, m+t-1]$ & $\begin{bmatrix}
        1 && -(2^{e-1}-1) && 0
    \end{bmatrix}$ & $\begin{bmatrix}
        1 && -(2^{e-1}-1) && -2^{-(i-m)}\times -2^{e^1_z}
    \end{bmatrix}$\\
    8 & $i \in [m+t, m+2t-2]$  & $\begin{bmatrix}
        1 && -N && 0
    \end{bmatrix}$ & $\begin{bmatrix}
        1 && -(i-m-t+1)-e^2_z && 0
    \end{bmatrix}$\\
    9 & $i \in[m+2t-1, 2m]$ & $\begin{bmatrix}
        1 && -N && 0
    \end{bmatrix}$ & NA\\
    10 & $i = 2m+1$ & $\begin{bmatrix}
        1 && -(2^{e-1}-1) && 1\cdot0\times 2^{-e^2_z}
    \end{bmatrix}$ & NA\\
    11 & $i > 2m+1$ & $\begin{bmatrix}
        1 && -N && 0
    \end{bmatrix}$ & NA
\end{tabular}%
}
\label{tab:pe-3}
\caption{Embedding}
\end{table}

\paragraph{Construction of the Weights.} We define the query weights $W^Q \in \R^{d\times d_q}$, for query dimension  $d_q = 1$, as,
    \begin{equation*}
        W^Q = \ln 2\begin{bmatrix}
             1  \\
             0  \\
             0   
        \end{bmatrix},
    \end{equation*}
    the key weights $W^K\in \R^{d\times d_q}$, as,
    \begin{equation*}
        W^K = \begin{bmatrix}
              0 \\
              1 \\
              0 
        \end{bmatrix},
    \end{equation*}
    and the value weights $W^V \in \R^{d\times d_v}$, for value dimension $d_v = 1$, as,
    \begin{equation*}
        W^V = \begin{bmatrix}
            
            0 \\
            0 \\
            2^{2^{e-1}-1}
        \end{bmatrix}.
    \end{equation*}

\paragraph{Numerator.} For computing the numerator, as before we compute each of $e^{\langle Q_{n+1} K_i\rangle}X_iW^V$ by left-associative multiplication, after which we exponentiate.

From the embeddings, Step 1 gives $1\cdot 0\times 2^{e^1_y}$, after which Step 2 adds $y_{e+i-1}2^{-i}\times 2^{e^1_y} = y_{e+i-1}\times 1\cdot 0\times 2^{e^1_y-i}$, for $i\in [t-1]$, which is also in the permissible range. Therefore, Step 1 and 2 combined give $1\cdot y_{e}\ldots y_{e+t-2}\times 2^{e^1_y} $. 

Steps 3-5 are zeros. Step 6 subtracts $1\cdot 0\times 2^{e^1_z}$ from this. If $\y_{1:e-1} < \z_{1:e-1}$, then after Step 6, the value becomes negative (since $1\cdot 0\times 2^{e^1_z} > 1\cdot y_e\ldots y_{e+t-2}\times 2^{2^1_y}$), and continues to stay negative as Step 7 essentially generates the number $-0\cdot z_{e}\ldots z_{e+t-2}\times 2^{e^1_z}$, after which everything else is zero. There is no loss due to rounding-off since $t-1 < 2^{e-1}$ (since $e > \log \mm$ and $t+e = \mm + \log \mm$).

Now, assuming $e^1_y = e^1_z$, after Step 6, the summation becomes $0\cdot y_{e}\ldots y_{e+t-2}\times 2^{e^1_y}$, which is still within the range of the allowed precision bits because $t-1 < 2^{e-1}$. After this, in Step 7, $-z_{e+j}\times 2^{-j}\times 2^{e^1_z}$ is added from this summation, for $j\in [t-1]$, which gives the number  $(1\cdot y_{e}\ldots y_{e+t-2}-1\cdot z_e\ldots z_{e+t-2})\times 2^{e^1_z}$.  If $\y_{e+t-2} = \z_{e+t-2}$, then this value is zero, otherwise it is some negative number (since $\y < \z$), with however a positive exponent (due to $t-1 < 2^{e-2}$). 

Steps 8, 9, 11 add 0 to this, and Step 10 makes the summation equal to $1\cdot 0\times 2^{-e^2_z}$ if $\y_{1:e+t-2}=\z_{e+t-2}$, otherwise some negative number (the exponent of $(1\cdot y_{e}\ldots y_{e+t-2}-1\cdot z_e\ldots z_{e+t-2})\times 2^{e^1_z}$ is positive).

\paragraph{Denominator.} For denominator, we compute each of $e^{X_{n+1}W^Q (W^K)^T X_i}$. Steps 1, 2 yield the summation as $t\times 1\cdot 0\times 2^{-(2^{e-1}-1)} = 1\cdot 0\times 2^{\log t - (2^{e-1}-1)}$, where $1\cdot 0\times 2^{-(2^{e-1}-1)}$ is the smallest possible positive number. To this, Step 3 adds $\bar y_{e+2t-2 - \A +j}2^{-(j)}\times 2^{-e^2_y}$, and along with Step 4, gives the number $0\cdot \bar y_{e+2t-\A-1}\ldots \bar y_m \times 2^{-e^2_y} + 1\cdot 0\times 2^{\log t - (2^{e-1}-1)}$. Now, $e^2_y$ consists of $e-1-\A$ bits, which implies 
\begin{equation*}
    \begin{split}
        &2^{e-1}-1-\log t - e^2_y > 2^{e-1}-1-\log t-2^{e-1-\A}-1 \\
        & = 2^{e-2} + ((1-\frac{1}{\mm})2^{e-2} - \log t -1) > t. 
    \end{split}
\end{equation*}
This implies that $1\cdot 0\times 2^{\log t - (2^{e-1}-1)}$ will be ignored while rounding off, giving the final value as $0\cdot \bar y_{e+2t-\A-1}\ldots \bar y_m \times 2^{-e^2_y}$. 

Step 5 simply adds zero and have been used as dummy tokens for notational ease.

Step 6-7 again forms $1\cdot 0\times 2^{\log t - (2^{e-1}-1)}$, which will be rounded off, and Step 8 adds $z_{e+2t-2 - \A +j}2^{-(j)}\times 2^{-e^2_z}$, which gives us $0\cdot \bar y_{e+2t-\A-1}\ldots \bar y_m \times 2^{-e^2_y} + 0\cdot  z_{e+2t-\A-1}\ldots z_m \times 2^{-e^2_z}$ as the final answer.

\paragraph{Wrap up.} We study the following cases, by first computing the value of the output of the self-attention for the equality case. When $y_1\ldots, y_m = z_1\ldots z_m$: 
\begin{itemize}
    \item The numerator will become $1\cdot y_{e}\ldots y_{t+e-2}\times 2^{e^1_y} - 1\cdot z_{e} \ldots z_{t+e-2}\times 2^{e^1_z} + 1\cdot0\times 2^{-e^2_z}$, which is equal to just $1\cdot0\times 2^{-e^2_y}$.
    \item For the denominator, the value will essentially become $0\cdot \bar y_{t+2e-1-\A}\ldots \bar y_{m}\times 2^{-e^2_y}  + 0\cdot z_{t+2e-1-\A}\ldots z_{m}\times 2^{-e^2_z} $, which is equal to $1\cdot\underbrace{1\ldots 1}_{t-2}\times 2^{-(e^2_z+1)}$.
    \item This gives the softmax as $\frac{1}{1-2^{-(t-1)}}$.
\end{itemize}

We now show that if equality does not hold, the self-attention's output will always be different, and can be checked using the given bits of precision.

\textbf{Case 1, $y_1\ldots y_\mm \neq z_1\ldots z_\mm$:}

\textbf{Case 1.1, $e^1_y < e^1_z$:}\\
In this case, the value of the numerator will be negative  as discussed before.

\textbf{Case 1.2, $e^1_y = e^1_z$ but $y_{e}\ldots y_{e+t-1} < z_{e}\ldots z_{e+t-1}$:}\\
Again in this case, we the numerator will be negative.\\

\textbf{Case 2, $y_1\ldots y_\mm = z_1\ldots z_\mm$ but $y_1\ldots y_m \neq z_1\ldots z_m$:}\\
In this case, the numerator will always be $1\cdot0\times 2^{-e^2_z}$. Let us now analyze the denominator.

\textbf{Case 2.1, $e^2_y < e^2_z$:}\\
The denominator will be $0\cdot \bar y_{t+2e-1-\A}\ldots \bar y_{m}\times 2^{-e^2_y}  + 0\cdot z_{t+2e-1-\A}\ldots z_{m}\times 2^{-e^2_z} < 1\cdot 1\times 2^{-e^2_y}$, and  the final output as $\geq \frac{2^{e^2_y - e^2_z}}{1 + 2^{-1}} $, which is strictly greater than the equality case where the value is $\frac{1}{1-2^{-(t-1)}}$ for $t > 2$.

\textbf{Case 2.2, $e^2_y = e^2_z$ but $y_{t+2e-1-\A}\ldots y_{m}$:}\\
In this case, there will be a maximum $i_0$ such that for all $i\in [t+2e-1-\A, i_0-1]$, $y_i = z_i$. Due to the condition that $\mathbf{y} < \mathbf{z}$, we must have $y_{i_0} = 0$ and $z_{i_0} = 1$. Therefore, from the value of the denominator, the exponent $2^{-e^2_y}$, and for the mantissa, the summation will be $0\cdot \underbrace{1\ldots 1}_{i_0-1} \ldots \times 2^{-e^2_y}$, while for the $i_0$-th place after decimal, Step 3 and Step 8 will both give $2^{-i_0}\times 2^{-e^2_y}$. Therefore, the value of the denominator will be strictly greater than $1\cdot1\ldots 1\times 2^{-e^2_y}$, and , the output of the self-attention will also be $< \frac{1}{1-2^{-(t-1)}}$. This will differ from the expected output at the $i_0$-th position after decimal. Since $i_0 \leq t$, this can be checked.\\

Finally, we can again construct an MLP similar to the one used in the proof of Lemma \ref{lem:ub-fixed-pt}, containing only one hidden layer which has only 2 nodes, that returns 0 if the output of self-attention is not equal to $\frac{1}{2-2^{-(t-1)}}$, and $1$ otherwise.
\end{proof}

    \begin{proof}[Proof of Theorem \ref{thm:tradeoff-fp}]
            We choose $m := 2p-1$ and define $\Gamma^{(t, e)}_n$ as $T_m$. From Lemma \ref{lem:lb-exp}, $\Gamma^{(p)}_n$ can not be computed by a Transformer using $\leq \mm-1 = p-1$ bits, but can be computed by a Transformer using $\mm + \A +2 = p + \log p +2$ bits (Lemma \ref{lem:ub-exp}).
    \end{proof}

\section{Details of Experiments}
\label{apx:exp_eq}

In this section, we complete the details of the experiments. A one-layer Transformer was trained to learn the equality task, and then quantized to both fixed-point and floating-point precisions using PTQ. After this, each model was further fine-tuned using QAT to report the final accuracies. The experiments were performed over 10 seeds randomly generated.

The experiments were repeated for $4$ values of $m$ with $m=100, 50, 30, 15$.

\paragraph{Input Generation.} For each $m$, the tokens are of length $n = 2m+1$, where the first $m$ tokens contain the first string $\y \in \{0, 1\}^m$, the next $m$ tokens contain the second string $\z \in \{0, 1\}^m$, and the final token contains zero. The output is expected in the final token.

The strings are chosen such that:
\begin{enumerate}
    \item With probability $1/2$, the strings will be equal. In this case, $\y$ is chosen uniformly at random and $\z$ is set to $\y$.
    \item With the remaining $1/2$ probability, the strings will be unequal. In this case, $\y$ is chosen uniformly at random, and a set $P\subset [m]$ of size $|P| = \lfloor 0.75 m\rfloor$ is also chosen uniformly at random. Then, $\z$ is set of the string whose bits are equal to that of $\y$ at indices $[m]\backslash P$, and are flipped at indices in $P$.
\end{enumerate}

\paragraph{Architecture Details.} A one-layer Transformer was trained on this dataset, defined as exactly similar to Definition \ref{def:soft-SA}. The Transformer had heads $H = 2$, embedding dimension $d = 8$ (each head had hidden dimension 4), and the MLP had a hidden layer of size $32$. 

The $i$-th token containing the value $X_i$ was encoded as the integer $i+n X_i$, for $i\in \{0, \ldots, n-1\}$. After this, it was fed to an embedding function, followed by addition with the sinusoidal positional encoding of \cite{vaswani2017attention}. Layer norms were present after the soft-max and after the MLP layer.

\paragraph{Training Dynamics.} The model was trained with AdamW optimizer of PyTorch optim module (\textsf{torch.optim}\footnote{https://pytorch.org/}) with weight decay being 0 and other configurations set to the default values, over $\fb$ precision. The learning rate was chosen to be $0.001$ and the model was trained for 30,000 epochs for $m=100$, 20,000 epochs for $m=50$, 6,000 epochs for $m=30, 15$, each with a batch size of 512. The test was performed every 50 epochs with a fresh batch having batch size 5120.

The learning curves (loss $\pm$ standard deviation and test accuracy $\pm$ standard deviation), for each $m$, averaged over 10 seeds, can be found as follows.

The models were trained on NVIDIA A100 GPUs having 80 GB GPU RAM.

\begin{figure}[H]
    \centering
    \includegraphics[width=\linewidth]{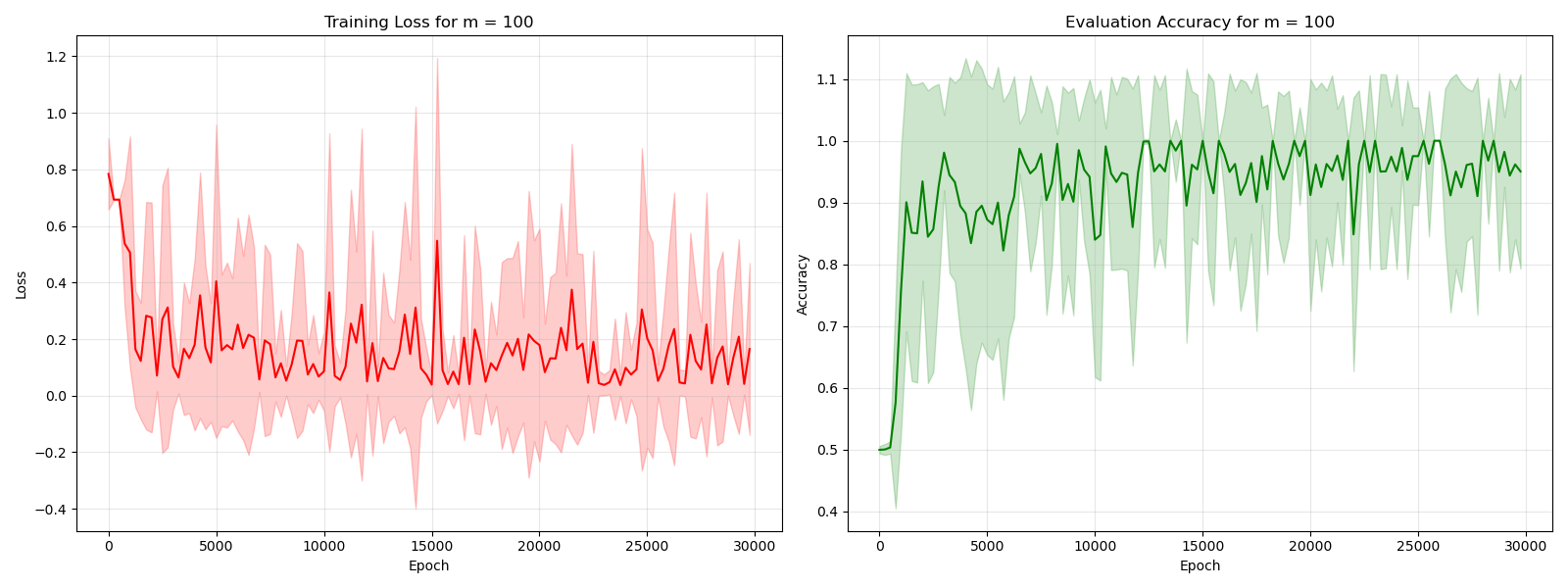}
    \caption{Learning curves with mean and one standard deviation for $m=100$.}
    \label{fig1}
\end{figure}
\begin{figure}[H]
    \centering
    \includegraphics[width=\linewidth]{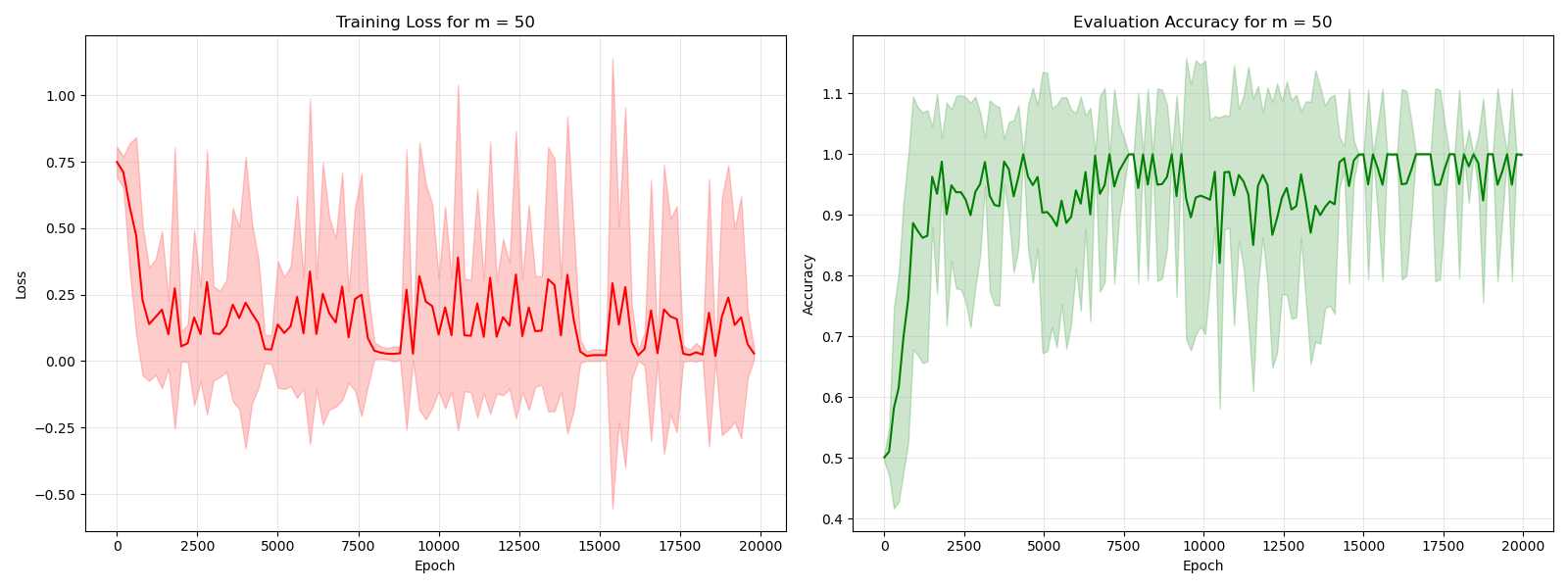}
    \caption{Learning curves with mean and one standard deviation for $m=50$.}
    \label{fig2}
\end{figure}
\begin{figure}[H]
    \centering
    \includegraphics[width=\linewidth]{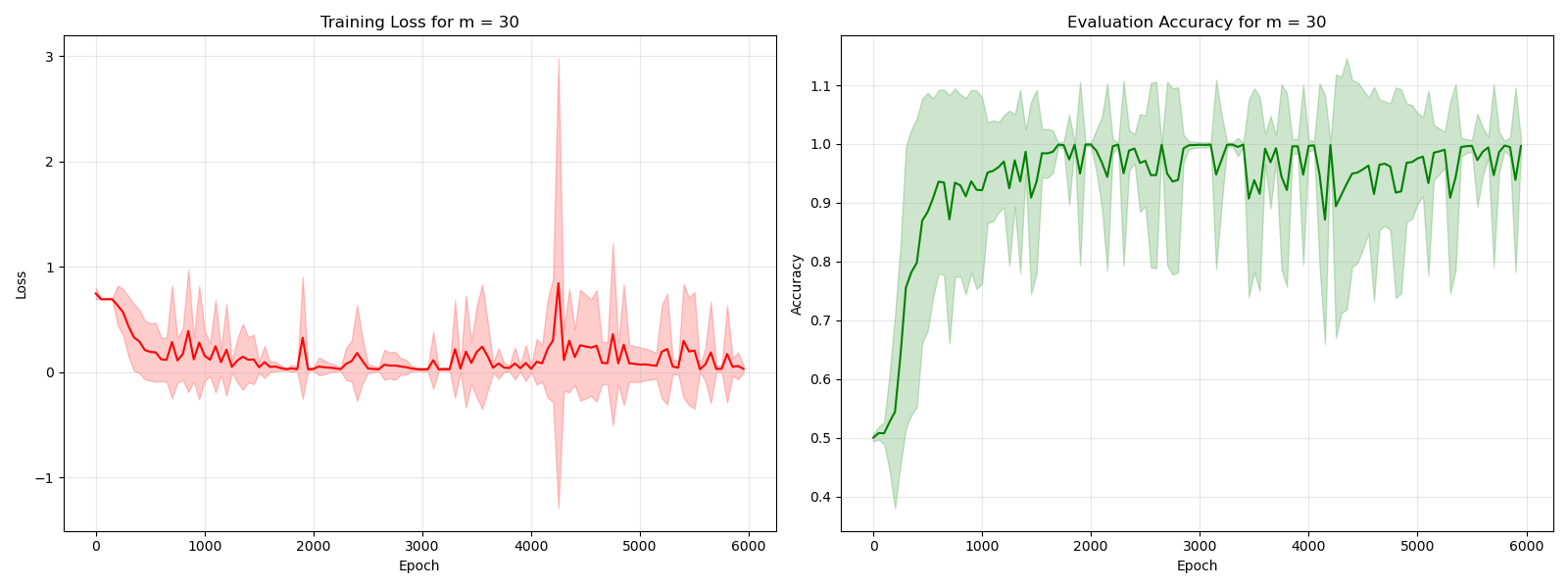}
    \caption{Learning curves with mean and one standard deviation for $m=30$.}
    \label{fig3}
\end{figure}
\begin{figure}[H]
    \centering
    \includegraphics[width=\linewidth]{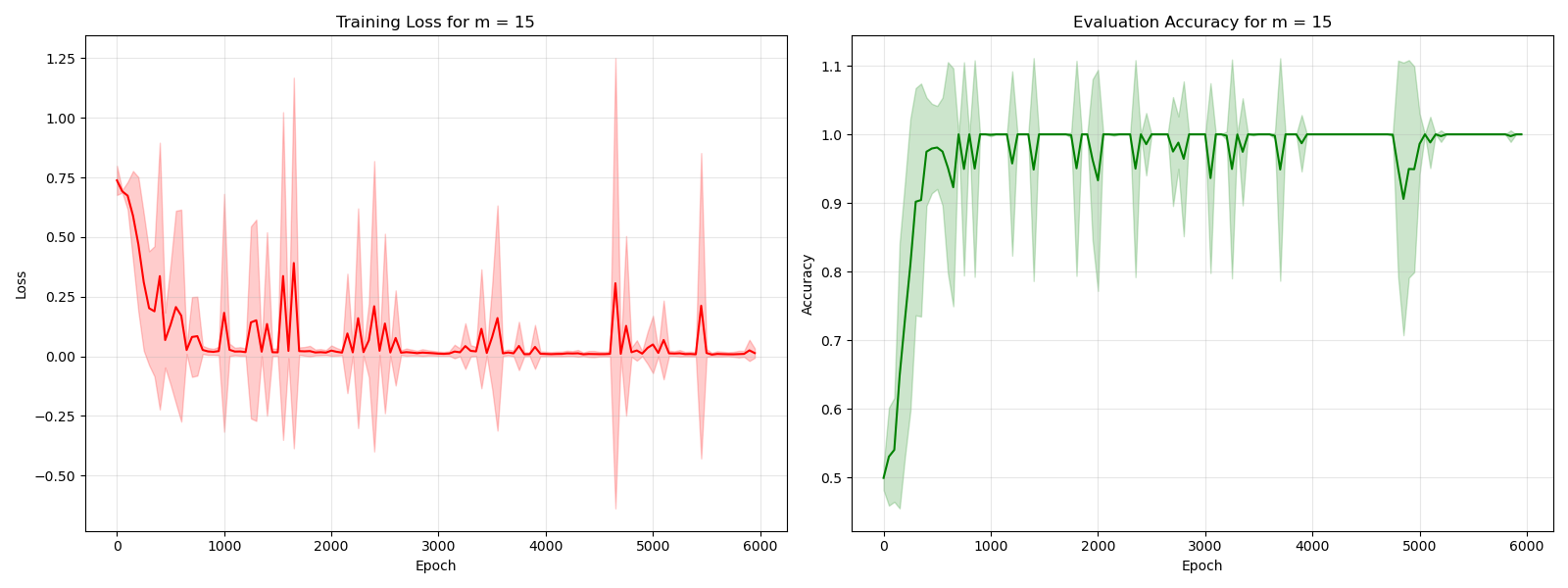}
    \caption{Learning curves with mean and one standard deviation for $m=15$.}
    \label{fig4}
\end{figure}

\paragraph{Quantization.} Once the model was trained, first PTQ was performed for fixed-point precisions (INT12, INT8, INT6, INT4) and floating-point precisions (FP16, FP\_E4M3, FP\_E5M2). 

Floating-point precision was performed by choosing a valid scale (within the range $[-2^{p-1}:2^{p-1}]$), and then rounding the number to $p-1$ bits and one sign bit. Fixed-point precision was performed by bringing the number in the specific precision format. Both precision formats had the maximum possible number allocated as infinity.

For quantizing the model, all the weights and activations were quantized, and the accuracies computed over a batch of size 5120 for 10 random seeds.

After PTQ, the model underwent QAT for 500 epochs, each epoch having the same training dynamics as before.  At each epoch, the loss function and the gradients were computed in $\fb$, and quantized after that at each step. The accuracies were also computed during QAT every 5 steps with a batch size of $5120$.

The QAT curves are shown as follows, containing the training loss and one standard deviation, and the test loss and one standard deviation, averaged over 10 random seeds. This shows that the model does not get much better even after the fine-tuning, hinting that the best fitting weights in the precision format has already been obtained.

\begin{figure}[H]
    \centering
    \includegraphics[width=\linewidth]{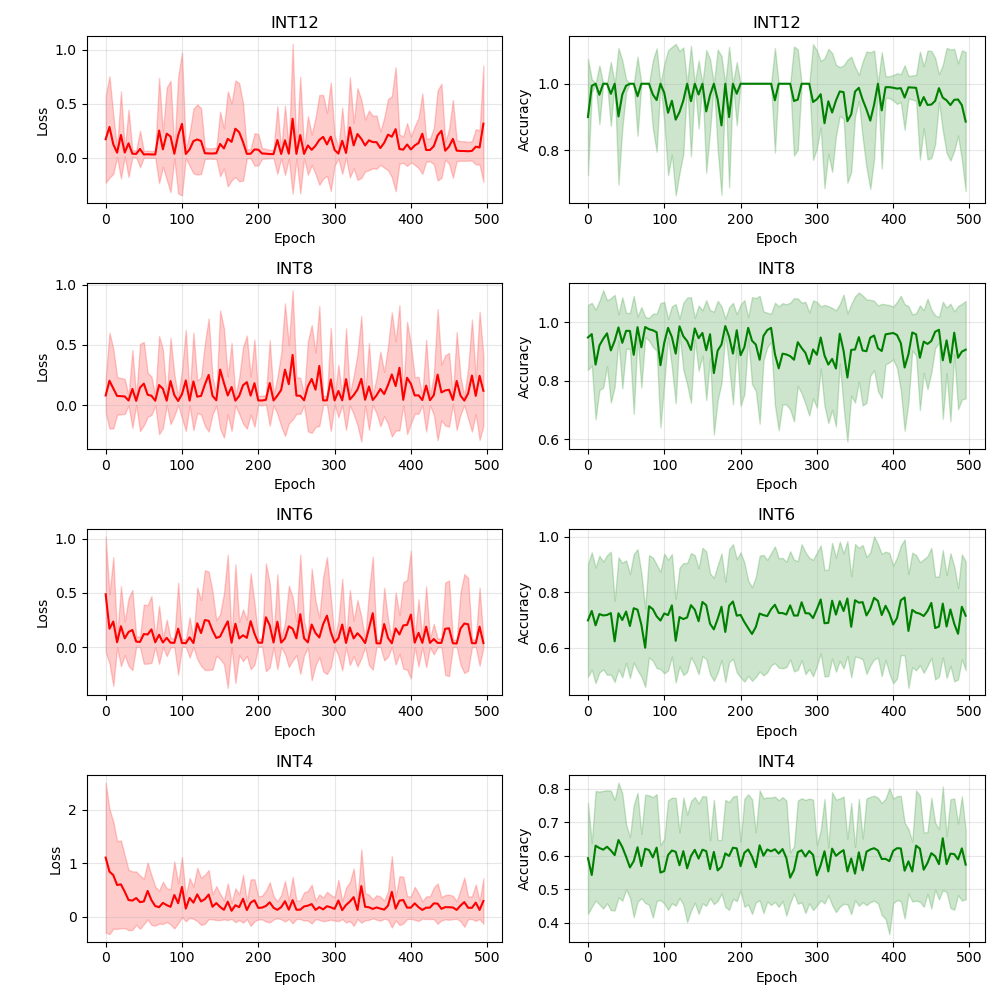}
    \caption{QAT curves for $m=100$ using fixed-point precision}
    \label{QAT1}
\end{figure}
\begin{figure}[H]
    \centering
    \includegraphics[width=\linewidth]{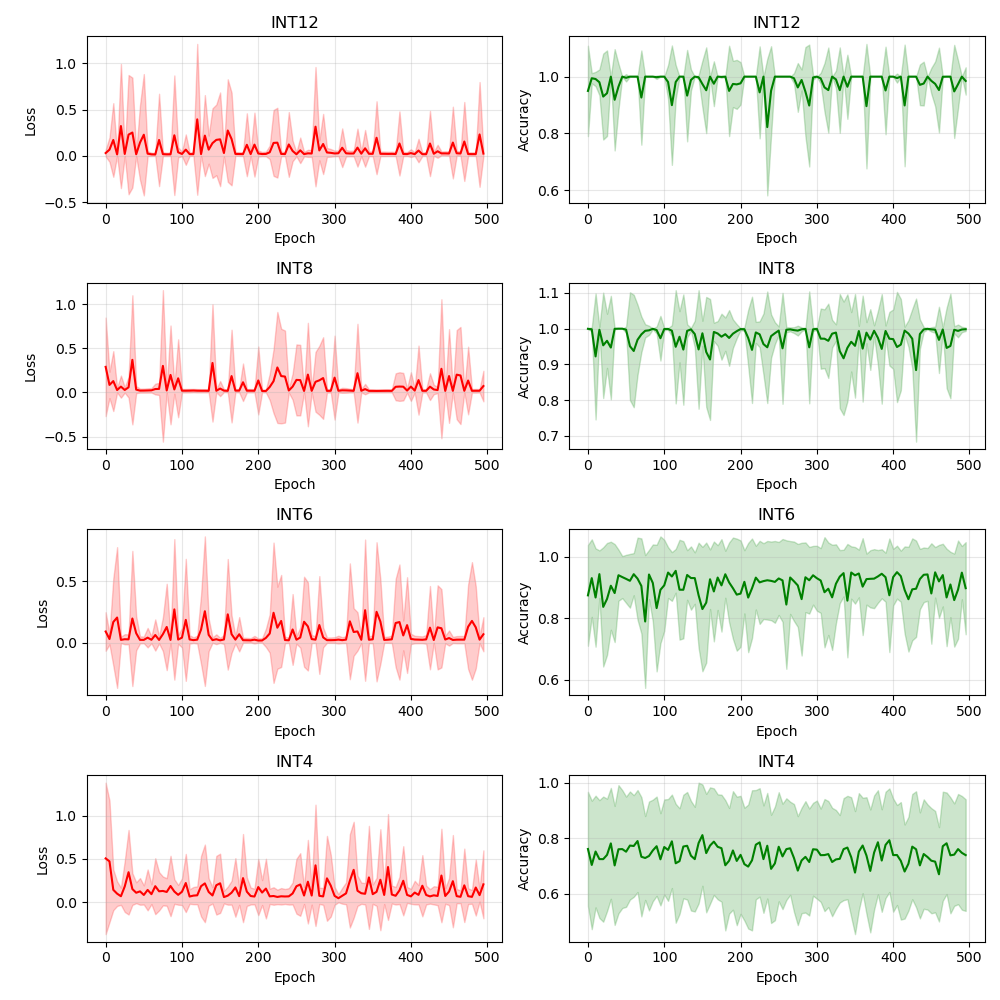}
    \caption{QAT curves for $m=50$ using fixed-point precision}
    \label{QAT2}
\end{figure}
\begin{figure}[H]
    \centering
    \includegraphics[width=\linewidth]{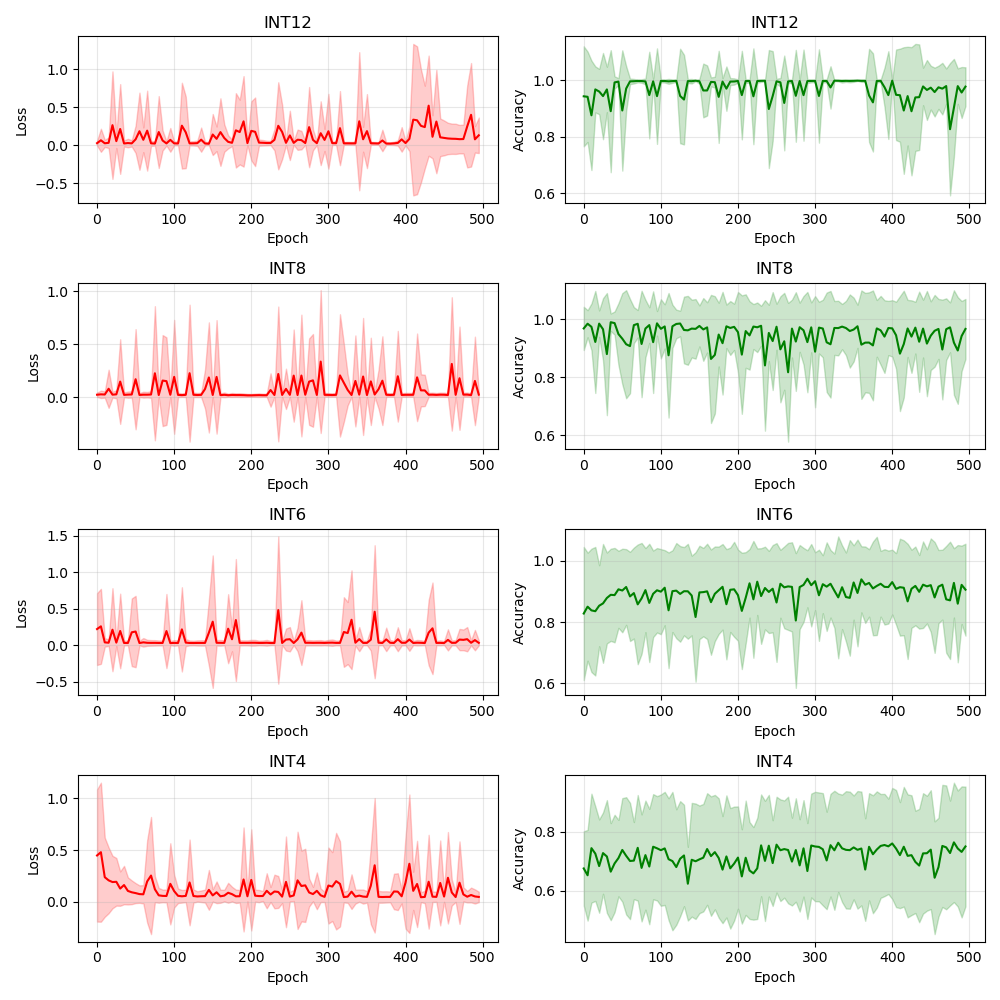}
    \caption{QAT curves for $m=30$ using fixed-point precision}
    \label{QAT3}
\end{figure}
\begin{figure}[H]
    \centering
    \includegraphics[width=\linewidth]{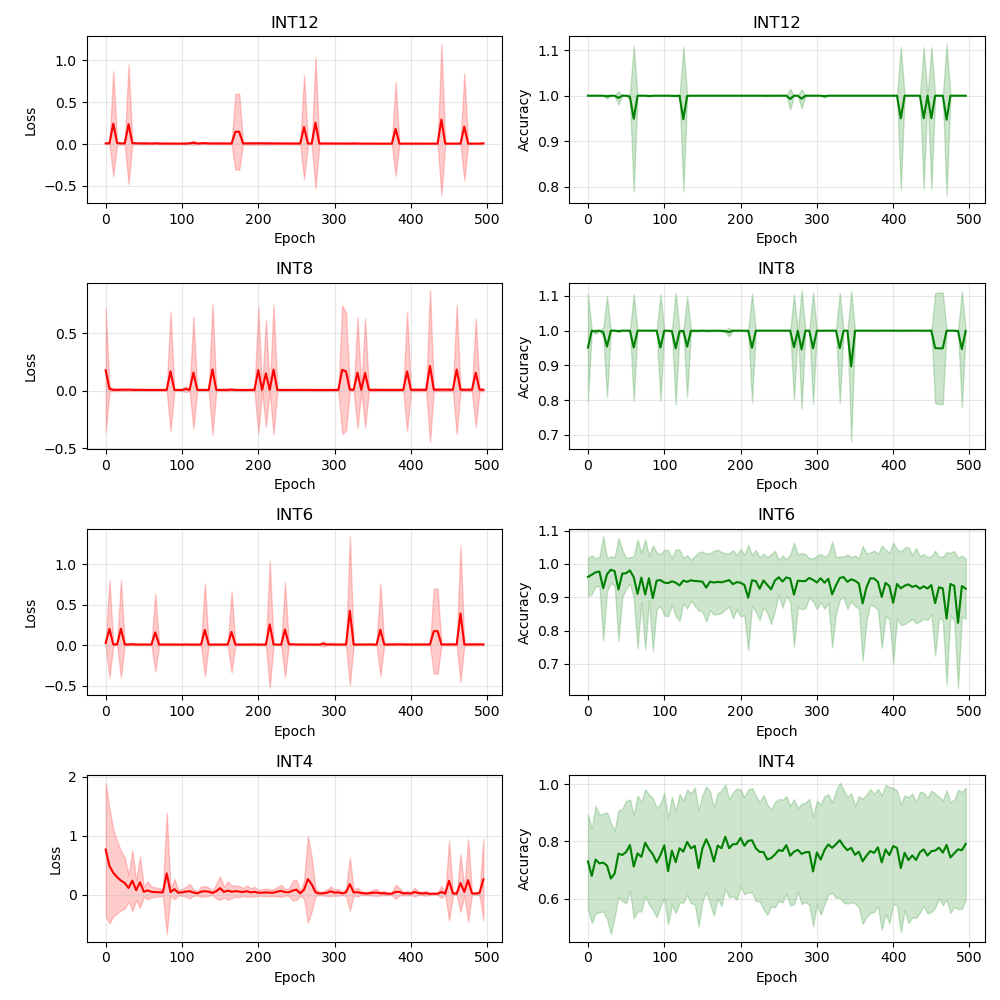}
    \caption{QAT curves for $m=15$ using fixed-point precision}
    \label{QAT4}
\end{figure}

\begin{figure}[H]
    \centering
    \includegraphics[width=\linewidth]{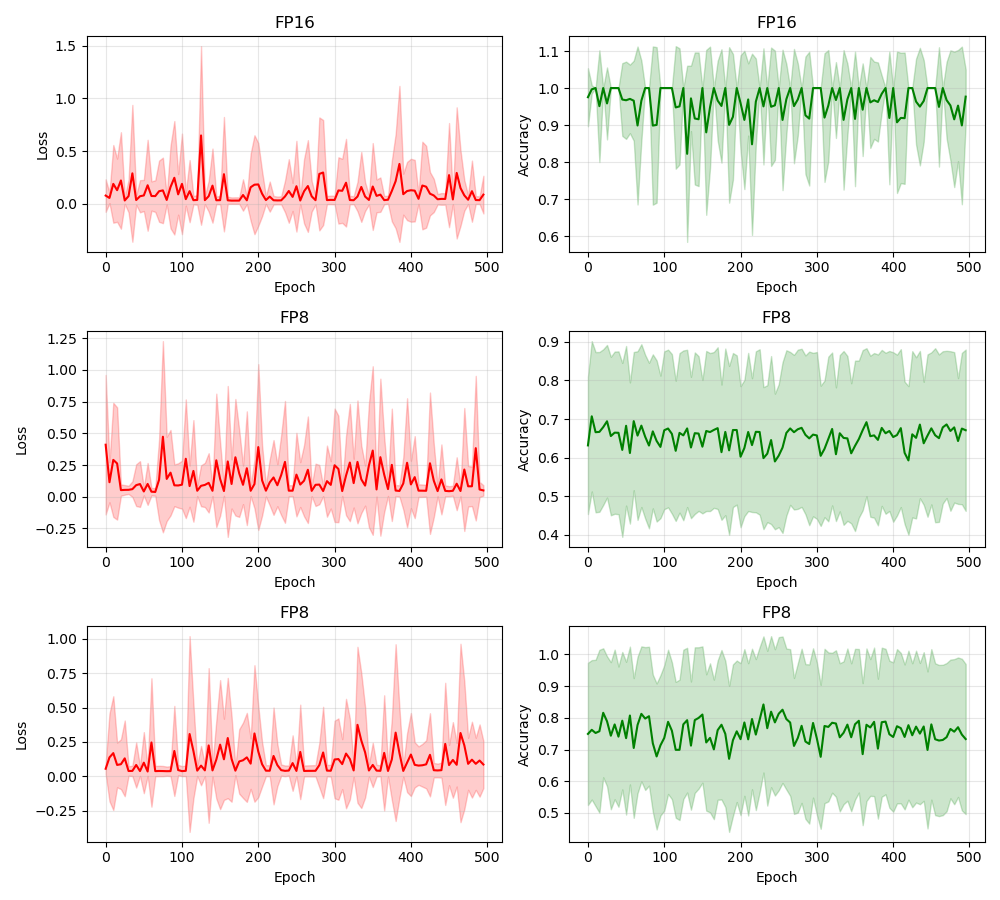}
    \caption{QAT curves for $m=100$ using floating-point precision}
    \label{QAT5}
\end{figure}
\begin{figure}[H]
    \centering
    \includegraphics[width=\linewidth]{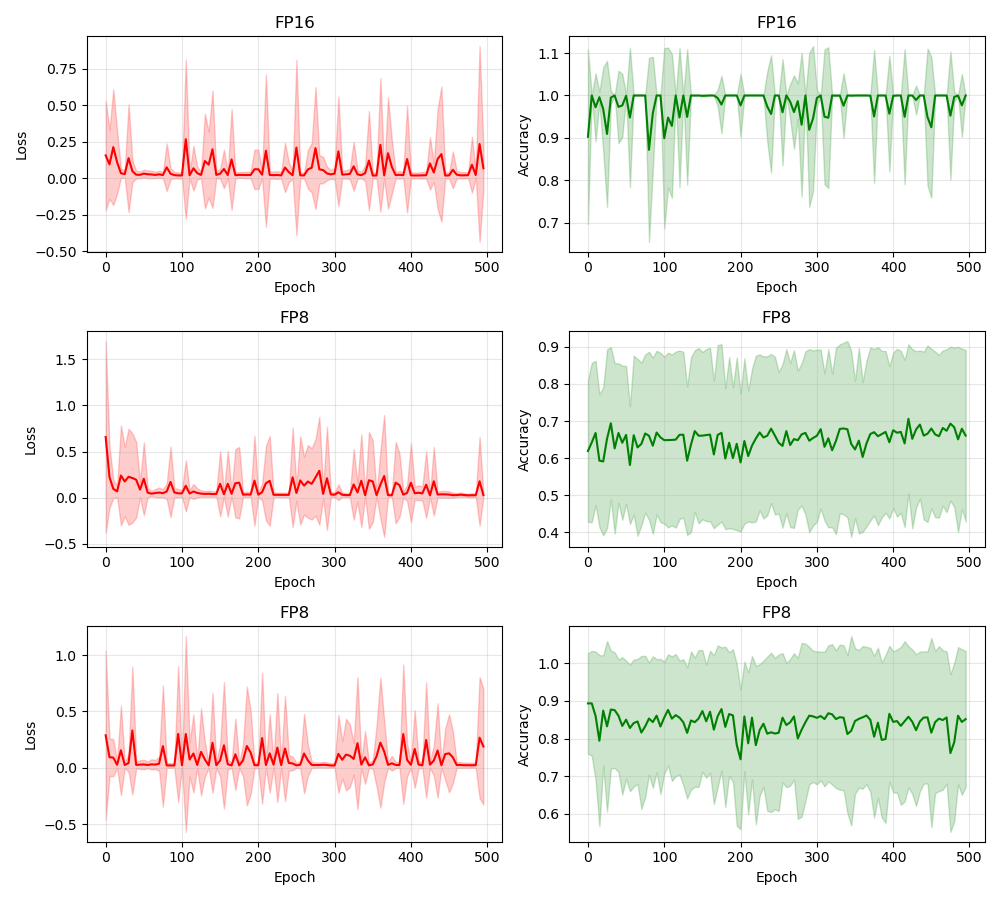}
    \caption{QAT curves for $m=50$ using floating-point precision}
    \label{QAT6}
\end{figure}
\begin{figure}[H]
    \centering
    \includegraphics[width=\linewidth]{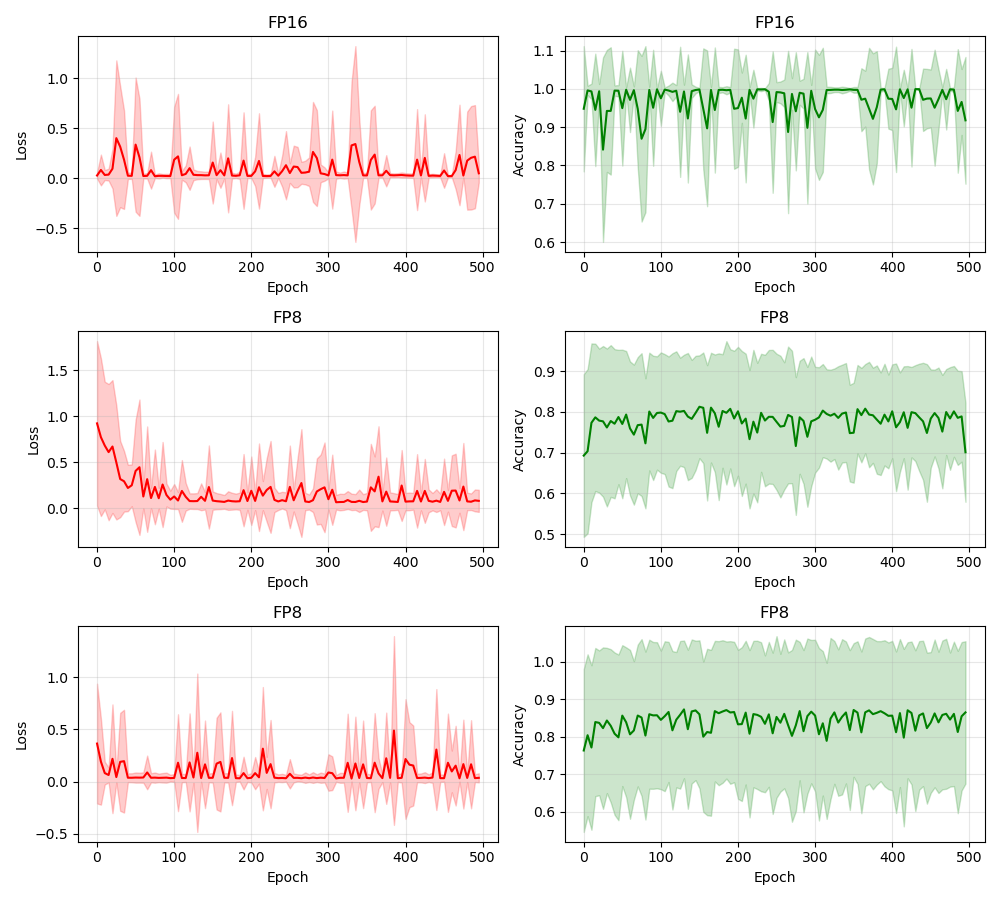}
    \caption{QAT curves for $m=30$ using floating-point precision}
    \label{QAT7}
\end{figure}
\begin{figure}[H]
    \centering
    \includegraphics[width=\linewidth]{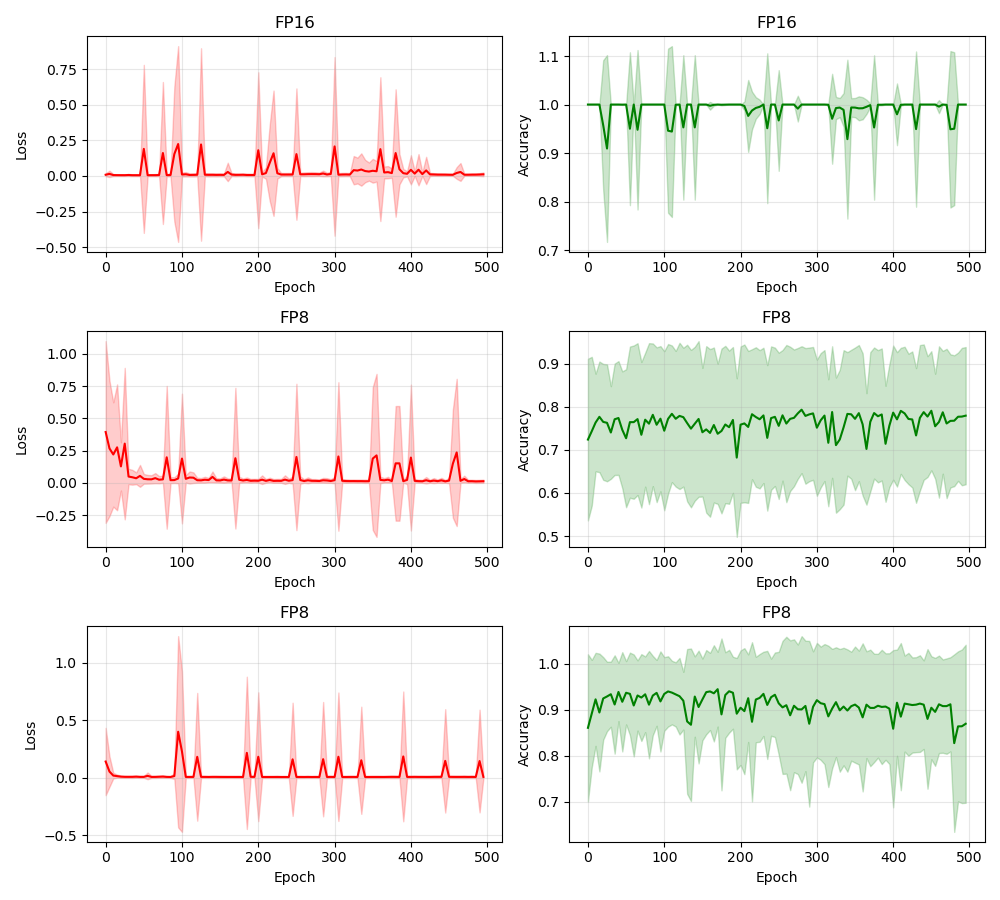}
    \caption{QAT curves for $m=15$ using floating-point precision}
    \label{QAT8}
\end{figure}

\end{document}